\documentclass{article}

\usepackage[preprint]{neurips_2026}

\usepackage[utf8]{inputenc} 
\usepackage[T1]{fontenc}    
\usepackage{hyperref}       
\usepackage{url}            
\usepackage{booktabs}       
\usepackage{amsfonts}       
\usepackage{nicefrac}       
\usepackage{microtype}      
\usepackage{xcolor}         
\usepackage{amsmath} 
\usepackage{graphicx}
\usepackage{algorithm}
\usepackage{algorithmic}
\usepackage{amsthm}
\usepackage{comment}
\usepackage{braket}

\newtheorem{proposition}{Proposition}
\newtheorem{remark}{Remark}

\newtheorem{theorem}{Theorem}
\newtheorem{lemma}{Lemma}

\title{Intrinsic Flow Matching on Quantum Pure-State Manifolds with Phase-Aligned Transport}

\author{%
  Jian Xu \\
  RIKEN iTHEMS \\
  RIKEN AIP \\
  \texttt{jian.xu@riken.jp} \\
  \And
  Delu Zeng \\
  South China University of Technology \\
  \And
  John Paisley \\
  Columbia University \\
  \And
  Qibin Zhao \\
  RIKEN AIP \\
}

\begin{document}

\maketitle

\begin{abstract}

Quantum pure-state ensembles live on complex projective space, making flat Euclidean generative modeling geometrically mismatched. We introduce \emph{Intrinsic Flow Matching} (IFM), a deterministic transport framework on \(\mathbb{CP}^{d-1}\) that learns tangent velocity fields using Pancharatnam phase-aligned conditional paths. IFM replaces local score teachers and reverse-time stochastic sampling with manifold probability flow, while horizontal parameterization removes redundant ambient directions. We show that the IFM objective recovers the induced marginal transport field, represents deterministic projective ensemble flows, and yields endpoint and stability guarantees. Empirically, IFM often improves over ambient Euclidean flow matching across higher-qubit, multimodal, spin-coherent, physics-inspired, and amplitude-encoded MNIST image-vector benchmarks, with strongest gains on high-dimensional and coherence-sensitive tasks but not uniformly across every metric.
\end{abstract}

\section{Introduction}

Generative modeling has become a powerful framework for high-dimensional data \citep{ho2020denoising,song2021scorebased,lipman2023flowmatchinggenerativemodeling}, and distributions over quantum pure states arise naturally as mathematical objects in many-body physics, quantum information, and quantum machine learning \citep{bengtsson2017geometry,biamonte2017qml,preskill2018nisq,schuld2019feature}.
Unlike Euclidean data, pure states are defined only up to global phase and live on the projective manifold $\mathbb{CP}^{d-1}$ \citep{mielnik1968geometry,bengtsson2017geometry,ogiue1969complex,wu2022metric}; ambient generative models therefore ignore gauge redundancy and may learn transport in directions with no physical meaning.

Recent intrinsic diffusion approaches, especially stochastic Schr\"odinger diffusion models (SSDM) \citep{Gisin1992TheQD,bouten2004stochastic,xu2026ssdm}, show that quantum-state generation can be posed directly on the pure-state manifold.
However, they still inherit diffusion machinery such as local score estimation, teacher approximations, and reverse-time stochastic simulation, which can become difficult as the Hilbert-space dimension grows.

We argue that quantum pure-state generation is better treated as intrinsic probability transport on $\mathbb{CP}^{d-1}$.
This leads to \emph{Intrinsic Flow Matching} (IFM), a flow-based framework that learns tangent velocity fields through a geometry-aware flow matching objective \citep{lipman2023flowmatchinggenerativemodeling,lipman2022flow,chen2023flow,tong2023conditional}.
IFM replaces local diffusion teachers and reverse stochastic sampling with direct velocity matching and deterministic manifold sampling, while its phase-aligned paths use the Pancharatnam in-phase convention \citep{pancharatnam1956generalized,garza2023deciphering,ferrer2023topological} to remove unphysical gauge rotation.
The resulting dynamics admit a probability-current interpretation on quantum state space \citep{delgado1999quantum,hodge2014electron,wyatt2005quantum,tsekov2009bohmian}.

Beyond the change of generative paradigm, our goal is to make the transport formulation mathematically explicit.
We show that the IFM population objective recovers the marginal transport field induced by a chosen path family, and that the corresponding marginal density evolves under the intrinsic continuity equation on $\mathbb{CP}^{d-1}$.
We further prove that the conditional IFM objective differs from its marginal regression counterpart only by an additive constant, paralleling the clean conditional-to-marginal reduction that underlies modern flow-matching theory.
We also show that deterministic tangent flows on projective state space induce exactly the type of ensemble dynamics modeled by IFM, giving a precise sense in which the framework is compatible with physics-inspired transport viewpoints based on probability currents and continuity laws.
Finally, under deterministic Monge couplings and Fubini--Study geodesic interpolation, IFM exactly recovers the displacement-interpolation velocity field, connecting our framework to geodesic and optimal-transport constructions on quantum state space \citep{bengtsson2017geometry,tong2023improving,chewi2021fast}.

A central question is whether the benefit comes merely from replacing diffusion with flow matching, or whether \emph{intrinsic geometry itself} matters.
To answer this, we distinguish between two possibilities:
performing flow matching intrinsically on $\mathbb{CP} ^{d-1}$, versus performing flow matching in an ambient Euclidean representation and projecting back to valid quantum states.
This comparison allows us to isolate the role of manifold geometry from the choice of generative paradigm.
Our experiments show that the advantage of IFM is not solely due to switching from diffusion to flow, but also due to respecting the intrinsic geometry of pure-state space.
This effect is especially clear on higher-qubit single-cluster scaling, controlled coherence-sensitive multimodal and spin-coherent benchmarks, and an amplitude-encoded MNIST image-vector benchmark.

Our contributions are summarized as follows:
\begin{itemize}
    \item We reformulate quantum pure-state generation as intrinsic probability transport on $\mathbb{CP}^{d-1}$ and propose \emph{Intrinsic Flow Matching} (IFM), which learns tangent velocity fields using Pancharatnam-aligned conditional paths.
    \item We prove that Pancharatnam alignment yields a canonical endpoint representative, that IFM recovers the induced marginal transport field, and that horizontal intrinsic parameterization removes redundant ambient directions.
    \item We empirically compare IFM with Euclidean flow matching and intrinsic diffusion baselines across controlled single-cluster, multimodal, physics-inspired, and MNIST-derived amplitude-vector benchmarks, including dense $10$- and selected $12$-qubit tests plus an appendix $14$-qubit stress test.
\end{itemize}
\section{Related Work}

\paragraph{Intrinsic diffusion for quantum pure-state ensembles.}
Diffusion and score-based models are central in modern generative modeling \citep{ho2020denoising,song2021scorebased,song2022denoisingdiffusionimplicitmodels}, including recent quantum variants \citep{zhang2024generative,chen2024quantum,kolle2024quantum,kwun2025mixed}.
The closest precursor is SSDM, which formulates score-based generation intrinsically on the quantum pure-state manifold using stochastic Schr\"odinger-type dynamics, Fubini--Study geometry, local teacher constructions, and reverse-time sampling \citep{Gisin1992TheQD,bouten2004stochastic,xu2026ssdm}.
IFM keeps the same projective state space but replaces score learning and stochastic reversal with deterministic velocity matching.

\paragraph{Flow matching on manifolds and transport geometries.}
Flow matching replaces score estimation and reverse stochastic simulation with direct regression to path-induced velocity fields \citep{lipman2023flowmatchinggenerativemodeling,lipman2022flow,tong2023conditional}.
Rectified, optimal, and Riemannian variants further emphasize the roles of path geometry, transport couplings, and manifold structure \citep{liu2022flow,liu2022rectified,lee2024improving,mathieu2020riemannian,huang2022riemannian,bortoli2022riemannian,lou2023scaling,tang2024adaptivity,falorsi2021continuous,miller2024flowmm,klein2023equivariant,gat2024discrete,chen2023flow}.
IFM specializes this logic to quantum pure states by using Pancharatnam-aligned endpoints, horizontal tangent fields, and projective transport on $\mathbb{CP}^{d-1}$.

\paragraph{Alternative quantum representations for flow-based generation.}
\emph{Quantum Flow Matching} (QFM) studies circuit-based density-matrix interpolation \citep{cui2025quantum}, while related flow generators operate on spin Wigner functions, functions, or classical-shadow representations \citep{hahm2024generative,kerrigan2023functional,hahmshadowfm}.
These works are conceptually complementary but use different state representations: IFM transports pure-state ensembles directly on $\mathbb{CP}^{d-1}$.
Appendix~\ref{app:additional_related_work} extends this discussion to optimal transport on structured spaces, geometric phase and Pancharatnam alignment, Riemannian flow matching on general geometries, and physics-inspired transport viewpoints.

\section{Preliminaries}

\paragraph{Quantum pure-state distributions on $\mathbb{CP}^{d-1}$.}
An $n$-qubit pure state is a unit vector $\psi \in \mathbb{C}^{d}$, $d=2^n$, modulo global phase, so the state space is \citep{mielnik1968geometry,bengtsson2017geometry,ogiue1969complex,wu2022metric}
\begin{equation}
    \mathbb{CP} ^{d-1} = \{ \psi \in \mathbb{C}^d : \|\psi\|_2 = 1 \} / U(1).
\end{equation}
We seek a generative map from a base density $p_0$ to a target ensemble $p_1$ on this projective manifold, so both density evolution and velocity fields should be intrinsic rather than tied to arbitrary ambient representatives.

\paragraph{Stochastic diffusion versus deterministic transport.}
Intrinsic diffusion models on $\mathbb{CP}^{d-1}$ lead to score-based objectives and reverse-time stochastic simulation \citep{xu2026ssdm}.
We instead model a density $p_t$ moving under a tangent velocity field $v_t(\psi)\in T_\psi\mathbb{CP}^{d-1}$, governed by the manifold continuity equation \citep{delgado1999quantum,wyatt2005quantum,chewi2021fast}
\begin{equation}
    \partial_t p_t + \mathrm{div}_{\mathbb{CP} }(p_t v_t) = 0,
\end{equation}
where $\mathrm{div}_{\mathbb{CP}}$ is the intrinsic divergence.
Generation is then realized by learning a velocity field that transports $p_0$ toward $p_1$ along manifold-consistent trajectories.

\paragraph{Flow matching.}
Flow matching learns a time-dependent vector field by regressing to conditional velocities induced by a prescribed path between base and data samples \citep{lipman2023flowmatchinggenerativemodeling,lipman2022flow,liu2022flow,tong2023conditional}.
Sampling then integrates the learned ordinary differential equation
\begin{equation}
    \frac{d x_t}{dt} = v_\theta(x_t,t).
\end{equation}
We lift this principle from Euclidean space to quantum pure-state manifolds, following recent manifold and Riemannian flow-matching developments \citep{chen2023flow,miller2024flowmm,klein2023equivariant}.

\section{Intrinsic Flow Matching on Quantum Pure-State Manifolds}

We introduce \emph{Intrinsic Flow Matching} (IFM), which models quantum pure-state ensembles as deterministic probability transport on $\mathbb{CP}^{d-1}$ \citep{mielnik1968geometry,bengtsson2017geometry,lipman2023flowmatchinggenerativemodeling,chen2023flow}.
Rather than using stochastic diffusion and reverse-time score estimation, IFM learns a time-dependent tangent velocity field that transports a base density $p_0$ to a target ensemble density $p_1$ along geometry-consistent projective trajectories.

\begin{figure*}[t]
    \centering
    \includegraphics[width=0.96\textwidth]{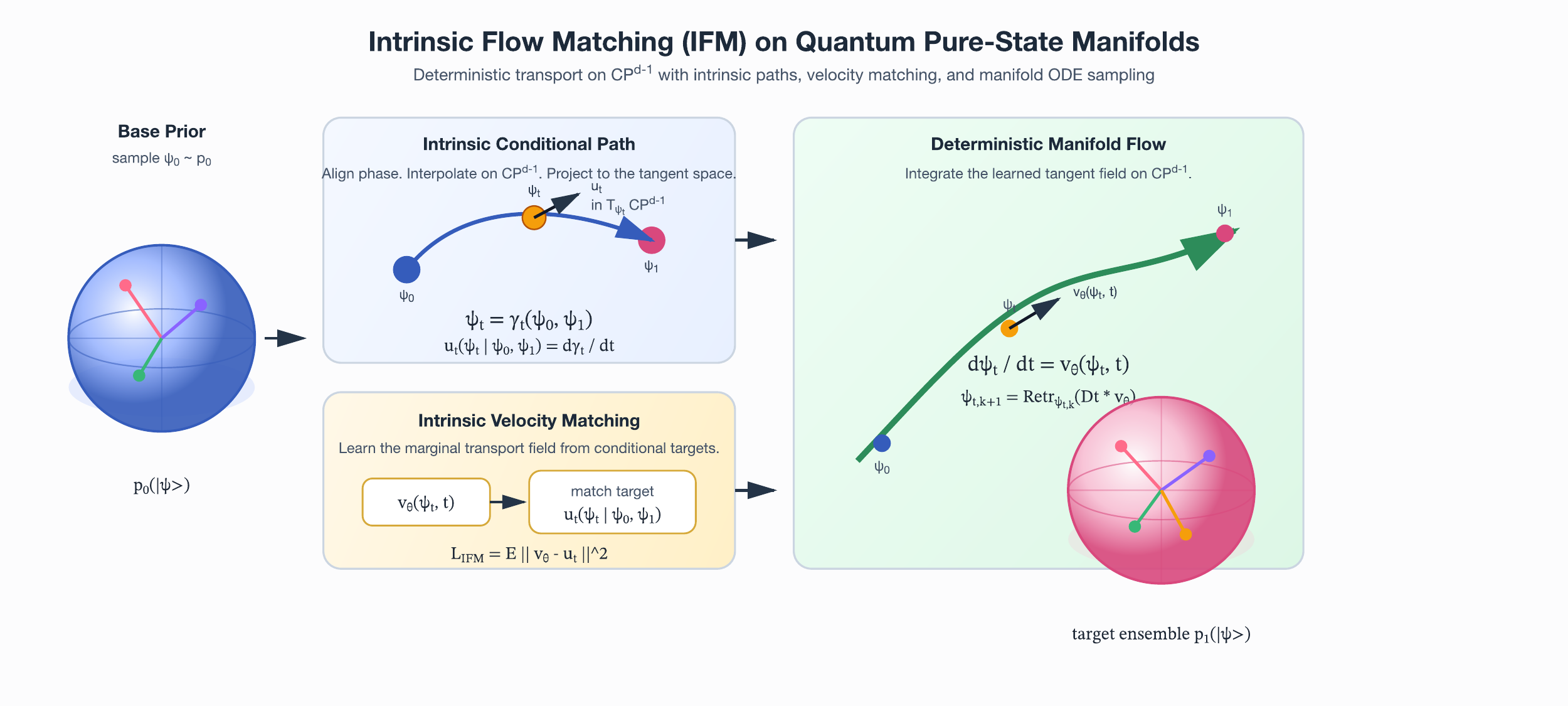}
    \caption{IFM on $\mathbb{CP}^{d-1}$: phase-align endpoints, regress tangent velocities, and sample by deterministic manifold flow.}
    \label{fig:ifm_overview}
\end{figure*}

\subsection{Intrinsic transport formulation}

Let $\psi_t \in \mathbb{CP} ^{d-1}$ denote a pure-state trajectory and let
\begin{equation}
    v_t(\psi) \in T_\psi \mathbb{CP} ^{d-1}
\end{equation}
be a tangent velocity field.
The induced density evolution is governed by the manifold continuity equation \citep{chewi2021fast,delgado1999quantum,wyatt2005quantum}
\begin{equation}
    \partial_t p_t + \mathrm{div}_{\mathbb{CP} }(p_t v_t) = 0.
\end{equation}
We learn a neural approximation $v_\theta(\psi,t)$ so that integrating the flow from $p_0$ produces samples from $p_1$.
Using unit-sphere representatives and quotienting out global phase, we identify the projective tangent space with the horizontal subspace
\begin{equation}
    \mathcal{H}_\psi
    :=
    \left\{
        \xi \in \mathbb{C}^d : \langle \psi, \xi \rangle = 0
    \right\}.
\end{equation}
This removes both radial motion and unphysical phase rotation. Under the Fubini--Study metric, the horizontal-coordinate inner product is \citep{mielnik1968geometry,bengtsson2017geometry}
\begin{equation}
    g_{\mathrm{FS},\psi}(\xi,\eta)
    :=
    \mathrm{Re}\,\langle \xi,\eta\rangle,
    \qquad
    \|\xi\|_{\psi}^2 := g_{\mathrm{FS},\psi}(\xi,\xi).
\end{equation}
All IFM vector fields, losses, and numerical updates are defined in this tangent-space representation.
Compared with diffusion-based quantum generative models \citep{ho2020denoising,song2021scorebased,xu2026ssdm}, this replaces score learning, local diffusion teachers, and reverse-time stochastic simulation with direct velocity matching and deterministic manifold flow integration.

\subsection{Conditional manifold paths}
\label{sec:conditional-paths}

Flow matching requires conditional paths between base and target samples \citep{lipman2023flowmatchinggenerativemodeling,lipman2022flow,tong2023conditional}.
For $\psi_0\sim p_0$ and $\psi_1\sim p_1$, we define a path on projective pure states,
\begin{equation}
    \psi_t = \gamma_t(\psi_0,\psi_1), \qquad t \in [0,1],
\end{equation}
with endpoints $\gamma_0=\psi_0$ and $\gamma_1=\psi_1$, and regress to its tangent velocity $u_t=\frac{d}{dt}\gamma_t$.
Because $[\psi]=[e^{i\phi}\psi]$, the endpoint representatives must be gauge-fixed before interpolation.
We use the Pancharatnam in-phase convention, choosing the representative of $\psi_1$ whose overlap with $\psi_0$ is real and nonnegative \citep{pancharatnam1956generalized}.

For pairs with $\langle \psi_0,\psi_1\rangle \neq 0$, define
\begin{equation}
    \alpha(\psi_0,\psi_1)
    :=
    \frac{\langle \psi_0,\psi_1\rangle}{|\langle \psi_0,\psi_1\rangle|},
    \qquad
    \bar{\psi}_1 := \alpha(\psi_0,\psi_1)^{-1}\psi_1.
\end{equation}
Proposition~\ref{prop:pancharatnam_alignment} (Appendix~\ref{app:phase_alignment}) proves that this is the Pancharatnam-aligned representative: it maximizes the real endpoint overlap (Lemma~\ref{lem:phase_alignment_real_overlap}) and minimizes the ambient chord length (Lemma~\ref{lem:phase_alignment_shortest_chord}) among phase-equivalent representatives.

When $\langle \psi_0,\psi_1\rangle = 0$, any unit-modulus phase gives an equivalent representative; in practice we set $\bar{\psi}_1=\psi_1$.

We then use normalized chord interpolation,
\begin{equation}
    \tilde{\psi}_t := (1-t)\psi_0 + t\bar{\psi}_1,
\end{equation}
\begin{equation}
    \psi_t = \gamma_t(\psi_0,\psi_1)
    :=
    \frac{\tilde{\psi}_t}{\|\tilde{\psi}_t\|_2}.
\end{equation}
This gives a closed-form, inexpensive projective path that avoids spurious phase motion.
Appendix~\ref{app:phase_alignment} develops this Pancharatnam-alignment viewpoint in more detail, and Appendix~\ref{app:interpolation_details} states the full interpolation rules used in our experiments, including the geodesic IFM variant and the generic manifold-geodesic baseline.

Differentiating the normalized path yields an explicit tangent velocity.
Let
\begin{equation}
    \dot{\tilde{\psi}}_t = \bar{\psi}_1 - \psi_0.
\end{equation}
Using $\psi_t = \tilde{\psi}_t / \|\tilde{\psi}_t\|_2$, we obtain
\begin{equation}
    u_t(\psi_t \mid \psi_0,\psi_1)
    :=
    \dot{\psi}_t
    =
    \frac{1}{\|\tilde{\psi}_t\|_2}
    \Pi_{\psi_t}\!\left(\bar{\psi}_1 - \psi_0\right),
\end{equation}
where $\Pi_{\psi_t}$ denotes the orthogonal projection onto the horizontal tangent space at $\psi_t$.
Hence $u_t(\psi_t \mid \psi_0,\psi_1) \in \mathcal{H}_{\psi_t}$ by construction.
Thus IFM trains a neural tangent field against a phase-aligned, horizontally projected conditional transport direction.
Proposition~\ref{prop:path_welldefined} (Appendix~\ref{app:proof_path_welldefined}) shows that the resulting path is well-defined on projective classes and that its velocity always lies in the horizontal tangent bundle.

\subsection{Intrinsic flow matching objective}

Let $v_\theta(\psi,t)\in T_\psi\mathbb{CP}^{d-1}$ be a tangent neural velocity field, implemented by projecting the network output onto the tangent space.
For $(\psi_0,\psi_1)$ and $t\sim\mathrm{Unif}[0,1]$, set $\psi_t=\gamma_t(\psi_0,\psi_1)$ and minimize
\begin{equation}
    \mathcal{L}_{\mathrm{IFM}}(\theta)
    =
    \mathbb{E}_{\psi_0 \sim p_0,\;\psi_1 \sim p_1,\;t \sim \mathrm{Unif}[0,1]}
    \left[
        \left\|
            v_\theta(\psi_t,t) - u_t(\psi_t \mid \psi_0,\psi_1)
        \right\|^2_{\psi_t}
    \right],
\end{equation}
where $\|\cdot\|_{\psi_t}$ denotes the norm induced by the manifold metric at $\psi_t$.

This is Euclidean flow matching lifted to $\mathbb{CP}^{d-1}$ \citep{lipman2023flowmatchinggenerativemodeling,tong2023conditional}: both the target and learned velocities are tangent, so the learned transport is intrinsic rather than ambient.

\subsection{Population characterization of the IFM objective}
\label{sec:population-characterization}

We now characterize the vector field recovered by learning the phase-aligned target in expectation.
The results identify the population minimizer, connect it to marginal continuity dynamics and endpoint transport, and give a Wasserstein stability bound for approximate minimizers.
Let $(\psi_0,\psi_1)$ be sampled from a joint distribution $\pi$ whose marginals are $p_0$ and $p_1$, and define the intermediate random variable
\begin{equation}
    \psi_t = \gamma_t(\psi_0,\psi_1), \qquad
    p_t := \text{distribution of } \psi_t.
\end{equation}
The conditional velocity $u_t(\psi_t \mid \psi_0,\psi_1)$ induces a probability current along this path family, paralleling current-based descriptions of quantum density evolution \citep{delgado1999quantum,hodge2014electron,wyatt2005quantum}.
The key question is whether minimizing $\mathcal{L}_{\mathrm{IFM}}$ recovers the correct marginal transport field on $\mathbb{CP}^{d-1}$.

\begin{theorem}[Population optimality of intrinsic flow matching]
\label{thm:ifm_population_opt}
Fix $t \in (0,1)$ and consider the population objective over measurable tangent vector fields $v(\cdot,t): \mathbb{CP}^{d-1} \to T\mathbb{CP}^{d-1}$,
\begin{equation}
    \mathcal{J}_t[v]
    :=
    \mathbb{E}\left[
        \|v(\psi_t,t) - u_t(\psi_t \mid \psi_0,\psi_1)\|_{\psi_t}^2
    \right].
\end{equation}
Then any minimizer $v_t^\star$ of $\mathcal{J}_t$ satisfies, for $p_t$-almost every $\psi$,
\begin{equation}
    v_t^\star(\psi)
    =
    \mathbb{E}\big[
        u_t(\psi_t \mid \psi_0,\psi_1)\,\big|\, \psi_t=\psi
    \big].
    \label{eq:ifm_conditional_expectation}
\end{equation}
In particular, the population IFM objective learns the conditional-expectation velocity field associated with the chosen manifold path family.
\end{theorem}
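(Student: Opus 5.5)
The plan is to prove the theorem as an $L^2$ projection (bias--variance) identity, carried out fiberwise over $\mathbb{CP}^{d-1}$ so that the varying tangent spaces never have to be compared.

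\emph{Representing the tangent bundle.} Fix $t\in(0,1)$ and regard $\mathcal{H}_\psi\subset\mathbb{C}^d\cong\mathbb{R}^{2d}$ as a real subspace, with inner product $g_{\mathrm{FS},\psi}=\mathrm{Re}\langle\cdot,\cdot\rangle$. This is the Euclidean real inner product restricted to $\mathcal{H}_\psi$, so each fiber is a finite-dimensional real Hilbert space. By Proposition~\ref{prop:path_welldefined}, $u_t(\psi_t\mid\psi_0,\psi_1)\in\mathcal{H}_{\psi_t}$ and it is well defined on projective classes. I will also assume the mild integrability condition $\mathbb{E}\|u_t\|^2_{\psi_t}<\infty$. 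It holds because $\|\tilde\psi_t\|_2$ is bounded below on $t\in(0,1)$: by Lemma~\ref{lem:phase_alignment_real_overlap} the overlap $\langle\psi_0,\bar\psi_1\rangle$ is real and nonnegative, which gives $\|\tilde\psi_t\|_2^2\ge (1-t)^2+t^2>0$. The measurability of $u_t$, viewed as an $\mathbb{R}^{2d}$-valued function, is also needed; the one subtle point is the phase choice on the null set $\langle\psi_0,\psi_1\rangle=0$, which I would handle using the fixed convention $\bar\psi_1=\psi_1$.

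\emph{Defining the candidate field.} Define $m(\psi):=\mathbb{E}[u_t\mid\psi_t=\psi]$ as an $\mathbb{R}^{2d}$-valued regular conditional expectation, which exists because $\mathbb{CP}^{d-1}$ is Polish. The first real step is to show that $m(\psi)\in\mathcal{H}_\psi$ for $p_t$-a.e.\ $\psi$. This holds because the constraint $\langle\psi,\xi\rangle=0$ is linear in $\xi$ for fixed $\psi$, and conditioning on $\psi_t=\psi$ fixes $\psi$. Concretely, $\langle\psi_t,m(\psi_t)\rangle=\mathbb{E}[\langle\psi_t,u_t\rangle\mid\psi_t]=0$ a.s., using the pull-out property. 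The representative of $\psi_t$ is fixed by the path construction, so this makes sense. Hence $m$ is an admissible measurable tangent field.

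\emph{Pythagorean decomposition.} For any admissible $v$, expand
\[
\|v-u_t\|^2_{\psi_t}=\|v-m\|^2_{\psi_t}+\|m-u_t\|^2_{\psi_t}+2\,\mathrm{Re}\langle v(\psi_t)-m(\psi_t),\,m(\psi_t)-u_t\rangle .
\]
The cross term has zero expectation by the tower property. Conditioned on $\psi_t$, the factor $v(\psi_t)-m(\psi_t)$ is measurable, and $\mathbb{E}[m-u_t\mid\psi_t]=0$. This gives
\[
\mathcal{J}_t[v]=\mathbb{E}\|v(\psi_t)-m(\psi_t)\|^2_{\psi_t}+\mathcal{J}_t[m].
\]
This step needs $\mathcal{J}_t[v]<\infty$; if it is infinite, $v$ is not a minimizer, because $\mathcal{J}_t[m]<\infty$. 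The decomposition shows that $m$ attains the infimum. Any minimizer $v^\star_t$ must then satisfy $\mathbb{E}\|v^\star_t-m\|^2_{\psi_t}=0$, that is, $v_t^\star=m$ $p_t$-a.e., which is \eqref{eq:ifm_conditional_expectation}.

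\emph{Main obstacle.} The difficulty is not the variational argument, which is standard. It is making the conditional expectation of a vector living in a point-dependent tangent space rigorous. The ambient embedding $\mathcal{H}_\psi\subset\mathbb{R}^{2d}$ resolves this: the FS norm is just the restricted Euclidean norm, and tangency is preserved under conditioning as shown above. A related subtlety is that the representative of $\psi_t$ on the unit sphere is determined by the path. Consequently the phase-gauge ambiguity does not enter, again by Proposition~\ref{prop:path_welldefined}.
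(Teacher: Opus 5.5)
Your variational core is the paper's own argument. The paper conditions on $\psi_t$, applies $\|a-b\|^2=\|a-c\|^2+\|b-c\|^2-2\langle a-c,b-c\rangle$ with $c=m(\psi)$ in the fiber, and minimizes pointwise. Your integrated Pythagorean identity is the same computation; it is exactly the paper's proof of Proposition~\ref{prop:ifm_loss_decomposition}. Your bound $\|\tilde\psi_t\|_2^2\ge(1-t)^2+t^2$, which gives $\|u_t\|_2\le 2\sqrt2$, and your finiteness check are correct additions that the paper omits.

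The gap is in the step you single out as the main obstacle. The sphere representative of $\psi_t$ is fixed \emph{per sample}: it is the one in phase with $\psi_0$, since $\langle\psi_0,\tilde\psi_t\rangle=(1-t)+t|\langle\psi_0,\psi_1\rangle|>0$. It is not a function of the projective point $[\psi_t]$. For example, take $d=2$, $t=3/4$ and $\psi=(1,1)/\sqrt2$. Both $\psi_0=(1,0)$ and $\psi_0'=(1,i)/\sqrt2$ admit aligned endpoints whose chord paths pass through $[\psi]$ at time $t$, with representatives $\psi$ and $e^{i\pi/4}\psi$ respectively.

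This matters because, although $\mathcal{H}_{e^{i\theta}\psi}=\mathcal{H}_\psi$, the horizontal lift of one tangent vector is $\xi$ at $\psi$ but $e^{i\theta}\xi$ at $e^{i\theta}\psi$. Two consequences follow:
\begin{itemize}
\item Your $\mathbb{R}^{2d}$-valued $\mathbb{E}[u_t\mid[\psi_t]]$ averages lifts attached to different representatives. It is therefore not the conditional expectation in $T_{[\psi]}\mathbb{CP}^{d-1}$.
\item For a genuine field $v$ on $\mathbb{CP}^{d-1}$, the ambient vector $v(\psi_t)$ at the sample's representative is not $\sigma([\psi_t])$-measurable. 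The pull-out that kills your cross term is then unjustified.
\end{itemize}
Tangency of $m$ does survive, since $\mathcal{H}_\psi$ is phase invariant, but that is not the issue.

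Conditioning on the sphere point instead makes every step legitimate, but it creates a different problem. The resulting $m$ is a function on the sphere that need not be phase-equivariant; for instance, the local Gaussian $p_0$ is not phase invariant. So $m$ need not be an admissible field on $\mathbb{CP}^{d-1}$, and the argument no longer identifies the minimizer over the class in the theorem.

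The repair is short; either of the following works.
\begin{itemize}
\item Use a representative-independent embedding such as $\xi\mapsto\xi\psi^*+\psi\xi^*$, the differential of $\psi\mapsto\psi\psi^*$. It is unchanged under $(\psi,\xi)\mapsto(e^{i\theta}\psi,e^{i\theta}\xi)$. For horizontal $\xi,\eta$ it satisfies $\mathrm{tr}\big[(\xi\psi^*+\psi\xi^*)(\eta\psi^*+\psi\eta^*)\big]=2\,\mathrm{Re}\langle\xi,\eta\rangle$.
\item Fix a measurable section $s$ and write $\psi_t=\lambda\,s([\psi_t])$. Before averaging, multiply each $u_t$ by $\bar\lambda$.
\end{itemize}
In either representation everything is $\sigma([\psi_t])$-measurable. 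Your Pythagorean argument then goes through verbatim and becomes a rigorous version of the paper's fiberwise proof.
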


\noindent\textit{Proof.} See Appendix~\ref{app:proof_population_opt}.

\begin{remark}
Theorem~\ref{thm:ifm_population_opt} plays for IFM the same role that score-recovery results play in diffusion models \citep{song2021scorebased,ho2020denoising}: it identifies the exact population object learned by the training objective.
\end{remark}

\begin{proposition}[Conditional-to-marginal IFM objective decomposition]
\label{prop:ifm_loss_decomposition}
Let
\begin{equation}
    \bar{u}_t(\psi)
    :=
    \mathbb{E}\big[
        u_t(\psi_t \mid \psi_0,\psi_1)\,\big|\, \psi_t=\psi
    \big]
\end{equation}
denote the marginal velocity field induced by the conditional path family.
Then for every measurable tangent vector field $v(\cdot,t)$,
\begin{equation}
    \mathcal{J}_t[v]
    =
    \mathbb{E}\big[
        \|v(\psi_t,t)-\bar{u}_t(\psi_t)\|_{\psi_t}^2
    \big]
    +
    \mathbb{E}\big[
        \|u_t(\psi_t \mid \psi_0,\psi_1)-\bar{u}_t(\psi_t)\|_{\psi_t}^2
    \big].
\end{equation}
In particular, the conditional IFM objective and the marginal regression objective differ by an additive constant independent of $v$, and therefore have the same minimizers.
\end{proposition}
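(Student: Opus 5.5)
The plan is the standard bias--variance (Pythagorean) decomposition, carried out fiberwise in the horizontal tangent spaces. Write $u := u_t(\psi_t\mid\psi_0,\psi_1)$, $\bar u := \bar u_t(\psi_t)$ and $v := v(\psi_t,t)$. All three lie in $\mathcal{H}_{\psi_t}$: $u$ by the construction in Section~\ref{sec:conditional-paths} (Proposition~\ref{prop:path_welldefined}), $v$ by hypothesis, and $\bar u$ because $\mathcal{H}_\psi$ is a closed real-linear subspace of $\mathbb{C}^d$, so a conditional expectation given $\psi_t=\psi$ of vectors in $\mathcal{H}_\psi$ stays in $\mathcal{H}_\psi$. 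Because $g_{\mathrm{FS},\psi}(\xi,\eta)=\mathrm{Re}\langle\xi,\eta\rangle$ is a genuine real inner product on each fiber, I will expand
\begin{equation*}
\|v-u\|_{\psi_t}^2=\|v-\bar u\|_{\psi_t}^2+\|u-\bar u\|_{\psi_t}^2+2\,\mathrm{Re}\langle v-\bar u,\;\bar u-u\rangle .
\end{equation*}
I will then take expectations and show that the cross term vanishes.

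To kill the cross term I will condition on $\psi_t$ using the tower property. The factor $v(\psi_t,t)-\bar u_t(\psi_t)$ is $\sigma(\psi_t)$-measurable, so it comes out of the conditional expectation. Since $\mathrm{Re}\langle a,\cdot\rangle$ is real-linear, this gives
\begin{equation*}
\mathbb{E}\big[\mathrm{Re}\langle v-\bar u,\bar u-u\rangle\mid\psi_t\big]=\mathrm{Re}\big\langle v-\bar u,\;\bar u-\mathbb{E}[u\mid\psi_t]\big\rangle=0 ,
\end{equation*}
by the definition of $\bar u_t$. Integrating over $p_t$ yields the stated identity. The second term does not involve $v$, so the two objectives differ by a constant. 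Equal minimizers then follow, and comparing with Theorem~\ref{thm:ifm_population_opt} shows the minimizer is $\bar u_t$ $p_t$-a.e.

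The main obstacle is integrability and measurability rather than algebra. First, $u$ must be square-integrable. On the event $\langle\psi_0,\psi_1\rangle\ne0$, alignment gives $\|\tilde\psi_t\|_2^2=(1-t)^2+t^2+2t(1-t)|\langle\psi_0,\psi_1\rangle|\ge(1-t)^2+t^2\ge 1/2$. With $\|\bar\psi_1-\psi_0\|\le2$, this gives $\|u\|\le 2\sqrt2$. The orthogonal convention $\bar\psi_1=\psi_1$ gives the same bound. So $u$ is bounded for fixed $t$, $\bar u$ exists, and $\bar u\in L^2$ by Jensen.

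Second, the cross term must be integrable. If $\mathbb{E}\|v-\bar u\|^2=\infty$, both sides of the identity are $+\infty$, since $\|v-u\|\ge\|v-\bar u\|-\|u-\bar u\|$ with the last term bounded. In the finite case, Cauchy--Schwarz makes the cross term integrable, so the conditioning step is justified.

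Third, I will fix a measurable version of $\bar u_t$ via regular conditional distributions on the Polish space $\mathbb{CP}^{d-1}$. I will also note that taking the conditional expectation of a vector in the $\psi_t$-dependent fiber is legitimate, because we work in the ambient $\mathbb{C}^d$ representation with unit-sphere representatives.
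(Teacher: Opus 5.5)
Your proposal is correct and follows the paper's own proof: the same expansion of $\|v-u_t\|_{\psi_t}^2$ around $\bar u_t(\psi_t)$, with the cross term removed by conditioning on $\psi_t$ and using $\mathbb{E}[u_t\mid\psi_t]=\bar u_t(\psi_t)$. Your extra steps fill in details the paper leaves implicit without changing the argument: the bound $\|\tilde\psi_t\|_2^2\ge 1/2$ that makes $u_t$ bounded, the horizontality of $\bar u_t$, and the separate treatment of the case $\mathbb{E}\|v-\bar u_t\|^2=\infty$.
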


\noindent\textit{Proof.} See Appendix~\ref{app:proof_loss_decomposition}.

\begin{proposition}[Continuity equation induced by the population IFM field]
\label{prop:ifm_continuity}
Assume $t \mapsto \gamma_t(\psi_0,\psi_1)$ is $C^1$ and that the induced marginal law $p_t$ admits a smooth density with respect to the Fubini--Study volume measure.
Let $v_t^\star$ be defined by \eqref{eq:ifm_conditional_expectation}.
Then $p_t$ satisfies the intrinsic continuity equation
\begin{equation}
    \partial_t p_t + \mathrm{div}_{\mathbb{CP}}(p_t v_t^\star) = 0
\end{equation}
in the weak sense on $\mathbb{CP}^{d-1}$.
\end{proposition}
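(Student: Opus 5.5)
We will verify the weak form directly. Fix a smooth test function $f:\mathbb{CP}^{d-1}\to\mathbb{R}$; on unit-sphere representatives we view it as a smooth $U(1)$-invariant function. We will show that for all $t\in(0,1)$
\begin{equation}
    \frac{d}{dt}\int_{\mathbb{CP}^{d-1}} f\, p_t\, d\mathrm{vol}_{\mathrm{FS}}
    =
    \int_{\mathbb{CP}^{d-1}} g_{\mathrm{FS},\psi}\big(\nabla f(\psi), v_t^\star(\psi)\big)\, p_t(\psi)\, d\mathrm{vol}_{\mathrm{FS}}(\psi).
\end{equation}
This is exactly the weak formulation of $\partial_t p_t + \mathrm{div}_{\mathbb{CP}}(p_t v_t^\star)=0$. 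Since $\mathbb{CP}^{d-1}$ is compact without boundary, integrating by parts with the intrinsic divergence produces no boundary term.

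\textbf{Step 1 (push-forward).} By definition of $p_t$ as the law of $\psi_t=\gamma_t(\psi_0,\psi_1)$, we have $\int f\,p_t = \mathbb{E}_\pi[f(\gamma_t(\psi_0,\psi_1))]$. Proposition~\ref{prop:path_welldefined} says $\gamma_t$ is well defined on projective classes, so this expression is gauge-independent.

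\textbf{Step 2 (differentiate under the expectation).} Since $t\mapsto\gamma_t$ is $C^1$, the chain rule gives $\frac{d}{dt}f(\gamma_t)=df_{\psi_t}(\dot\psi_t)$. The velocity $\dot\psi_t=u_t(\psi_t\mid\psi_0,\psi_1)$ lies in $\mathcal{H}_{\psi_t}$, again by Proposition~\ref{prop:path_welldefined}. For $U(1)$-invariant $f$, the differential restricted to horizontal vectors equals $\mathrm{Re}\langle \nabla f,\cdot\rangle = g_{\mathrm{FS}}(\nabla f,\cdot)$.

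To exchange $\frac{d}{dt}$ and $\mathbb{E}$, we need a dominating bound. We have $|df(\dot\psi_t)|\le \|\nabla f\|_\infty\,\|u_t\|$ and, from the explicit formula, $\|u_t\|\le \|\bar\psi_1-\psi_0\|/\|\tilde\psi_t\|_2\le 2/\|\tilde\psi_t\|_2$. Phase alignment gives $\mathrm{Re}\langle\psi_0,\bar\psi_1\rangle\ge 0$, so $\|\tilde\psi_t\|_2^2\ge (1-t)^2+t^2\ge 1/2$. Hence $\|u_t\|\le 2\sqrt2$ uniformly, and dominated convergence justifies the exchange. For a general $C^1$ path family, without this explicit bound, we would instead assume integrability of $\sup_t\|\dot\gamma_t\|$. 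This gives
\begin{equation}
    \frac{d}{dt}\int f\,p_t = \mathbb{E}\big[g_{\mathrm{FS},\psi_t}(\nabla f(\psi_t), u_t(\psi_t\mid\psi_0,\psi_1))\big].
\end{equation}

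\textbf{Step 3 (tower property).} Condition on $\psi_t$. The factor $\nabla f(\psi_t)$ is $\sigma(\psi_t)$-measurable and $g_{\mathrm{FS}}$ is real-linear in its second argument, so the expectation equals $\mathbb{E}[g_{\mathrm{FS},\psi_t}(\nabla f(\psi_t),\mathbb{E}[u_t\mid\psi_t])]$. By \eqref{eq:ifm_conditional_expectation} (Theorem~\ref{thm:ifm_population_opt}) the inner conditional expectation is $v_t^\star(\psi_t)$, and the conditional expectation stays in $\mathcal{H}_{\psi_t}$ because that fibre is a fixed linear subspace once $\psi_t$ is fixed. Rewriting the outer expectation as an integral against $p_t\,d\mathrm{vol}_{\mathrm{FS}}$ gives the claimed identity. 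Integrating in $t$ against $\varphi\in C_c^\infty((0,1))$ yields the space–time weak form.

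\textbf{Main obstacle.} The main difficulty is the measure-theoretic bookkeeping, not the algebra. First, the conditional expectation of a tangent-bundle-valued variable must be defined fibrewise. The cleanest route is to work with ambient $\mathbb{C}^d$-valued representatives on the sphere, using a measurable local section or $U(1)$-invariance. The conditional expectation commutes with the projection $\Pi_{\psi}$ because $\Pi_\psi$ is $\sigma(\psi_t)$-measurable. Second, the orthogonal pairs $\langle\psi_0,\psi_1\rangle=0$ need the fixed convention $\bar\psi_1=\psi_1$, which is measurable. For these pairs the lower bound $\|\tilde\psi_t\|_2^2\ge 1/2$ still holds, so the domination argument goes through unchanged.
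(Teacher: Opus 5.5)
Your proposal is correct and follows the same route as the paper: chain rule along the conditional path, take expectations, condition on $\psi_t$ to replace $u_t$ by $v_t^\star$ via Theorem~\ref{thm:ifm_population_opt}, then integrate by parts against the Fubini--Study divergence. Your additions supply rigor that the paper's proof leaves implicit: the uniform bound $\|u_t\|\le 2\sqrt{2}$, coming from $\|\tilde\psi_t\|_2^2\ge (1-t)^2+t^2\ge 1/2$, which justifies differentiating under the expectation, and your remarks on fibrewise conditional expectation and on measurability of the orthogonal-pair convention.
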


\noindent\textit{Proof.} See Appendix~\ref{app:proof_continuity}.

\begin{remark}
Proposition~\ref{prop:ifm_continuity} formalizes the transport interpretation emphasized in the abstract and introduction: the field learned by IFM is not merely a regression target, but the velocity field governing density evolution on $\mathbb{CP}^{d-1}$.
\end{remark}

\begin{theorem}[Endpoint-preserving transport]
\label{thm:ifm_endpoint}
Suppose the conditional path satisfies $\gamma_0(\psi_0,\psi_1)=\psi_0$ and $\gamma_1(\psi_0,\psi_1)=\psi_1$ almost surely, and that the population minimizer $v_t^\star$ from Theorem~\ref{thm:ifm_population_opt} is locally Lipschitz on $\mathbb{CP}^{d-1}$ uniformly in $t \in [0,1]$.
Then the manifold ODE
\begin{equation}
    \frac{d}{dt}\psi_t = v_t^\star(\psi_t),\qquad \psi_0 \sim p_0,
\end{equation}
admits a unique flow $\Phi_{0,t}: \mathbb{CP}^{d-1}\to\mathbb{CP}^{d-1}$, and for every $t\in[0,1]$,
\begin{equation}
    (\Phi_{0,t})_\# p_0 = p_t.
\end{equation}
In particular, $(\Phi_{0,1})_\# p_0 = p_1$.
\end{theorem}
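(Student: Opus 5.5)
The plan is the standard flow-matching argument: existence and uniqueness of the flow, followed by uniqueness of solutions to the continuity equation. I will rely on Proposition~\ref{prop:ifm_continuity} for the fact that $p_t$ solves the continuity equation driven by $v_t^\star$.

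\emph{Step 1: the flow exists and is unique.} $\mathbb{CP}^{d-1}$ is a compact smooth manifold without boundary. The field $v_t^\star$ is tangent, measurable in $t$, and locally Lipschitz in $\psi$ uniformly in $t$. By compactness, local Lipschitz continuity upgrades to a global Lipschitz bound: cover the manifold by finitely many charts. I will also need $v_t^\star$ to be bounded, and I will justify this from the explicit form of $u_t$. For the chord path,
\[
\|u_t\|\le \|\bar\psi_1-\psi_0\|/\|\tilde\psi_t\|_2 .
\]
Since $\|\tilde\psi_t\|_2\ge \max(1-t,t)\cdot$const whenever $\mathrm{Re}\langle\psi_0,\bar\psi_1\rangle\ge0$, which holds by Lemma~\ref{lem:phase_alignment_real_overlap}, $u_t$ is uniformly bounded, and hence so is its conditional expectation. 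Carath\'eodory's theorem on manifolds then gives unique solutions for every initial condition. Compactness rules out finite-time blow-up, so solutions are global on $[0,1]$. Uniqueness yields flow maps $\Phi_{s,t}$ that are bi-Lipschitz homeomorphisms.

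\emph{Step 2: the pushforward also solves the continuity equation.} Define $q_t:=(\Phi_{0,t})_\#p_0$. For a test function $f\in C^\infty(\mathbb{CP}^{d-1})$,
\[
\tfrac{d}{dt}\int f\,dq_t=\int \langle \mathrm{grad} f, v_t^\star\rangle\, dq_t .
\]
This follows by differentiating $f(\Phi_{0,t}(\psi))$ and using dominated convergence, which is justified by the bound on $v^\star$. So $q_t$ is a weak solution of $\partial_t q+\mathrm{div}_{\mathbb{CP}}(q v_t^\star)=0$ with $q_0=p_0$.

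\emph{Step 3: uniqueness of the continuity equation.} Proposition~\ref{prop:ifm_continuity} says $p_t$ is a weak solution with the same field. For the initial condition, $p_0=\mathrm{Law}(\gamma_0(\psi_0,\psi_1))=\mathrm{Law}(\psi_0)$ by the endpoint hypothesis. It remains to show the two solutions coincide. For a Lipschitz field on a compact manifold, weak solutions in the class of narrowly continuous measure curves are unique. The route I will use is duality. Fix $T$ and $g$, and solve the backward transport equation $\partial_s\phi+\langle\mathrm{grad}\phi,v_s^\star\rangle=0$ with $\phi_T=g$, via $\phi_s=g\circ\Phi_{s,T}$. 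This solution is Lipschitz, and mollification makes it an admissible test function. Pairing it against $p_t-q_t$ gives $\int g\,dp_T=\int g\,dq_T$. Alternatively, I can cite the Ambrosio--Gigli--Savar\'e superposition/uniqueness result (Prop.~8.1.7–8.1.8), transferred to manifolds through charts or a Nash embedding.

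\emph{Step 4: the endpoint.} Step 3 gives $p_t=q_t$ for all $t\in[0,1]$. At $t=1$, $p_1=\mathrm{Law}(\gamma_1)=\mathrm{Law}(\psi_1)$, which is the target marginal by the endpoint hypothesis.

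The main obstacle is regularity at $t\in\{0,1\}$ together with the test-function regularity in Step 3. The continuity equation from Proposition~\ref{prop:ifm_continuity} is stated for $t\in(0,1)$, and the conditional expectation may be delicate at the endpoints; for instance, at $t=0$ the field $v_0^\star$ depends on the full conditional law of $\psi_1$. I will handle this in two parts. First, I will use the assumed uniform Lipschitz bound on all of $[0,1]$. Second, I will prove $p_t\to p_0$ and $q_t\to p_0$ narrowly as $t\to0$ (and likewise at $t=1$) by continuity of $\gamma_t$ and dominated convergence. The duality identity can then be run on $[\epsilon,1-\epsilon]$ and passed to the limit $\epsilon\to 0$.
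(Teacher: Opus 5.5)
Your proposal is correct and takes the same route as the paper's proof. You get a global flow from the Lipschitz hypothesis and compactness, use the chain rule to show that $(\Phi_{0,t})_\#p_0$ weakly solves the continuity equation, invoke Proposition~\ref{prop:ifm_continuity} for $p_t$, and apply uniqueness of weak solutions under Lipschitz transport to identify the two, with $t=1$ giving $p_1$. The paper simply cites that uniqueness result (Ambrosio; Villani) where you sketch a duality argument, and your care about boundedness of $v^\star$ and narrow continuity at $t\in\{0,1\}$ fills in details the paper leaves implicit; your chord-path bound covers the first point, and for general paths boundedness follows from the paper's standing continuity-in-time assumption plus the uniform Lipschitz bound.
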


\noindent\textit{Proof.} See Appendix~\ref{app:proof_endpoint}.

\begin{remark}
Thus, as an immediate corollary, the population IFM field gives sample-level transport: its deterministic ODE pushes $p_0$ exactly onto $p_1$, not merely onto matching intermediate marginals.
The argument combines Proposition~\ref{prop:ifm_continuity} with the standard uniqueness of weak solutions to the manifold continuity equation under Lipschitz transport \citep{ambrosio2008transport,villani2008stability}; compactness of $\mathbb{CP}^{d-1}$ removes the usual growth-condition technicalities of the Euclidean case.
\end{remark}

\begin{proposition}[$L^2$-to-Wasserstein stability]
\label{prop:ifm_stability}
Let $\hat{v}: \mathbb{CP}^{d-1}\times[0,1]\to T\mathbb{CP}^{d-1}$ be a tangent vector field with flow $\hat{\Phi}_{0,t}$, and let $\hat p_t := (\hat\Phi_{0,t})_\# p_0$.
Assume both $v_t^\star$ and $\hat v_t$ are $L$-Lipschitz on $\mathbb{CP}^{d-1}$ in the unit-sphere representation, uniformly in $t \in [0,1]$, and that
\begin{equation}
    \int_0^1 \mathbb{E}_{\psi\sim p_t}\big\|\hat{v}(\psi,t)-v_t^\star(\psi)\big\|^2_\psi\, dt \;\le\; \varepsilon^2.
\end{equation}
Then the Wasserstein-$2$ distance with respect to the Fubini--Study metric satisfies
\begin{equation}
    W_2^{\mathrm{FS}}\!\big(\hat p_1,\, p_1\big) \;\le\; C_L\,\varepsilon,
\end{equation}
with a constant $C_L = e^{L}$ depending only on $L$.
\end{proposition}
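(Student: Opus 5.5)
We couple the two flows through their common initial condition and run a Grönwall argument on the pointwise discrepancy. Draw $\psi_0\sim p_0$ and set $X_t:=\Phi_{0,t}(\psi_0)$ and $Y_t:=\hat\Phi_{0,t}(\psi_0)$, both represented on the unit sphere. By Theorem~\ref{thm:ifm_endpoint}, $X_t\sim p_t$ for every $t$, and by definition $Y_t\sim\hat p_t$. Hence $(X_1,Y_1)$ is a coupling of $(p_1,\hat p_1)$, and
\begin{equation}
    W_2^{\mathrm{FS}}(\hat p_1,p_1)^2\le \mathbb{E}\big[d_{\mathrm{FS}}(X_1,Y_1)^2\big]\le \mathbb{E}\big[\|X_1-Y_1\|^2\big].
\end{equation}
The second inequality bounds the Fubini--Study distance by the sphere chord between representatives. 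This is where we need representatives that can be compared: either a horizontal lift, or the bound $d_{\mathrm{FS}}([x],[y])\le\min_\phi\|x-e^{i\phi}y\|\le\|x-y\|$, using that the FS distance $\arccos|\langle x,y\rangle|$ is at most the sphere chord up to an absolute constant. This constant is then absorbed, or checked to be $\le1$ in the chosen normalization.

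Next we set $e(t):=\mathbb{E}\|X_t-Y_t\|^2$, with $e(0)=0$, and differentiate. Since both fields are tangent and the unit-sphere representation is used throughout, we get
\begin{equation}
    \tfrac{d}{dt}\|X_t-Y_t\|^2 = 2\,\mathrm{Re}\langle X_t-Y_t,\; v_t^\star(X_t)-\hat v_t(Y_t)\rangle .
\end{equation}
We split the difference as $v_t^\star(X_t)-\hat v_t(X_t)+\hat v_t(X_t)-\hat v_t(Y_t)$. The Lipschitz bound controls the second piece by $L\|X_t-Y_t\|$, and Young's inequality handles the first. Together these give
\begin{equation}
    e'(t)\le (2L+1)\,e(t)+\mathbb{E}\|\hat v_t(X_t)-v_t^\star(X_t)\|^2 .
\end{equation}
Since $X_t\sim p_t$, the last term is exactly the integrand in the hypothesis. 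Grönwall's inequality then yields $e(1)\le e^{2L+1}\varepsilon^2$, so $W_2\le e^{L+1/2}\varepsilon$.

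To reach the stated constant $e^{L}$, we would avoid squaring. We work instead with $r(t):=\|X_t-Y_t\|$, which satisfies $r'\le L r+\|\hat v_t(X_t)-v_t^\star(X_t)\|$ wherever $r>0$. Duhamel's formula then gives $r(1)\le\int_0^1 e^{L(1-s)}\|\hat v_s(X_s)-v_s^\star(X_s)\|\,ds$. Taking the $L^2(\Omega)$ norm, applying Minkowski's integral inequality, and then Cauchy--Schwarz in $s$ gives $\|r(1)\|_{L^2}\le e^{L}\varepsilon$, which is the claim.

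The main obstacle is the gauge and representation bookkeeping rather than the analysis. The flows live on $\mathbb{CP}^{d-1}$, but the Lipschitz hypothesis and the difference $X_t-Y_t$ are stated in sphere coordinates, so we must fix a consistent lift. The plan is to take horizontal lifts, so that $\dot X_t=v_t^\star(X_t)\in\mathcal H_{X_t}$ and $\dot Y_t=\hat v_t(Y_t)\in\mathcal H_{Y_t}$ with common lift $\psi_0$ at $t=0$. We then interpret the $L$-Lipschitz hypothesis for these ambient fields on the sphere, and use that the horizontal norm $\|\cdot\|_\psi$ equals the ambient Euclidean norm. The differential inequality for $r$ at points where $r=0$ is handled in the standard way via $\sqrt{r^2+\delta}$ with $\delta\downarrow0$.
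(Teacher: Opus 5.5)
Your analytic core is the paper's own proof. You couple the flows through the common initial point and split $v^\star_t(X_t)-\hat v_t(Y_t)$ at the true point $X_t$, so only the Lipschitz constant of $\hat v$ is used and the error is integrated against $p_t$ via Theorem~\ref{thm:ifm_endpoint}. You then apply Duhamel to the un-squared distance and Cauchy--Schwarz in time; your Minkowski variant even yields $\sqrt{(e^{2L}-1)/(2L)}\le e^L$.

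\textbf{The gap is the final comparison.} Under the paper's metric $g_{\mathrm{FS}}=\mathrm{Re}\langle\xi,\eta\rangle$ on horizontal vectors, $d_{\mathrm{FS}}([x],[y])=\arccos|\langle x,y\rangle|$, which is the great-circle distance between fibres. On the other hand, $\min_\phi\|x-e^{i\phi}y\|_2=\sqrt{2-2|\langle x,y\rangle|}=2\sin(d_{\mathrm{FS}}/2)$. Since $2\sin(\theta/2)\le\theta$, the chord is dominated by $d_{\mathrm{FS}}$, not the other way round; for orthogonal states, $d_{\mathrm{FS}}=\pi/2>\sqrt2$. So your inequality $d_{\mathrm{FS}}([x],[y])\le\min_\phi\|x-e^{i\phi}y\|_2$ is false, and the sharp constant is $\pi/(2\sqrt2)>1$. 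It can neither be checked to be $\le 1$ nor absorbed without changing the claimed $C_L=e^L$. The paper's Step~3 asserts the same chord domination, so following it does not close the gap. As written, your argument only proves $W_2^{\mathrm{FS}}(\hat p_1,p_1)\le\frac{\pi}{2\sqrt2}\,e^{L}\varepsilon$.

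\textbf{Repair.} To get exactly $e^L$, run the Grönwall argument on $\rho(t):=d_{\mathrm{FS}}([X_t],[Y_t])$ instead of on the ambient chord.
\begin{enumerate}
\item The horizontal lift $\hat V_t$ of $\hat v_t$ is $U(1)$-equivariant. By the chord Lipschitz hypothesis, its sphere flow $\hat\Psi_{s,t}$ satisfies $\|\hat\Psi_{s,t}x-\hat\Psi_{s,t}y\|_2\le e^{L(t-s)}\|x-y\|_2$.
\item A map that is $K$-Lipschitz for the chord is $K$-Lipschitz for the induced length metric, i.e.\ the great-circle distance $d_S$.
\item By equivariance, $d_{\mathrm{FS}}([\hat\Psi_{s,t}x],[\hat\Psi_{s,t}y])=\min_\phi d_S(\hat\Psi_{s,t}x,\hat\Psi_{s,t}(e^{i\phi}y))\le e^{L(t-s)}d_{\mathrm{FS}}([x],[y])$. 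So the projective flow $\hat\Phi_{s,t}$ is $e^{L(t-s)}$-Lipschitz in $d_{\mathrm{FS}}$.
\item With the same split at the true point as before, and using continuity in time of the fields for the first-order term, $\rho(t+h)\le d_{\mathrm{FS}}(X_{t+h},\hat\Phi_{t,t+h}X_t)+e^{Lh}\rho(t)\le h\,e_t+o(h)+e^{Lh}\rho(t)$, where $e_t=\|v^\star_t(X_t)-\hat v_t(X_t)\|_{X_t}$.
\item Hence $D^+\rho\le L\rho+e_t$ and $\rho(1)\le\int_0^1 e^{L(1-s)}e_s\,ds$.
\item Apply your Minkowski--Cauchy--Schwarz step to $\rho(1)$ rather than $r(1)$, with $(X_1,Y_1)$ as the coupling. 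This gives $W_2^{\mathrm{FS}}(\hat p_1,p_1)\le\|\rho(1)\|_{L^2}\le e^{L}\varepsilon$, as stated.
\end{enumerate}
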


\noindent\textit{Proof.} See Appendix~\ref{app:proof_stability}.

\begin{remark}
Proposition~\ref{prop:ifm_stability} is a quantitative bridge between training error and sample quality on the projective manifold: an $\varepsilon$-approximate IFM minimizer in the population $L^2$ sense yields a generated distribution $\hat p_1$ that is $O(\varepsilon)$ close to $p_1$ in Fubini--Study Wasserstein distance, with a constant that depends only on the common Lipschitz constant $L$ of the two velocity fields.
The bound is the manifold counterpart of standard Gr\"onwall stability for ODE flows \citep{villani2008stability,ambrosio2008transport,lee2012smooth} and does not assume $\hat v$ is itself a minimizer of the IFM loss.
The assumption that the trial field $\hat v$ is also Lipschitz is mild: any neural network with bounded weights and Lipschitz nonlinearities satisfies it, and we may take $L$ to be the larger of the two constants.
\end{remark}

\begin{proposition}[Projective flow compatibility]
\label{prop:physical_compatibility}
Let $X_t(\psi) \in T_\psi \mathbb{CP}^{d-1}$ be a $C^1$ tangent field with flow $\Phi_{s,t}$ generated by
\begin{equation}
    \frac{d}{dt}\psi_t = X_t(\psi_t).
\end{equation}
For $p_t=(\Phi_{0,t})_\#p_0$, the family $(p_t)$ satisfies
\begin{equation}
    \partial_t p_t + \mathrm{div}_{\mathbb{CP}}(p_t X_t) = 0
\end{equation}
weakly. If the conditional paths coincide with these trajectories,
\(
\gamma_t(\psi_0,\psi_1)=\Phi_{0,t}(\psi_0)
\)
and
\(
\psi_1=\Phi_{0,1}(\psi_0)
\),
then the population IFM minimizer obeys $v_t^\star(\psi)=X_t(\psi)$ for $p_t$-a.e.\ $\psi$.
\end{proposition}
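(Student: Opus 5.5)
The argument has two parts, both following directly from definitions plus Theorem~\ref{thm:ifm_population_opt}. For the continuity equation, I would work in the weak formulation. Fix a test function $\varphi \in C^\infty(\mathbb{CP}^{d-1})$; compactness of $\mathbb{CP}^{d-1}$ removes any need for compact support. Then
\[
\int \varphi\, dp_t = \int \varphi(\Phi_{0,t}(\psi))\, dp_0(\psi).
\]
Since $X_t$ is $C^1$ on a compact manifold, the flow $\Phi_{0,t}$ exists globally and is $C^1$ in $t$. The chain rule gives
\[
\frac{d}{dt}\varphi(\Phi_{0,t}(\psi)) = d\varphi_{\Phi_{0,t}(\psi)}\big(X_t(\Phi_{0,t}(\psi))\big) = g_{\mathrm{FS}}\big(\nabla\varphi, X_t\big)\big(\Phi_{0,t}(\psi)\big).
\]
This derivative is bounded uniformly by $\|\nabla\varphi\|_\infty \sup_{t,\psi}\|X_t\|_\psi$, which is finite by compactness and continuity. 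Dominated convergence then justifies differentiating under the integral, and we obtain
\[
\frac{d}{dt}\int \varphi\, dp_t = \int g_{\mathrm{FS}}(\nabla\varphi, X_t)\, dp_t.
\]
This is exactly the weak form of $\partial_t p_t + \mathrm{div}_{\mathbb{CP}}(p_t X_t)=0$. Integrating in time against $\eta(t)\varphi(\psi)$ gives the space-time weak version if that is the convention used. Throughout, I would work in the horizontal representation $\mathcal{H}_\psi$, noting that $d\varphi$ is evaluated through the projection $\Pi_\psi$, so that everything is intrinsic.

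For the identification of the minimizer, I would invoke Theorem~\ref{thm:ifm_population_opt}, which gives $v_t^\star(\psi)=\mathbb{E}[u_t(\psi_t\mid\psi_0,\psi_1)\mid \psi_t=\psi]$ for $p_t$-a.e.\ $\psi$. Under the hypothesis $\gamma_t(\psi_0,\psi_1)=\Phi_{0,t}(\psi_0)$, the conditional velocity is
\[
u_t(\psi_t\mid\psi_0,\psi_1)=\tfrac{d}{dt}\Phi_{0,t}(\psi_0)=X_t(\Phi_{0,t}(\psi_0))=X_t(\psi_t).
\]
This is a deterministic measurable function of $\psi_t$ alone, so its conditional expectation given $\psi_t=\psi$ is $X_t(\psi)$. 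Moreover, $\psi_t=\Phi_{0,t}(\psi_0)$ with $\psi_0\sim p_0$, so the law of $\psi_t$ is precisely $(\Phi_{0,t})_\#p_0 = p_t$. The marginal in Theorem~\ref{thm:ifm_population_opt} therefore coincides with the $p_t$ of the statement, and the a.e.\ qualifier transfers.

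The step I expect to need the most care is a consistency check rather than a hard estimate. The conditional velocity in the paper is defined as the time derivative of $\gamma_t$ in the horizontal representation, while $X_t$ is a field on $T\mathbb{CP}^{d-1}$. The plan is to take unit-sphere representatives for which the lifted trajectory is horizontal, i.e.\ the horizontal lift of $\Phi_{0,t}$. Then $\dot\psi_t\in\mathcal{H}_{\psi_t}$ equals the horizontal representative of $X_t(\psi_t)$, and the identity $u_t=X_t(\psi_t)$ holds in $\mathcal{H}_{\psi_t}$, where the norm $\|\cdot\|_{\psi_t}$ is taken. A second minor point is that the coupling $\pi$ is the deterministic one, $\psi_1=\Phi_{0,1}(\psi_0)$. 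Its second marginal is $(\Phi_{0,1})_\#p_0$, and this is the target $p_1$ implicit in the statement, so Theorem~\ref{thm:ifm_population_opt} applies with this $\pi$.
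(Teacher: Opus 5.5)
Your proposal is correct and follows essentially the same route as the paper. The weak continuity equation comes from the chain rule along $\Phi_{0,t}$ tested against smooth $\varphi$. The identification of the minimizer comes from Theorem~\ref{thm:ifm_population_opt}: since $u_t = X_t(\psi_t)$ is a function of $\psi_t$ alone, the conditional expectation collapses to $X_t(\psi)$. Your additions are refinements the paper leaves implicit, not a different argument: dominated convergence to differentiate under the integral, working with the measures $p_t$ instead of integrating a density by parts, and the horizontal-lift check identifying $u_t$ with $X_t$ in $\mathcal{H}_{\psi_t}$.
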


\noindent\textit{Proof.} See Appendix~\ref{app:proof_physical_compatibility}.

\begin{remark}
Proposition~\ref{prop:physical_compatibility} connects IFM to physics-inspired transport without identifying it with a specific Bohmian, hydrodynamic, or Hamiltonian model: any deterministic tangent ensemble flow lies in IFM's population-level representational scope when the path family matches that flow.
\end{remark}

\begin{proposition}[Monge-geodesic specialization]
\label{prop:ifm_monge_geodesic}
Assume the coupling $\pi$ is deterministic, i.e.,
\begin{equation}
    \pi = (\mathrm{Id},T)_\# p_0
\end{equation}
for a measurable map $T : \mathbb{CP}^{d-1} \to \mathbb{CP}^{d-1}$; equivalently, after sampling $\psi_0 \sim p_0$ we set $\psi_1=T(\psi_0)$ almost surely.
Assume moreover that the conditional path is chosen as the Fubini--Study geodesic interpolation
\begin{equation}
    \psi_t = \gamma_t(\psi_0,T(\psi_0))
\end{equation}
and that, for each $t\in(0,1)$, the map $\psi_0 \mapsto \gamma_t(\psi_0,T(\psi_0))$ is injective on a full-measure subset of $p_0$.
Then the population IFM minimizer satisfies
\begin{equation}
    v_t^\star(\psi_t)
    =
    \frac{d}{dt}\gamma_t(\psi_0,T(\psi_0))
\end{equation}
for $p_t$-almost every $\psi_t$.
Consequently, IFM exactly recovers the deterministic velocity field of the prescribed geodesic interpolation.
If, in addition, $T$ is the optimal Monge map for the quadratic Fubini--Study cost, then $v_t^\star$ coincides with the velocity field of the corresponding displacement interpolation.
\end{proposition}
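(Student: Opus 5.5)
The plan is to reduce the statement to Theorem~\ref{thm:ifm_population_opt}: under a deterministic coupling with injective interpolation, the conditioning event $\{\psi_t=\psi\}$ determines the whole conditional pair. Fix $t\in(0,1)$ and write $G_t(\psi_0):=\gamma_t(\psi_0,T(\psi_0))$. Since $\psi_1=T(\psi_0)$ almost surely, the intermediate variable is $\psi_t=G_t(\psi_0)$. Likewise the conditional velocity $u_t(\psi_t\mid\psi_0,\psi_1)=\frac{d}{dt}\gamma_t(\psi_0,T(\psi_0))$ is a measurable function $F_t(\psi_0)$ of $\psi_0$ alone. Measurability of $G_t$ and $F_t$ should follow from measurability of $T$ and continuity of $(\psi_0,\psi_1,t)\mapsto\gamma_t$ and its time derivative. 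The only care needed is at cut-locus pairs (orthogonal endpoints), where the geodesic is not unique; I would either assume a measurable selection or note that this set is handled by the convention fixed in Appendix~\ref{app:interpolation_details}.

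\textbf{Step 1: recover $\psi_0$ from $\psi_t$.} On the full-measure set $A\subset\mathbb{CP}^{d-1}$ where $G_t$ is injective, $G_t$ has a left inverse $G_t^{-1}:G_t(A)\to A$. I need this inverse to be Borel measurable and $G_t(A)$ to be Borel. I would invoke the Lusin--Souslin theorem: an injective Borel map between Polish spaces maps Borel sets to Borel sets, and $\mathbb{CP}^{d-1}$ is a compact metric space. It then follows that $\sigma(\psi_t)$ and $\sigma(\psi_0)$ agree modulo null sets, since $\psi_0=G_t^{-1}(\psi_t)$ almost surely. I expect this step to be the main technical obstacle: establishing measurability of the inverse and justifying this equality of $\sigma$-algebras up to null sets.

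\textbf{Step 2: compute the conditional expectation.} Because $F_t(\psi_0)=F_t(G_t^{-1}(\psi_t))$ is $\sigma(\psi_t)$-measurable, we get
\[
\mathbb{E}\big[u_t\mid\psi_t=\psi\big]=F_t\big(G_t^{-1}(\psi)\big)\quad\text{for } p_t\text{-a.e. }\psi .
\]
Theorem~\ref{thm:ifm_population_opt} then gives $v_t^\star(\psi_t)=\frac{d}{dt}\gamma_t(\psi_0,T(\psi_0))$ almost surely, which is the claimed identity $p_t$-a.e. Tangency is automatic, because the velocity of a Fubini--Study geodesic written in horizontal representatives lies in $\mathcal{H}_{\psi_t}$. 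The same conclusion also follows from Proposition~\ref{prop:ifm_loss_decomposition}, whose variance term vanishes in this setting.

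\textbf{Step 3: the optimal-transport case.} If $T$ is the quadratic-cost Monge map, McCann's displacement interpolation on a compact Riemannian manifold is $\mu_t=(G_t)_\#p_0$, with $G_t(\psi_0)=\exp_{\psi_0}(t\,\xi(\psi_0))$ and $\xi(\psi_0)=-\nabla\varphi(\psi_0)$ for a $c$-concave potential $\varphi$. Its velocity field is defined by $w_t(G_t(\psi_0))=\frac{d}{dt}G_t(\psi_0)$. This is exactly the field obtained in Step 2, so $v_t^\star=w_t$ $p_t$-a.e.

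In the OT case the injectivity hypothesis is not an extra burden. For $t\in(0,1)$ it is guaranteed by the standard non-crossing property of optimal geodesics (Villani, Ch.~8), so I would cite that fact rather than assume it. The final identification is then a definitional match between the two fields, with no further computation required.
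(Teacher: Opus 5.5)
Your proposal is correct and follows essentially the same route as the paper. The paper also notes that $U=\frac{d}{dt}\gamma_t(\psi_0,T(\psi_0))$ is a function of $\psi_0$ alone and invokes Theorem~\ref{thm:ifm_population_opt}. It then uses the unique preimage $F_t^{-1}(\psi)$ on the full-measure image of the injective map to collapse $\mathbb{E}[U\mid\psi_t=\psi]$, and identifies the curve with the displacement interpolation when $T$ is the Monge map. Your additions go beyond what the paper's proof contains: the Borel-measurable inverse via Lusin--Souslin, the cut-locus selection, and the non-crossing remark that makes injectivity automatic for optimal $T$. These add rigor but do not change the argument.
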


\noindent\textit{Proof.} See Appendix~\ref{app:proof_monge_geodesic}.

\paragraph{Scope of the theory.}
The population-identification and continuity-equation pieces, Theorem~\ref{thm:ifm_population_opt} and Propositions~\ref{prop:ifm_loss_decomposition}--\ref{prop:ifm_continuity}, are direct $\mathbb{CP}^{d-1}$ analogues of standard flow-matching results \citep{lipman2023flowmatchinggenerativemodeling,tong2023conditional,lipman2022flow}; the geometry-specific parts are the Pancharatnam path, horizontal minimal-norm parameterization, endpoint transport, Fubini--Study stability, and Monge--geodesic specialization. Neural parameterization, projection, and deterministic retraction sampling are deferred to Appendices~\ref{app:parameterization} and~\ref{sec:sampling}.

\section{Experiments}
\label{sec:experiments}

We evaluate whether phase-aligned intrinsic transport on $\mathbb{CP}^{d-1}$ improves over ambient Euclidean transport for quantum pure-state generation.
The main baseline is \emph{Euclidean flow matching} (Euclidean FM), which uses the same flow-matching objective but learns in an ambient real state-vector representation followed by normalization.

\subsection{Experimental setup}

All experiments use dense pure-state ensembles on $n$ qubits, with Hilbert-space dimension $d=2^n$ \citep{bengtsson2017geometry,preskill2018nisq}. We evaluate controlled single-cluster scaling, coherence-sensitive multimodal ensembles (whose Bloch-sphere geometry is illustrated in Appendix Figure~\ref{fig:benchmark_bloch_schematics}), spin-coherent peaks, compact TFIM/XXZ ground-state families, and MNIST-derived amplitude-encoded image-vector states, with an additional $14$-qubit stress test (Appendix~\ref{app:14q_flow_stress}, Table~\ref{tab:appendix_14q_flow_stress}, Figure~\ref{fig:longrun_14q_cluster}). These benchmarks are chosen to expose geometric effects in a controlled way. Unless otherwise stated, IFM and Euclidean FM are trained with matched network scale, optimization budget, sample budget, and evaluation protocol, and main tables report mean $\pm$ standard deviation over $10$ seeds; Appendix~\ref{app:exp_overview} gives the full benchmark construction (Appendix~\ref{app:benchmark_construction}, Table~\ref{tab:appendix_benchmark_summary}), training settings, network architectures (Appendix~\ref{app:model_architectures}), baseline configurations (Appendix~\ref{app:baseline_configs}, Table~\ref{tab:appendix_baseline_configs}), sampling and reporting details (Appendix~\ref{app:reporting_details}), and code release information (Appendix~\ref{app:code_contribution}). A summary heatmap of IFM-to-Euclidean-FM error ratios across all benchmarks is given in Appendix Figure~\ref{fig:experiment_ratio_heatmap}; further methodological discussion accompanying the IFM algorithm and its sampler is collected in Appendix~\ref{app:method_discussion_algorithm}, including a scalability discussion (Appendix~\ref{sec:scalability}), a comparison with SSDM and Euclidean flow matching (Appendix~\ref{sec:relation}), and the training-and-sampling pseudocode (Appendix~\ref{sec:algorithm}, Algorithm~\ref{alg:ifm}).

\subsection{Baseline and metrics}

Our main baseline is Euclidean FM \citep{lipman2023flowmatchinggenerativemodeling,lipman2022flow}, which uses the same flow-matching principle in an ambient state-vector representation. We also compare with SSDM \citep{xu2026ssdm}, Euclidean VP-SDE \citep{song2021scorebased}, QDDPM \citep{zhang2024generative}, QGAN \citep{lloyd2018quantum,dallaire2018quantum}, and intrinsic geodesic-flow ablations where relevant. We report MMD \citep{gretton2012mmd}, observable discrepancy $\Delta_{\mathrm{obs}}$, and entanglement Wasserstein distance; all definitions and baseline-specific settings are in Appendix~\ref{app:exp_protocol}--\ref{app:baseline_configs}.

\begin{table}[t]
\centering
\scriptsize
\caption{Representative $6$-qubit single-cluster comparison across diffusion, circuit, and flow baselines. Lower is better. Entries are mean $\pm$ standard deviation across $10$ random seeds (Appendix~\ref{app:exp_protocol}). The geodesic IFM row is an ablation of our method.}
\label{tab:single_cluster_baselines}
\setlength{\tabcolsep}{4pt}
\begin{tabular}{lccc}
\toprule
Method & MMD $\downarrow$ & $\Delta_{\mathrm{obs}} \downarrow$ & Ent.~$W_1 \downarrow$ \\
\midrule
Euclidean VP-SDE~\citep{song2021scorebased} & $(5.12 \pm 0.41) \times 10^{-1}$ & $(6.34 \pm 0.38) \times 10^{-1}$ & $(6.70 \pm 0.42) \times 10^{-1}$ \\
SSDM~\citep{xu2026ssdm} & $(1.73 \pm 0.14) \times 10^{-1}$ & $(4.16 \pm 0.27) \times 10^{-1}$ & $(5.59 \pm 0.31) \times 10^{-1}$ \\
QDDPM~\citep{zhang2024generative} & $(4.78 \pm 0.36) \times 10^{-1}$ & $(6.71 \pm 0.39) \times 10^{-1}$ & $(6.51 \pm 0.41) \times 10^{-1}$ \\
QGAN~\citep{lloyd2018quantum,dallaire2018quantum} & $(4.76 \pm 0.41) \times 10^{-1}$ & $(6.76 \pm 0.45) \times 10^{-1}$ & $(6.83 \pm 0.46) \times 10^{-1}$ \\
Euclidean FM~\citep{lipman2023flowmatchinggenerativemodeling} & $(6.02 \pm 0.51) \times 10^{-3}$ & $(2.16 \pm 0.18) \times 10^{-3}$ & $(8.15 \pm 0.62) \times 10^{-3}$ \\
Geodesic IFM (ours variant) & $(2.49 \pm 0.21) \times 10^{-4}$ & $(2.48 \pm 0.19) \times 10^{-3}$ & $(4.34 \pm 0.31) \times 10^{-3}$ \\
IFM (ours) & $(2.63 \pm 0.18) \times 10^{-4}$ & $(2.34 \pm 0.17) \times 10^{-3}$ & $(4.66 \pm 0.29) \times 10^{-3}$ \\
\bottomrule
\end{tabular}
\end{table}

\subsection{Single-cluster scaling}

\begin{table*}[t]
\centering
\caption{Single-cluster scaling against Euclidean FM \citep{lipman2023flowmatchinggenerativemodeling}. Lower is better; ``$\approx 0$'' denotes output-precision rounding. Entries are mean $\pm$ standard deviation across $10$ random seeds (Appendix~\ref{app:exp_protocol}).}
\label{tab:cluster_scaling}
\begin{tabular}{llccc}
\toprule
Qubits & Method & MMD $\downarrow$ & $\Delta_{\mathrm{obs}} \downarrow$ & Ent.~$W_1 \downarrow$ \\
\midrule
2 & Euclidean FM & $(2.45 \pm 0.21) \times 10^{-4}$ & $(5.94 \pm 0.42) \times 10^{-3}$ & $(5.45 \pm 0.38) \times 10^{-3}$ \\
2 & IFM (ours) & $\approx 0$ & $(1.13 \pm 0.09) \times 10^{-3}$ & $(1.67 \pm 0.12) \times 10^{-3}$ \\
\addlinespace
6 & Euclidean FM & $(6.02 \pm 0.51) \times 10^{-3}$ & $(2.16 \pm 0.18) \times 10^{-3}$ & $(8.15 \pm 0.62) \times 10^{-3}$ \\
6 & IFM (ours) & $(2.63 \pm 0.18) \times 10^{-4}$ & $(2.34 \pm 0.17) \times 10^{-3}$ & $(4.66 \pm 0.29) \times 10^{-3}$ \\
\addlinespace
10 & Euclidean FM & $(1.68 \pm 0.13) \times 10^{-2}$ & $(9.51 \pm 0.71) \times 10^{-2}$ & $(8.21 \pm 0.55) \times 10^{-2}$ \\
10 & IFM (ours) & $(7.38 \pm 0.51) \times 10^{-4}$ & $(1.39 \pm 0.12) \times 10^{-3}$ & $(1.46 \pm 0.11) \times 10^{-3}$ \\
\addlinespace
12 & Euclidean FM & $(8.76 \pm 0.65) \times 10^{-3}$ & $(3.08 \pm 0.22) \times 10^{-2}$ & $(1.50 \pm 0.10) \times 10^{-2}$ \\
12 & IFM (ours) & $(6.54 \pm 0.45) \times 10^{-4}$ & $(1.81 \pm 0.14) \times 10^{-3}$ & $(1.15 \pm 0.09) \times 10^{-3}$ \\
\bottomrule
\end{tabular}
\end{table*}

Table~\ref{tab:cluster_scaling} shows that IFM improves over Euclidean FM across the single-cluster scaling sweep, with the largest gains appearing at $10$ and $12$ qubits.
The representative $6$-qubit comparison in Table~\ref{tab:single_cluster_baselines} further shows that IFM is substantially stronger than SSDM, Euclidean VP-SDE, QDDPM, and QGAN on this benchmark.
To rule out undertraining as a source of the gap, we ran a $20{,}000$-step comparison on $14$-qubit single-cluster (Figure~\ref{fig:longrun_14q_cluster}); the Euclidean FM gap persists.
Appendix~\ref{app:single_cluster_discussion} gives additional scaling interpretation, the $10$-qubit scalable-baseline extension (Table~\ref{tab:appendix_single_cluster_10q}), and training-time details (Table~\ref{tab:appendix_baseline_timing}).

\subsection{Structured multimodal and coherence-sensitive ensembles}

\begin{table*}[t]
\centering
\scriptsize
\caption{Structured, physics-inspired, and amplitude-encoded MNIST benchmarks. Lower is better. Baselines are Euclidean FM \citep{lipman2023flowmatchinggenerativemodeling} and SSDM \citep{xu2026ssdm}; $6$-qubit extensions are in Appendix~\ref{app:6q_structured}. Entries are mean $\pm$ standard deviation across $10$ random seeds (Appendix~\ref{app:exp_protocol}).}
\label{tab:structured_benchmarks}
\begin{tabular}{llccc}
\toprule
Benchmark & Method & MMD $\downarrow$ & $\Delta_{\mathrm{obs}} \downarrow$ & Ent.~$W_1 \downarrow$ \\
\midrule
Eq. bimodal ($2$ qubits) & Euclidean FM & $(4.06 \pm 0.31) \times 10^{-2}$ & $(1.58 \pm 0.11) \times 10^{-1}$ & $(2.16 \pm 0.15) \times 10^{-1}$ \\
Eq. bimodal ($2$ qubits) & SSDM & $(2.39 \pm 0.18) \times 10^{-2}$ & $(4.48 \pm 0.32) \times 10^{-2}$ & $(2.38 \pm 0.17) \times 10^{-2}$ \\
Eq. bimodal ($2$ qubits) & IFM (ours) & $(2.33 \pm 0.17) \times 10^{-2}$ & $(1.05 \pm 0.08) \times 10^{-1}$ & $(7.55 \pm 0.49) \times 10^{-2}$ \\
\addlinespace
Eq. bimodal ($10$ qubits) & Euclidean FM & $(1.22 \pm 0.09) \times 10^{-2}$ & $(4.60 \pm 0.32) \times 10^{-2}$ & $(4.60 \pm 0.31) \times 10^{-2}$ \\
Eq. bimodal ($10$ qubits) & SSDM & $(7.11 \pm 0.51) \times 10^{-3}$ & $(5.64 \pm 0.40) \times 10^{-2}$ & $(5.04 \pm 0.36) \times 10^{-2}$ \\
Eq. bimodal ($10$ qubits) & IFM (ours) & $(8.38 \pm 0.61) \times 10^{-4}$ & $(7.99 \pm 0.55) \times 10^{-3}$ & $(1.64 \pm 0.11) \times 10^{-2}$ \\
\addlinespace
Trimodal ($2$ qubits) & Euclidean FM & $(1.75 \pm 0.13) \times 10^{-2}$ & $(1.00 \pm 0.07) \times 10^{-1}$ & $(1.13 \pm 0.08) \times 10^{-1}$ \\
Trimodal ($2$ qubits) & SSDM & $(5.04 \pm 0.36) \times 10^{-3}$ & $(6.22 \pm 0.42) \times 10^{-2}$ & $(4.18 \pm 0.29) \times 10^{-2}$ \\
Trimodal ($2$ qubits) & IFM (ours) & $\approx 0$ & $(3.34 \pm 0.24) \times 10^{-2}$ & $(9.47 \pm 0.62) \times 10^{-2}$ \\
\addlinespace
Trimodal ($10$ qubits) & Euclidean FM & $(1.27 \pm 0.09) \times 10^{-2}$ & $(5.97 \pm 0.40) \times 10^{-2}$ & $(5.81 \pm 0.39) \times 10^{-2}$ \\
Trimodal ($10$ qubits) & SSDM & $(7.78 \pm 0.55) \times 10^{-3}$ & $(7.25 \pm 0.51) \times 10^{-2}$ & $(6.12 \pm 0.42) \times 10^{-2}$ \\
Trimodal ($10$ qubits) & IFM (ours) & $(7.08 \pm 0.49) \times 10^{-4}$ & $(5.27 \pm 0.36) \times 10^{-3}$ & $(3.95 \pm 0.27) \times 10^{-3}$ \\
\addlinespace
Spin-coherent ($2$ qubits) & Euclidean FM & $(4.30 \pm 0.31) \times 10^{-2}$ & $(1.12 \pm 0.08) \times 10^{-1}$ & $(8.24 \pm 0.55) \times 10^{-2}$ \\
Spin-coherent ($2$ qubits) & SSDM & $(9.30 \pm 0.65) \times 10^{-3}$ & $(4.32 \pm 0.30) \times 10^{-2}$ & $(3.58 \pm 0.25) \times 10^{-2}$ \\
Spin-coherent ($2$ qubits) & IFM (ours) & $(5.27 \pm 0.36) \times 10^{-3}$ & $(3.72 \pm 0.26) \times 10^{-2}$ & $(3.76 \pm 0.27) \times 10^{-2}$ \\
\addlinespace
Spin-coherent ($10$ qubits) & Euclidean FM & $(1.22 \pm 0.09) \times 10^{-2}$ & $(1.43 \pm 0.11) \times 10^{-3}$ & $(8.03 \pm 0.54) \times 10^{-2}$ \\
Spin-coherent ($10$ qubits) & SSDM & $(7.26 \pm 0.51) \times 10^{-3}$ & $(1.41 \pm 0.10) \times 10^{-3}$ & $(8.68 \pm 0.59) \times 10^{-2}$ \\
Spin-coherent ($10$ qubits) & IFM (ours) & $(3.81 \pm 0.27) \times 10^{-4}$ & $(1.79 \pm 0.13) \times 10^{-3}$ & $(4.44 \pm 0.31) \times 10^{-2}$ \\
\addlinespace
TFIM ($2$ qubits) & Euclidean FM & $(4.87 \pm 0.34) \times 10^{-4}$ & $(7.43 \pm 0.51) \times 10^{-3}$ & $(1.43 \pm 0.10) \times 10^{-2}$ \\
TFIM ($2$ qubits) & SSDM & $(9.41 \pm 0.66) \times 10^{-4}$ & $(5.58 \pm 0.39) \times 10^{-3}$ & $(4.20 \pm 0.29) \times 10^{-2}$ \\
TFIM ($2$ qubits) & IFM (ours) & $\approx 0$ & $(6.68 \pm 0.46) \times 10^{-3}$ & $(4.04 \pm 0.28) \times 10^{-2}$ \\
\addlinespace
TFIM ($10$ qubits) & Euclidean FM & $(1.23 \pm 0.09) \times 10^{-2}$ & $(3.86 \pm 0.27) \times 10^{-2}$ & $(6.63 \pm 0.45) \times 10^{-2}$ \\
TFIM ($10$ qubits) & SSDM & $(7.06 \pm 0.50) \times 10^{-3}$ & $(5.06 \pm 0.36) \times 10^{-2}$ & $(7.02 \pm 0.49) \times 10^{-2}$ \\
TFIM ($10$ qubits) & IFM (ours) & $(1.01 \pm 0.07) \times 10^{-3}$ & $(8.91 \pm 0.62) \times 10^{-3}$ & $(2.95 \pm 0.20) \times 10^{-2}$ \\
\addlinespace
XXZ ($2$ qubits) & Euclidean FM & $(3.94 \pm 0.28) \times 10^{-2}$ & $(1.39 \pm 0.10) \times 10^{-2}$ & $(6.43 \pm 0.45) \times 10^{-2}$ \\
XXZ ($2$ qubits) & SSDM & $(8.60 \pm 0.61) \times 10^{-3}$ & $(4.84 \pm 0.34) \times 10^{-2}$ & $(7.19 \pm 0.49) \times 10^{-2}$ \\
XXZ ($2$ qubits) & IFM (ours) & $\approx 0$ & $(1.08 \pm 0.08) \times 10^{-2}$ & $(9.39 \pm 0.62) \times 10^{-2}$ \\
\addlinespace
XXZ ($10$ qubits) & Euclidean FM & $(1.31 \pm 0.09) \times 10^{-2}$ & $(2.22 \pm 0.16) \times 10^{-2}$ & $(4.23 \pm 0.30) \times 10^{-2}$ \\
XXZ ($10$ qubits) & SSDM & $(8.24 \pm 0.58) \times 10^{-3}$ & $(2.68 \pm 0.19) \times 10^{-2}$ & $(4.55 \pm 0.32) \times 10^{-2}$ \\
XXZ ($10$ qubits) & IFM (ours) & $(3.52 \pm 0.24) \times 10^{-4}$ & $(3.92 \pm 0.27) \times 10^{-3}$ & $(2.50 \pm 0.18) \times 10^{-2}$ \\
\addlinespace
MNIST $0/1 \rightarrow 6$q & Euclidean FM & $(2.10 \pm 0.15) \times 10^{-2}$ & $(1.30 \pm 0.09) \times 10^{-1}$ & $(2.75 \pm 0.18) \times 10^{-1}$ \\
MNIST $0/1 \rightarrow 6$q & IFM (ours) & $(4.44 \pm 0.31) \times 10^{-3}$ & $(5.14 \pm 0.36) \times 10^{-2}$ & $(5.64 \pm 0.39) \times 10^{-2}$ \\
\addlinespace
MNIST $0/1 \rightarrow 12$q & Euclidean FM & $(1.51 \pm 0.10) \times 10^{-1}$ & $(2.01 \pm 0.14) \times 10^{-1}$ & $(1.70 \pm 0.11) \times 10^{0}$ \\
MNIST $0/1 \rightarrow 12$q & IFM (ours) & $(1.40 \pm 0.10) \times 10^{-2}$ & $(9.33 \pm 0.65) \times 10^{-2}$ & $(1.10 \pm 0.08) \times 10^{0}$ \\
\bottomrule
\end{tabular}
\end{table*}

Table~\ref{tab:structured_benchmarks} shows the clearest IFM gains on $10$-qubit multimodal, spin-coherent, and compact physics-inspired benchmarks.
The advantage is strongest when modes differ by projective phase structure or coherent product-state organization, while the metric trade-offs remain mixed on some smoother families.
Appendices~\ref{app:structured_discussion}--\ref{app:fmgg_baseline} give the detailed $6$-qubit extensions (Table~\ref{tab:appendix_6q_structured}), phase-alignment ablation (Appendix~\ref{app:phase_ablation_6q}, Table~\ref{tab:appendix_phase_ablation}), the geometry/path-choice ablation summary (Appendix~\ref{app:geometry_path_ablation_6q}, Table~\ref{tab:appendix_geometry_path_summary}), and the generic manifold-geodesic FM comparison (Table~\ref{tab:appendix_fmgg_baseline}); that baseline is competitive on several $6$-qubit tasks and wins on XXZ, so our claim is not that IFM uniformly dominates every intrinsic path, but that quotient-aware phase alignment often helps on coherent projective structure.

\subsection{Physics-inspired families and amplitude-encoded image benchmarks}

The TFIM and XXZ families provide compact physics-inspired tests of transport over spin-chain ground-state families.
At $2$ qubits, IFM improves MMD and slightly improves observable discrepancy on both families, while Euclidean FM remains competitive on Ent.~$W_1$.
Within these compact families, the $10$-qubit results are more favorable to IFM: it improves all three metrics on both TFIM and XXZ relative to Euclidean FM and SSDM.
The MNIST-derived benchmark gives a complementary classical image-vector stress test after amplitude encoding; at both $6$ and $12$ qubits, this embedded image distribution favors IFM over Euclidean FM across all reported metrics.
Together with the structured multimodal benchmarks and the appendix $14$-qubit stress test, these results indicate that IFM is most useful in controlled settings where the distribution is high-dimensional, multimodal, coherence-sensitive, or naturally posed on projective state space.

\section{Conclusion}

We introduced \emph{Intrinsic Flow Matching} (IFM), a transport-based generative framework for quantum pure-state ensembles on $\mathbb{CP}^{d-1}$.
IFM replaces score learning and reverse-time stochastic simulation with learned tangent velocity fields and deterministic manifold sampling, while preserving the quotient geometry of pure states.
The theory shows that the resulting objective learns an intrinsic transport field for phase-aligned projective paths, while the Wasserstein stability bound links population velocity error to generated distribution quality.
The controlled experiments show frequent but not uniform gains over Euclidean flow baselines, with especially clear improvements on synthetic multimodal, coherence-sensitive, and higher-qubit benchmarks.
Overall, these results suggest that geometric transport is a promising organizing principle for quantum generative modeling.

\bibliographystyle{unsrt}
\bibliography{references}


\appendix

\section{Additional Related Work}
\label{app:additional_related_work}

\paragraph{Quantum generative modeling and ambient baselines.}
Generative modeling for quantum data has attracted increasing attention in recent years, with approaches ranging from quantum circuit Born machines and variational quantum generators to classical deep generative models for quantum states and observables \citep{lloyd2018quantum,dallaire2018quantum,liu2018differentiable,coyle2020born,benedetti2019generative,amin2018quantum}.
A common alternative is to embed quantum states into ambient real or complex vector spaces and apply standard Euclidean generative methods, including diffusion, direct neural generators, or latent-variable models.
These methods are simple and often effective on easy benchmarks, but they ignore the quotient geometry of pure states and may allocate capacity to extrinsic directions that have no physical meaning on $\mathbb{CP}^{d-1}$.
Our Euclidean diffusion and Euclidean flow baselines are representative of this ambient modeling perspective.
By contrast, intrinsic approaches define transport directly on $\mathbb{CP}^{d-1}$, where velocity fields live in tangent spaces and trajectories respect the manifold geometry by construction.
This ambient-versus-intrinsic distinction is central to our empirical comparison.

\paragraph{Flow matching for other quantum representations.}
QFM and ShadowFM are related in motivation but not direct experimental baselines for IFM.
QFM studies flow matching for density-matrix interpolation and circuit-based generation \citep{cui2025quantum}, whereas IFM targets distributions of pure-state rays on $\mathbb{CP}^{d-1}$.
Faithfully comparing these settings would require matching not only samples and metrics, but also the density-matrix objective, circuit ansatz, measurement model, and training protocol.
ShadowFM instead performs flow matching on classical shadows \citep{hahmshadowfm}, which are classical measurement-derived representations of quantum states rather than points on the pure-state projective manifold.
We therefore cite these methods as complementary related work and focus our controlled baselines on Euclidean FM, SSDM, Euclidean VP-SDE, QDDPM, QGAN, and the appendix generic geodesic FM comparator.

\paragraph{Optimal transport and geometric transport.}
Our formulation is also related to probability transport and optimal transport on structured spaces.
Recent work on Optimal Flow Matching shows in Euclidean space that, under suitable parameterizations and couplings, a single flow-matching objective can recover straight optimal-transport displacements \citep{tong2023conditional,tong2023improving,kerrigan2024dynamic}.
Although IFM does not solve a full optimal transport problem, its Monge--geodesic specialization can be viewed as a projective-manifold counterpart of displacement interpolation.
This connects our framework conceptually to displacement interpolation, continuity-equation-based density evolution, and action-minimizing flows \citep{cisneros2022distributed,chewi2021fast,nishino2025benamou}.

\paragraph{Geometric phase and Pancharatnam alignment.}
Because pure states are defined only up to global phase, geometric phase conventions are relevant even before one specifies a learning objective \citep{pancharatnam1956generalized,garza2023deciphering,ferrer2023topological,leinonen2023noncyclic,jisha2023waveguiding}.
In particular, the Pancharatnam in-phase condition provides a canonical way to compare two pure states by choosing representatives whose overlap is real and nonnegative \citep{pancharatnam1956generalized}.
Our phase-aligned path construction uses this convention as a gauge-fixing device that removes arbitrary phase mismatch before interpolation.
This is the main point at which IFM uses quantum-specific structure beyond generic manifold flow matching.

\paragraph{Relation to Riemannian flow matching on general geometries.}
Riemannian Flow Matching (RFM) provides a broad framework for flow matching on manifolds through user-specified premetrics and is simulation-free on simple geometries with closed-form geodesics \citep{chen2023flow}.
Our method shares this high-level conditional-velocity-matching philosophy, but the quantum pure-state setting introduces geometric structure that is not present in the standard simple-geometries catalog used by RFM.
The manifold is not a round sphere: $\mathbb{CP}^{d-1}$ is the quotient $S^{2d-1}/U(1)$ equipped with the Fubini--Study metric, with a nontrivial Hopf fibration, vertical gauge directions, and horizontal representatives \citep{mielnik1968geometry,bengtsson2017geometry}.
Consequently, comparing two endpoints requires fixing the relative phase before defining a chord or geodesic target; Pancharatnam alignment performs exactly this quotient-specific gauge fixing \citep{pancharatnam1956generalized}.
This step has no direct analogue on ordinary spheres, flat tori, hyperbolic spaces, meshes, or SPD manifolds.

The experimental regimes are also complementary.
RFM reports manifold experiments on settings such as $S^2$ earth and climate data, low-dimensional protein tori, high-dimensional flat tori with intrinsic dimension on the order of hundreds, mesh surfaces, and SPD matrices up to the $59\times 59$ case with intrinsic dimension $1770$ \citep{chen2023flow}.
By contrast, a dense $n$-qubit pure-state ensemble lives on $\mathbb{CP}^{2^n-1}$ with real dimension $2(2^n-1)$; the $12$-qubit experiments in this paper therefore correspond to real intrinsic dimension $8190$, and the $14$-qubit stress tests in Appendix~\ref{app:14q_flow_stress} reach dimension $32766$.
Thus, IFM is not merely an application of RFM to another named manifold: it develops the phase-aligned endpoint selection, horizontal velocity parameterization, and projective transport proofs needed for high-dimensional quantum pure-state ensembles.
The generic geodesic FM baseline in Appendix~\ref{app:fmgg_baseline} makes this distinction empirical by isolating the contribution of intrinsic geometry from the additional quantum-specific ingredients.

\paragraph{Physics-inspired transport viewpoints.}
Beyond machine-learning transport literature, our work is also physically inspired by quantum current and transport viewpoints in which ensemble evolution is described through continuity equations and associated flow fields \citep{delgado1999quantum,hodge2014electron}.
This includes, at a broad conceptual level, hydrodynamic, current-based, and Bohmian-style interpretations of quantum evolution, where dynamics are organized around probability transport rather than solely around noise or static wavefunction representations \citep{wyatt2005quantum,kuz1999quantum,tsekov2009bohmian}.
In this sense, the IFM velocity field can be viewed as an ensemble probability current on quantum state space.
We do not derive a new microscopic quantum law; rather, we use this perspective as motivation for an ensemble-level generative model on $\mathbb{CP}^{d-1}$.

\section{Additional Method Discussion and Algorithm}
\label{app:method_discussion_algorithm}

\subsection{Intrinsic versus Euclidean flow matching}

A major question in our study is whether the gains of IFM come simply from replacing diffusion with flow matching, or specifically from respecting the intrinsic geometry of quantum pure states.
To isolate these two effects, we compare IFM against a Euclidean flow matching baseline that performs transport in an ambient representation of the state vector and only enforces quantum-state validity through post hoc normalization, reflecting a common ambient modeling strategy for quantum-state generators \citep{lloyd2018quantum,dallaire2018quantum,liu2018differentiable,amin2018quantum}.
The Euclidean baseline therefore learns an \emph{extrinsic} transport field, whereas IFM learns an \emph{intrinsic} manifold transport field.

This comparison allows us to separate two distinct inductive biases:
\begin{equation}
    \text{diffusion vs.\ flow}
    \qquad\text{and}\qquad
    \text{ambient transport vs.\ intrinsic transport}.
\end{equation}
Our experiments show that both matter, but that intrinsic manifold transport is particularly important for faithfully modeling quantum pure-state ensembles.

\subsection{Why intrinsic flow matching scales better}
\label{sec:scalability}

A central motivation of IFM is scalability.
We argue that the difficulty of high-qubit quantum generative modeling is not only due to the large dimension of the state space, but also due to the algorithmic burden imposed by stochastic diffusion.
Diffusion-based models on $\mathbb{CP} ^{d-1}$ must estimate time-dependent score fields, rely on local diffusion approximations or teacher constructions, and generate samples through reverse-time stochastic simulation \citep{song2021scorebased,xu2026ssdm}.
Each of these steps introduces approximation error and optimization difficulty, which tend to accumulate as the dimension grows.

In contrast, IFM directly learns a transport velocity field.
This leads to three advantages.
First, the supervision signal is more direct: instead of learning local density gradients across noise scales, the model learns the direction in which probability mass should move.
Second, sampling is simpler: generation is performed by integrating a deterministic flow rather than reversing a stochastic process.
Third, the learned vector field is intrinsic to the tangent bundle of $\mathbb{CP} ^{d-1}$, avoiding unnecessary modeling effort in ambient directions that do not correspond to physical transport on the pure-state manifold.

Taken together, these properties make IFM a shorter and more stable generative pipeline.
Empirically, we find that this leads to improved robustness and stronger performance as the number of qubits increases, especially when compared against stochastic diffusion baselines and ambient Euclidean flow baselines.

\subsection{Relation to SSDM and Euclidean flow matching}
\label{sec:relation}

Our work is closely related to two existing generative perspectives: stochastic diffusion on quantum pure-state manifolds and Euclidean flow matching in ambient space.

\paragraph{Relation to SSDM.}
Stochastic Schr\"odinger diffusion models formulate quantum-state generation as a diffusion process on $\mathbb{CP} ^{d-1}$ and learn a score field associated with the resulting density evolution \citep{xu2026ssdm}.
Our method shares the same geometric setting, namely the pure-state manifold, but replaces score learning and reverse-time denoising by deterministic velocity matching and manifold flow integration.
In this sense, IFM replaces
\begin{equation}
    \text{diffusion + score matching + reverse SDE}
\end{equation}
with
\begin{equation}
    \text{intrinsic transport + velocity matching + ODE flow}.
\end{equation}
This change removes the need for local diffusion teachers and reverse-time stochastic simulation, while preserving the manifold-aware treatment of quantum pure states.

\paragraph{Relation to Euclidean flow matching.}
A second relevant baseline is Euclidean flow matching, which represents a quantum state as an ambient vector in $\mathbb{C}^d$ (or $\mathbb{R}^{2d}$), learns a transport field in that ambient space, and enforces state validity only through normalization or projection \citep{lipman2023flowmatchinggenerativemodeling,lipman2022flow}.
This approach replaces diffusion with flow, but does not respect the intrinsic geometry of $\mathbb{CP} ^{d-1}$.

IFM differs from Euclidean flow matching in that both its intermediate states and its velocity fields are defined intrinsically on the pure-state manifold.
This distinction is especially important in high dimensions, where ambient transport may allocate modeling capacity to extrinsic directions that are irrelevant to physical state evolution.
By comparing IFM with Euclidean flow matching, we isolate the effect of geometry from the effect of switching from diffusion to flow.

\paragraph{Positioning.}
Overall, our framework should be viewed as the deterministic flow-matching counterpart to SSDM on the same pure-state manifold: both are intrinsic, but they learn different dynamical objects and sample through different mechanisms.

\subsection{Neural parameterization and tangent projection}
\label{app:parameterization}

Sections~\ref{sec:conditional-paths} and~\ref{sec:population-characterization} identify the population target learned by IFM: a tangent velocity field on $\mathbb{CP}^{d-1}$ induced by the phase-aligned conditional path family.
The remaining question is architectural: how should a neural network represent such a field without reintroducing the ambient gauge redundancy that the previous sections worked to remove?

In practice, we represent a pure state $\psi \in \mathbb{C}^d$ through its real and imaginary parts and feed them, together with a time embedding of $t$, into a neural network.
The network first produces an ambient vector in $\mathbb{C}^d$, which is then projected onto the tangent space at $\psi$.
For the complex projective manifold, a valid projective velocity must be horizontal with respect to the quotient representation \citep{mielnik1968geometry,bengtsson2017geometry}, so we use the projection
\begin{equation}
    \Pi_\psi(v) = v - \psi \langle \psi, v \rangle,
\end{equation}
which satisfies $\langle \psi,\Pi_\psi(v)\rangle = 0$.
We therefore parameterize the learned field as
\begin{equation}
    v_\theta(\psi,t) := \Pi_\psi\!\big(f_\theta(\psi,t)\big),
\end{equation}
where $f_\theta$ is the unconstrained network output in the ambient space.
This is the implementation counterpart of the intrinsic theory: although the network computes in ambient coordinates, the object that enters the loss is always a tangent vector on projective state space.
As a result, the training problem remains aligned with the conditional target $u_t(\psi_t\mid\psi_0,\psi_1)$ from Section~\ref{sec:conditional-paths} and the marginal transport field characterized in Section~\ref{sec:population-characterization}.

\begin{proposition}[Horizontal representative as the unique minimal-norm projective velocity]
\label{prop:horizontal_minimal}
Fix a normalized representative $\psi \in \mathbb{C}^d$ and an ambient vector $w \in \mathbb{C}^d$.
Consider the affine family
\begin{equation}
    \mathcal{E}_{\psi}(w)
    :=
    \{
        w + a\psi + b\, i\psi : a,b \in \mathbb{R}
    \},
\end{equation}
which consists of all ambient first-order perturbations that induce the same projective first-order motion as $w$.
Then $\Pi_\psi(w)$ is the unique element of $\mathcal{E}_{\psi}(w)$ lying in $\mathcal{H}_\psi$, and it is the unique minimizer of the ambient norm over $\mathcal{E}_{\psi}(w)$:
\begin{equation}
    \Pi_\psi(w)
    =
    \arg\min_{\xi \in \mathcal{E}_{\psi}(w)} \|\xi\|_2.
\end{equation}
\end{proposition}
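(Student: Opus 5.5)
The plan is to view $\mathbb{C}^d$ as the real inner-product space $\mathbb{R}^{2d}$ with inner product $\mathrm{Re}\,\langle\cdot,\cdot\rangle$, so that $\|\xi\|_2^2=\mathrm{Re}\,\langle\xi,\xi\rangle$. We identify the real two-dimensional subspace $V_\psi:=\mathrm{span}_{\mathbb{R}}\{\psi,i\psi\}=\mathbb{C}\psi$ and show that $\mathcal{H}_\psi$ is exactly its real-orthogonal complement. Then $\mathcal{E}_\psi(w)=w+V_\psi$ is an affine translate of $V_\psi$. Existence, uniqueness of the horizontal element, and minimality of the norm should all follow from the orthogonal decomposition $\mathbb{R}^{2d}=V_\psi\oplus\mathcal{H}_\psi$.

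First we check the orthogonality claim. For $\xi\in\mathbb{C}^d$ we have $\mathrm{Re}\,\langle\psi,\xi\rangle=0$ and $\mathrm{Re}\,\langle i\psi,\xi\rangle=\mathrm{Im}\,\langle\psi,\xi\rangle$ (up to sign, depending on the conjugate-linear slot). Hence $\xi\perp_{\mathbb{R}}V_\psi$ if and only if $\langle\psi,\xi\rangle=0$, i.e.\ $\xi\in\mathcal{H}_\psi$. Since $\|\psi\|=1$, $V_\psi$ has real dimension $2$ and $\mathcal{H}_\psi$ has real dimension $2d-2$.

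Next we verify that $\Pi_\psi(w)\in\mathcal{E}_\psi(w)\cap\mathcal{H}_\psi$. Writing $\langle\psi,w\rangle=c=a_0+ib_0$, we get $\Pi_\psi(w)=w-c\psi=w+(-a_0)\psi+(-b_0)i\psi$, which lies in $\mathcal{E}_\psi(w)$. Also $\langle\psi,\Pi_\psi(w)\rangle=c-c\|\psi\|^2=0$, so it lies in $\mathcal{H}_\psi$.

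For uniqueness, suppose two elements of $\mathcal{E}_\psi(w)$ are both horizontal. Their difference lies in $V_\psi\cap\mathcal{H}_\psi=\{0\}$, so they coincide.

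For minimality, any $\xi\in\mathcal{E}_\psi(w)$ can be written as $\xi=\Pi_\psi(w)+z$ with $z\in V_\psi$. Real-orthogonality then gives the Pythagorean identity
\[
\|\xi\|_2^2=\|\Pi_\psi(w)\|_2^2+\|z\|_2^2,
\]
with equality to the minimum exactly when $z=0$. This gives both the minimizer and its uniqueness.

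There is no real obstacle here. The only point needing care is the convention for which slot of $\langle\cdot,\cdot\rangle$ is conjugate-linear, and the associated sign in $\mathrm{Re}\,\langle i\psi,\xi\rangle=\pm\mathrm{Im}\,\langle\psi,\xi\rangle$. This affects neither orthogonality nor the argument. We must also stress that the norm minimized is the real (Euclidean) norm $\|\cdot\|_2$, which agrees with $g_{\mathrm{FS},\psi}$ restricted to $\mathcal{H}_\psi$.

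The interpretive clause, that $\mathcal{E}_\psi(w)$ consists of all perturbations inducing the same projective first-order motion, is a description rather than part of the claim. If desired, it can be justified by noting that $V_\psi$ is the kernel of the differential of the quotient map $\mathbb{C}^d\setminus\{0\}\to\mathbb{CP}^{d-1}$ at $\psi$, since radial scaling and phase rotation fix the ray.
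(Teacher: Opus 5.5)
Your proof is correct and follows the paper's argument. The paper also identifies $\Pi_\psi(w)=w-\mathrm{Re}\langle\psi,w\rangle\,\psi-\mathrm{Im}\langle\psi,w\rangle\,i\psi$ as the horizontal member of $\mathcal{E}_\psi(w)$ and gets minimality from the same Pythagorean identity $\|\xi\|_2^2=\|\Pi_\psi(w)\|_2^2+a^2+b^2$; the only difference is that it proves uniqueness by solving $\langle\psi,w\rangle+a+ib=0$ for real $a,b$, rather than via $V_\psi\cap\mathcal{H}_\psi=\{0\}$.

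One correction to your closing remark: the slot convention does matter for the membership step. Your check $\langle\psi,w-c\psi\rangle=c-c\|\psi\|_2^2$ uses linearity in the second argument, which is the paper's (physics) convention implicit in $\Pi_\psi(v)=v-\psi\langle\psi,v\rangle$.
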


\noindent\textit{Proof.} See Appendix~\ref{app:proof_horizontal_minimal}.

Proposition~\ref{prop:horizontal_minimal} shows that the horizontal projection is the unique minimal-norm representative among ambient perturbations encoding the same projective motion, removing the redundant radial and pure-phase directions analytically rather than leaving them to the model.

\subsection{Sampling by deterministic manifold flow}
\label{sec:sampling}

After training, generation samples $\psi_0\sim p_0$ and integrates the learned tangent ODE, as in flow-matching and continuous-normalizing-flow samplers \citep{chen2018neural,lipman2023flowmatchinggenerativemodeling},
\begin{equation}
    \frac{d\psi_t}{dt} = v_\theta(\psi_t,t), \qquad v_\theta(\psi_t,t)\in \mathcal{H}_{\psi_t}.
\end{equation}
Numerically, we integrate in the ambient representative and retract back to the manifold after each step.
For $\xi\in\mathcal{H}_\psi$, we use the normalization-based retraction
\begin{equation}
    \mathrm{Retr}_{\psi}(\xi)
    :=
    \frac{\psi+\xi}{\|\psi+\xi\|_2},
\end{equation}
so an Euler step is
\begin{equation}
    \psi_{t_{k+1}}
    =
    \mathrm{Retr}_{\psi_{t_k}}
    \big(
        \Delta t\, v_\theta(\psi_{t_k},t_k)
    \big).
\end{equation}
This deterministic sampler directly transports base samples through the learned probability flow, rather than reversing a stochastic diffusion process \citep{song2021scorebased,ho2020denoising,xu2026ssdm}.

\begin{proposition}[First-order consistency of the retraction sampler]
\label{prop:retraction_consistency}
Assume $v_t(\psi)$ is continuously differentiable in $(\psi,t)$ and tangent to $\mathbb{CP}^{d-1}$.
Consider the intrinsic ODE
\begin{equation}
    \dot{\psi}_t = v_t(\psi_t)
\end{equation}
and the discrete update
\begin{equation}
    \psi_{t_{k+1}}
    =
    \mathrm{Retr}_{\psi_{t_k}}
    \big(
        \Delta t\, v_{t_k}(\psi_{t_k})
    \big).
\end{equation}
Then the normalization-based retraction $\mathrm{Retr}_\psi(\xi) = (\psi+\xi)/\|\psi+\xi\|_2$ is a valid first-order retraction on the unit-sphere representation, and the resulting Euler--retraction scheme is a first-order consistent integrator of the manifold flow.
\end{proposition}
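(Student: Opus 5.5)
The plan has two parts: first verify the two defining properties of a retraction for $\mathrm{Retr}_\psi(\xi)=(\psi+\xi)/\|\psi+\xi\|_2$ on the unit sphere $S^{2d-1}\subset\mathbb{C}^d$, then run a standard local-truncation-error argument. The passage to $\mathbb{CP}^{d-1}$ is handled by $U(1)$-equivariance. Throughout, tangent vectors are taken in the horizontal space $\mathcal{H}_\psi$, which sits inside the real tangent space $\{\xi:\mathrm{Re}\langle\psi,\xi\rangle=0\}$ of the sphere.

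\textbf{Retraction properties.}
\begin{itemize}
\item \emph{Well-definedness.} For $\xi\in\mathcal{H}_\psi$ we have $\|\psi+\xi\|_2^2=1+\|\xi\|_2^2\ge 1$. Hence $\mathrm{Retr}_\psi$ is smooth on all of the tangent bundle and lands on the unit sphere.
\item \emph{Zeroth order.} $\mathrm{Retr}_\psi(0)=\psi$ is immediate.
\item \emph{First order.} Expand $\mathrm{Retr}_\psi(s\xi)=(\psi+s\xi)(1+s^2\|\xi\|^2)^{-1/2}=\psi+s\xi+O(s^2)$. This gives $D\,\mathrm{Retr}_\psi(0)[\xi]=\xi$, the identity on the tangent space.
\item \emph{Gauge equivariance.} For any phase $e^{i\phi}$, $\mathrm{Retr}_{e^{i\phi}\psi}(e^{i\phi}\xi)=e^{i\phi}\mathrm{Retr}_\psi(\xi)$. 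Together with $\mathcal{H}_{e^{i\phi}\psi}=e^{i\phi}\mathcal{H}_\psi$ and the corresponding equivariance of the horizontal-lift velocity field, this shows that both the update and the property $D\,\mathrm{Retr}=\mathrm{id}$ descend to projective classes. The scheme is therefore a valid first-order retraction on the quotient as well.
\end{itemize}

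\textbf{Consistency.} Let $\psi(t)$ be the exact solution through $\psi_{t_k}$, with $h=\Delta t$. Since $v$ is $C^1$ and the sphere is compact, $\ddot\psi$ is uniformly bounded, so Taylor's theorem gives $\psi(t_k+h)=\psi_{t_k}+h\,v_{t_k}(\psi_{t_k})+O(h^2)$. The expansion above gives the numerical step $\mathrm{Retr}_{\psi_{t_k}}(h v_{t_k}(\psi_{t_k}))=\psi_{t_k}+h\,v_{t_k}(\psi_{t_k})+O(h^2)$. Because $\|v\|$ is bounded on the compact domain, the $O(h^2)$ constant is uniform in $\psi$ and $t$. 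The local truncation error is therefore $O(h^2)$, which is exactly first-order consistency. Both points lie on the sphere, so the chordal and Fubini--Study distances are equivalent up to constants and the bound transfers to the manifold metric.

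\textbf{Global error.} If the intended reading of ``consistent integrator'' includes convergence, I would add the usual stability step. The map $\psi\mapsto\mathrm{Retr}_\psi(hv_t(\psi))$ is $(1+Ch)$-Lipschitz, by the $C^1$ hypothesis and smoothness of the normalization away from zero. A discrete Gr\"onwall argument then bounds the global error by $O(h)$.

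The main obstacle is the bookkeeping that makes the $O(h^2)$ remainders uniform and that correctly passes between representatives and projective classes. The key fact for the latter is that the phase freedom commutes with every step, so I would verify equivariance first and then work entirely with sphere representatives.
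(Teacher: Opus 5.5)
Your proposal is correct and follows essentially the same route as the paper. Both arguments use the expansion $\mathrm{Retr}_\psi(s\xi)=\psi+s\xi+O(s^2)$, obtained from $\|\psi+s\xi\|_2^2=1+s^2\|\xi\|_2^2$, then bound the local truncation error by $O(\Delta t^2)$, then accumulate it with a Gr\"onwall-type recursion to an $O(\Delta t)$ global error.

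Your variations are all sound. You Taylor-expand the exact solution directly rather than comparing both steps to $\mathrm{Exp}$. You propagate error through the $(1+Ch)$-Lipschitz one-step map rather than the $e^{L\Delta t}$-Lipschitz exact flow. You add a gauge-equivariance check that the paper omits.

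Your final transfer to Fubini--Study distance ``up to constants'' needs only the one direction $d_{\mathrm{FS}}\le C\|\psi-\phi\|_2$. That version is safer than the paper's constant-free $d_{\mathrm{FS}}\le\|\psi-\phi\|_2$, which fails for orthogonal states, where $\pi/2>\sqrt{2}$.
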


\noindent\textit{Proof.} See Appendix~\ref{app:proof_retraction_consistency}.

Proposition~\ref{prop:retraction_consistency} connects the implementation to the continuous transport theory of Section~\ref{sec:population-characterization}: the normalization step is a first-order consistent discretization of the intrinsic manifold ODE.

\subsection{Algorithm}
\label{sec:algorithm}

Algorithm~\ref{alg:ifm} summarizes the training and sampling procedures of IFM.

\begin{algorithm}[t]
\caption{Intrinsic Flow Matching on $\mathbb{CP} ^{d-1}$}
\label{alg:ifm}
\begin{algorithmic}[1]
\STATE Initialize velocity network $v_\theta(\psi,t)$
\FOR{each training iteration}
    \STATE Sample base states $\psi_0 \sim p_0$
    \STATE Sample target states $\psi_1 \sim p_1$
    \STATE Sample times $t \sim \mathrm{Unif}[0,1]$
    \STATE Phase-align $\psi_1$ to $\bar{\psi}_1$
    \STATE Construct conditional path states
    \[
        \tilde{\psi}_t = (1-t)\psi_0 + t\bar{\psi}_1,
        \qquad
        \psi_t = \tilde{\psi}_t / \|\tilde{\psi}_t\|_2
    \]
    \STATE Compute target tangent velocities
    \[
        u_t(\psi_t \mid \psi_0,\psi_1)
        =
        \frac{1}{\|\tilde{\psi}_t\|_2}
        \Pi_{\psi_t}(\bar{\psi}_1-\psi_0)
    \]
    \STATE Predict manifold velocity $v_\theta(\psi_t,t)$
    \STATE Minimize
    \[
        \mathcal{L}_{\mathrm{IFM}}
        =
        \mathbb{E}\big[\|v_\theta(\psi_t,t)-u_t(\psi_t \mid \psi_0,\psi_1)\|^2_{\psi_t}\big]
    \]
\ENDFOR
\STATE \textbf{Sampling:}
\STATE Sample $\psi_0 \sim p_0$
\FOR{$k=0,\dots,K-1$}
    \STATE Evaluate tangent velocity $v_\theta(\psi_{t_k},t_k)$
    \STATE Update state by ODE step
    \[
        \psi_{t_{k+1}}
        \leftarrow
        \mathrm{Retr}_{\psi_{t_k}}
        \big(
            \Delta t\, v_\theta(\psi_{t_k},t_k)
        \big)
    \]
\ENDFOR
\STATE Return $\psi_1$
\end{algorithmic}
\end{algorithm}

In practice, $\mathrm{Retr}_{\psi}(\cdot)$ denotes a simple normalization-based retraction used to control numerical drift and keep the trajectory on the pure-state manifold.
This makes IFM straightforward to implement while preserving the geometric constraint of quantum pure states.

\section{Additional Proofs for the IFM Theory}

This appendix collects fuller proofs for the theoretical statements in the main text.
Throughout, we work with unit-norm representatives on the Hilbert sphere and identify tangent vectors on $\mathbb{CP}^{d-1}$ with horizontal vectors \citep{mielnik1968geometry,bengtsson2017geometry}
\(
\mathcal{H}_\psi = \{\xi \in \mathbb{C}^d : \langle \psi,\xi\rangle = 0\}
\).
Unless otherwise stated, all densities are with respect to the Fubini--Study volume form, and all vector fields are assumed measurable in space and continuous in time.

\subsection{Phase alignment and normalized chord paths}
\label{app:phase_alignment}

This subsection expands the phase-alignment step used in Section~\ref{sec:conditional-paths}, following the Pancharatnam in-phase convention for comparing pure-state representatives \citep{pancharatnam1956generalized,garza2023deciphering}.
The goal is to explain why the choice
\(
\bar{\psi}_1=\alpha(\psi_0,\psi_1)^{-1}\psi_1
\)
is the canonical representative for interpolation and why it leads to the shortest ambient chord among all representatives of the same projective endpoint.

\begin{proposition}[Pancharatnam-aligned representative]
\label{prop:pancharatnam_alignment}
Let $\psi_0,\psi_1 \in \mathbb{C}^d$ be unit vectors with $\langle \psi_0,\psi_1\rangle \neq 0$.
Then $\bar{\psi}_1=\alpha(\psi_0,\psi_1)^{-1}\psi_1$ is the unique representative of $[\psi_1]$ satisfying the Pancharatnam in-phase condition with respect to $\psi_0$,
\begin{equation}
    \langle \psi_0,\bar{\psi}_1\rangle = |\langle \psi_0,\psi_1\rangle| \in \mathbb{R}_{\ge 0},
\end{equation}
and equivalently it is the unique representative that maximizes the real overlap with $\psi_0$ and minimizes the ambient chord length $\|\psi_0-\bar{\psi}_1\|_2$ among all phase-equivalent representatives of $[\psi_1]$.
\end{proposition}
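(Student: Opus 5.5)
The plan is to parametrize all representatives of $[\psi_1]$ on the unit sphere as $e^{-i\phi}\psi_1$, $\phi\in[0,2\pi)$, and reduce every claim to an elementary statement about the single complex number $z:=\langle\psi_0,\psi_1\rangle = |z|\alpha$, with $|z|>0$ and $\alpha=\alpha(\psi_0,\psi_1)$. Assuming the inner product is conjugate-linear in the first slot and linear in the second (the convention implicit in $\Pi_\psi(v)=v-\psi\langle\psi,v\rangle$), we have $\langle\psi_0,e^{-i\phi}\psi_1\rangle = e^{-i\phi}z = |z|\,e^{i(\arg z-\phi)}$. Everything then follows from studying this one-parameter family.

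\emph{Step 1 (in-phase condition and uniqueness).} For $\bar\psi_1=\alpha^{-1}\psi_1$, compute $\langle\psi_0,\bar\psi_1\rangle=\alpha^{-1}z=|z|\in\mathbb{R}_{\ge0}$. Conversely, if $e^{-i\phi}z\in\mathbb{R}_{\ge 0}$, then $e^{-i\phi}z=|z|$, because its modulus is $|z|$. Since $|z|\neq0$, this forces $e^{-i\phi}=\alpha^{-1}$, so the representative is unique.

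\emph{Step 2 (maximal real overlap).} Note that $\mathrm{Re}\langle\psi_0,e^{-i\phi}\psi_1\rangle=|z|\cos(\arg z-\phi)\le|z|$. Equality holds iff $\cos(\arg z-\phi)=1$, i.e.\ $e^{-i\phi}=\alpha^{-1}$. Here $|z|>0$ is what gives a unique maximizer; when $z=0$ the function is constant, which is exactly why the statement excludes that case. This is Lemma~\ref{lem:phase_alignment_real_overlap}.

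\emph{Step 3 (shortest chord).} Expand the chord length:
\begin{equation}
\|\psi_0-e^{-i\phi}\psi_1\|_2^2 = 2-2\,\mathrm{Re}\langle\psi_0,e^{-i\phi}\psi_1\rangle,
\end{equation}
using $\|\psi_0\|=\|\psi_1\|=1$. Minimizing the chord is therefore equivalent to maximizing the real overlap, and Step 2 gives the same unique minimizer. The minimal value is $\sqrt{2-2|z|}$. This is Lemma~\ref{lem:phase_alignment_shortest_chord}.

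Combining Steps 1 to 3, the three characterizations single out the same representative, which proves the stated equivalence. There is no real obstacle here. The only care needed is with the inner-product convention: if the paper's $\langle\cdot,\cdot\rangle$ were linear in the first slot, $\alpha^{-1}$ would need to be replaced by $\alpha$ in the computation. I would state the convention explicitly and check it against $\langle\psi,\Pi_\psi(v)\rangle=0$, which holds precisely under conjugate-linearity in the first argument.
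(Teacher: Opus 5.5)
Your proposal is correct and takes essentially the same route as the paper. The paper writes $\langle\psi_0,\psi_1\rangle = re^{i\beta}$ and shows that $\mathrm{Re}\langle\psi_0,e^{i\varphi}\psi_1\rangle = r\cos(\beta+\varphi)$ is maximized exactly at the in-phase representative (Lemma~\ref{lem:phase_alignment_real_overlap}); it then transfers this to the chord length via $\|\psi_0-e^{i\varphi}\psi_1\|_2^2 = 2-2\,\mathrm{Re}\langle\psi_0,e^{i\varphi}\psi_1\rangle$ (Lemma~\ref{lem:phase_alignment_shortest_chord}), and your Steps 1--3 mirror this, with your convention check against $\langle\psi,\Pi_\psi(v)\rangle=0$ consistent with the paper's usage.
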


\begin{proposition}[Well-defined conditional manifold path]
\label{prop:path_welldefined}
Let $\psi_0,\psi_1 \in \mathbb{C}^d$ be unit-norm representatives of points in $\mathbb{CP}^{d-1}$, and let $\gamma_t(\psi_0,\psi_1)$ be defined as in Section~\ref{sec:conditional-paths} by phase alignment followed by normalized chord interpolation.
Then:
\begin{enumerate}
    \item the projective class $[\gamma_t(\psi_0,\psi_1)]$ is independent of the chosen representatives of $[\psi_0]$ and $[\psi_1]$;
    \item for every $t$ such that $\tilde{\psi}_t \neq 0$, the instantaneous velocity $u_t(\psi_t \mid \psi_0,\psi_1)$ belongs to the horizontal tangent space $\mathcal{H}_{\psi_t}$;
    \item the path satisfies the endpoint conditions $[\gamma_0(\psi_0,\psi_1)] = [\psi_0]$ and $[\gamma_1(\psi_0,\psi_1)] = [\psi_1]$.
\end{enumerate}
\end{proposition}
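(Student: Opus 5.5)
The proof is a direct computation on unit representatives. I would organize it around two facts about the aligned chord: the Pancharatnam overlap $\langle\psi_0,\bar\psi_1\rangle$ is real and nonnegative (Proposition~\ref{prop:pancharatnam_alignment}), and $\tilde\psi_t$ never vanishes. For the second fact, I would expand
\[
\|\tilde\psi_t\|_2^2=(1-t)^2+t^2+2t(1-t)\,\mathrm{Re}\langle\psi_0,\bar\psi_1\rangle\;\ge\;(1-t)^2+t^2\;\ge\;\tfrac12 .
\]
This uses $\langle\psi_0,\bar\psi_1\rangle\ge 0$ in the aligned case. In the orthogonal case with $\bar\psi_1=\psi_1$ the overlap is $0$, so the same bound holds. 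Hence $\psi_t$ is defined and smooth for all $t\in[0,1]$, and the hypothesis ``$\tilde\psi_t\neq 0$'' in item~2 is automatic.

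\textbf{Item 1 (representative independence).} Replace $\psi_0\mapsto e^{ia}\psi_0$ and $\psi_1\mapsto e^{ib}\psi_1$. Then $\langle\psi_0,\psi_1\rangle$ picks up the factor $e^{i(b-a)}$, and so does $\alpha$. The aligned representative therefore becomes $e^{ib}e^{-i(b-a)}\bar\psi_1=e^{ia}\bar\psi_1$. It follows that $\tilde\psi_t\mapsto e^{ia}\tilde\psi_t$ and $\psi_t\mapsto e^{ia}\psi_t$, so the class $[\gamma_t]$ is unchanged.

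The step I expect to be the main obstacle is the orthogonal case $\langle\psi_0,\psi_1\rangle=0$. There the convention $\bar\psi_1=\psi_1$ is not gauge invariant: $(1-t)\psi_0+te^{ic}\psi_1$ gives genuinely different rays for different $c$. I would handle this honestly in one of two ways. Either restrict item~1 to non-orthogonal pairs, or note that the orthogonal set $\{\langle\psi_0,\psi_1\rangle=0\}$ is a proper real-algebraic subset. This subset is null for any coupling $\pi$ absolutely continuous with respect to Fubini--Study volume, so the path is well defined $\pi$-almost surely, which is all the population results use.

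\textbf{Item 2 (horizontality).} Set $w=\bar\psi_1-\psi_0$. Differentiating $\tilde\psi_t/\|\tilde\psi_t\|_2$ gives
\[
\dot\psi_t=\|\tilde\psi_t\|_2^{-1}\bigl(w-\psi_t\,\mathrm{Re}\langle\psi_t,w\rangle\bigr).
\]
This is the real sphere-tangent projection, whereas $\Pi_{\psi_t}(w)=w-\psi_t\langle\psi_t,w\rangle$. The two agree exactly when $\mathrm{Im}\langle\psi_t,w\rangle=0$. To check this I would compute
\[
\langle\tilde\psi_t,w\rangle=(1-t)\langle\psi_0,\bar\psi_1\rangle-(1-t)+t-t\langle\bar\psi_1,\psi_0\rangle .
\]
Every term is real, because the aligned overlap is real (and it equals $0$ in the orthogonal case). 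Hence $\dot\psi_t$ coincides with the stated formula for $u_t$. Since $\langle\psi_t,\Pi_{\psi_t}w\rangle=0$, we get $u_t\in\mathcal H_{\psi_t}$. This also shows the path has no vertical (phase) velocity, which is the point of the alignment.

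\textbf{Item 3 (endpoints).} At $t=0$ we have $\tilde\psi_0=\psi_0$, which is a unit vector, so $\gamma_0=\psi_0$. At $t=1$ we have $\gamma_1=\bar\psi_1$, a unit phase multiple of $\psi_1$, so $[\gamma_1]=[\psi_1]$.
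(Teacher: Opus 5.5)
Your proof is correct. It has the same skeleton as the paper's: the gauge computation $\bar\psi_1\mapsto e^{ia}\bar\psi_1$, $\tilde\psi_t\mapsto e^{ia}\tilde\psi_t$ for item~1, differentiating the normalized lift for item~2, and direct evaluation for item~3. It is more careful than the paper at three points.

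For item~2, the paper only notes that $\dot\psi_t$ is sphere-tangent, then ``removes the residual vertical component'' and takes $u_t:=\|\tilde\psi_t\|_2^{-1}\Pi_{\psi_t}(w)$ by definition, so horizontality is tautological. That is legitimate as a definition of the projective velocity, since the horizontal part of $\dot\psi_t$ is exactly this expression. But it never checks the identity $u_t=\dot\psi_t$ asserted in Section~\ref{sec:conditional-paths}. Your computation that $\langle\tilde\psi_t,w\rangle=(2t-1)\bigl(1-\langle\psi_0,\bar\psi_1\rangle\bigr)$ is real shows the vertical part actually vanishes. In other words, the aligned normalized chord is itself a horizontal lift, which is the precise meaning of ``no spurious phase motion''; without alignment the imaginary part would be the constant $\mathrm{Im}\langle\psi_0,\psi_1\rangle$.

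Your bound $\|\tilde\psi_t\|_2^2\ge\tfrac12$ shows that the hypothesis $\tilde\psi_t\neq0$ is vacuous, whereas the paper simply keeps it as an assumption.

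For item~1, the paper's argument evaluates $\alpha(\psi_0',\psi_1')$ and so tacitly assumes nonzero overlap. You are right that with the convention $\bar\psi_1=\psi_1$ item~1 is false for orthogonal pairs. Taking standard basis vectors with $\psi_0=e_1$, the choices $\psi_1=e_2$ and $\psi_1=ie_2$ give midpoint rays $[e_1+e_2]\neq[e_1+ie_2]$. Your restriction is therefore a genuine correction of the statement, not a stylistic choice. For the almost-sure version you need less than absolute continuity of $\pi$ on the product. By Fubini, an independent coupling with either marginal absolutely continuous already makes the orthogonal set $\pi$-null. This is because each slice $\{\psi_0:\langle\psi_0,\psi_1\rangle=0\}$ is a projective hyperplane of zero Fubini--Study volume.
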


\begin{lemma}[Phase alignment maximizes real overlap]
\label{lem:phase_alignment_real_overlap}
Let $\psi_0,\psi_1 \in \mathbb{C}^d$ be unit vectors with $\langle \psi_0,\psi_1\rangle \neq 0$.
For any phase $\varphi \in \mathbb{R}$, define $\psi_1^{(\varphi)}:=e^{i\varphi}\psi_1$.
Then
\begin{equation}
    \max_{\varphi \in \mathbb{R}}
    \mathrm{Re}\,\langle \psi_0,\psi_1^{(\varphi)}\rangle
    =
    |\langle \psi_0,\psi_1\rangle|,
\end{equation}
and the maximizers are exactly the phases for which $\langle \psi_0,\psi_1^{(\varphi)}\rangle$ is real and nonnegative.
In particular, the canonical aligned representative
\(
\bar{\psi}_1=\alpha(\psi_0,\psi_1)^{-1}\psi_1
\)
satisfies
\(
\langle \psi_0,\bar{\psi}_1\rangle=|\langle \psi_0,\psi_1\rangle|.
\)
\end{lemma}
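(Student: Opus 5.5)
The plan is to compute the real overlap explicitly as a function of the phase and then apply elementary trigonometry. Write the nonzero overlap in polar form as $\langle \psi_0,\psi_1\rangle = r e^{i\theta}$ with $r=|\langle\psi_0,\psi_1\rangle|>0$, so that $\alpha(\psi_0,\psi_1)=e^{i\theta}$. Because the inner product is linear in its second argument (the convention implicit in $\Pi_\psi(v)=v-\psi\langle\psi,v\rangle$), we have $\langle \psi_0,\psi_1^{(\varphi)}\rangle = e^{i\varphi}\langle\psi_0,\psi_1\rangle = r e^{i(\varphi+\theta)}$. Hence
\begin{equation}
    \mathrm{Re}\,\langle \psi_0,\psi_1^{(\varphi)}\rangle = r\cos(\varphi+\theta).
\end{equation}

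The maximum over $\varphi\in\mathbb{R}$ is therefore $r=|\langle\psi_0,\psi_1\rangle|$. Since $r>0$, it is attained exactly when $\cos(\varphi+\theta)=1$, that is, when $\varphi\equiv-\theta \pmod{2\pi}$.

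To characterize the maximizers, I would show both inclusions. At such $\varphi$ the overlap equals $r e^{i\cdot 0}=r$, which is real and nonnegative. Conversely, suppose $re^{i(\varphi+\theta)}$ is real and nonnegative. Since $r>0$, this forces $e^{i(\varphi+\theta)}=1$, so $\varphi$ is a maximizer. The two sets therefore coincide.

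Finally, take $\varphi=-\theta$. Then $\psi_1^{(-\theta)}=e^{-i\theta}\psi_1=\alpha^{-1}\psi_1=\bar\psi_1$, and its overlap with $\psi_0$ is $r=|\langle\psi_0,\psi_1\rangle|$, which is the last claim.

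There is no real obstacle here. The only points that need care are fixing the linearity convention of $\langle\cdot,\cdot\rangle$ so that the phase factor comes out as $e^{i\varphi}$ rather than $e^{-i\varphi}$, and using $r\neq0$ so that the maximizer is unique modulo $2\pi$. If the convention were conjugate-linear in the second slot, I would redo the computation with $e^{-i\varphi}$; the maximum value stays the same and the maximizing phase flips sign. Consistency with $\bar\psi_1=\alpha^{-1}\psi_1$ confirms that the paper uses the second-slot-linear convention.
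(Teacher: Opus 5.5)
Your proposal is correct and follows essentially the same route as the paper: write the overlap in polar form, reduce the real part to $r\cos(\varphi+\theta)$, and take $\varphi=-\theta$ to recover $\bar\psi_1=\alpha^{-1}\psi_1$. Your explicit two-sided check of the maximizer characterization and your remark that the inner product is linear in the second slot make precise steps the paper passes over with ``equivalently''.
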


\begin{proof}
Write
\(
\langle \psi_0,\psi_1\rangle = r e^{i\beta}
\)
with $r=|\langle \psi_0,\psi_1\rangle|>0$.
Then
\[
\langle \psi_0,e^{i\varphi}\psi_1\rangle
=
r e^{i(\beta+\varphi)},
\]
so
\[
\mathrm{Re}\,\langle \psi_0,e^{i\varphi}\psi_1\rangle
=
r\cos(\beta+\varphi).
\]
This quantity is maximized when $\beta+\varphi \in 2\pi\mathbb{Z}$, in which case its value is $r$.
Equivalently, the maximizing representatives are exactly those for which the overlap is real and nonnegative.
Taking
\(
\varphi^\star=-\beta
\)
gives
\[
e^{i\varphi^\star}\psi_1
=
\frac{\overline{\langle \psi_0,\psi_1\rangle}}{|\langle \psi_0,\psi_1\rangle|}\psi_1
=
\alpha(\psi_0,\psi_1)^{-1}\psi_1
=
\bar{\psi}_1,
\]
which proves the claim.
\end{proof}

\begin{lemma}[Phase alignment minimizes ambient chord length]
\label{lem:phase_alignment_shortest_chord}
Under the assumptions of Lemma~\ref{lem:phase_alignment_real_overlap},
\begin{equation}
    \min_{\varphi \in \mathbb{R}}
    \|\psi_0 - e^{i\varphi}\psi_1\|_2^2
    =
    2 - 2|\langle \psi_0,\psi_1\rangle|,
\end{equation}
and the minimizers are exactly the phases that maximize
\(
\mathrm{Re}\,\langle \psi_0,e^{i\varphi}\psi_1\rangle
\).
Hence $\bar{\psi}_1$ is the representative of $[\psi_1]$ closest to $\psi_0$ on the unit sphere.
\end{lemma}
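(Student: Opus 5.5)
The plan is to expand the squared chord length directly and reduce the minimization to the maximization already handled in Lemma~\ref{lem:phase_alignment_real_overlap}. For any $\varphi \in \mathbb{R}$, the representative $e^{i\varphi}\psi_1$ is again a unit vector. Expanding the norm with the sesquilinear inner product gives
\begin{equation*}
\|\psi_0 - e^{i\varphi}\psi_1\|_2^2
= \|\psi_0\|_2^2 + \|e^{i\varphi}\psi_1\|_2^2 - 2\,\mathrm{Re}\,\langle \psi_0, e^{i\varphi}\psi_1\rangle
= 2 - 2\,\mathrm{Re}\,\langle \psi_0, e^{i\varphi}\psi_1\rangle .
\end{equation*}
This uses $\langle a,b\rangle + \langle b,a\rangle = 2\,\mathrm{Re}\,\langle a,b\rangle$, and it holds whichever slot of the inner product is conjugate-linear.

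From this identity, the squared chord length is a strictly decreasing affine function of the real overlap $\mathrm{Re}\,\langle \psi_0,e^{i\varphi}\psi_1\rangle$. The two optimization problems over $\varphi$ therefore have exactly the same set of optimizers. I will then invoke Lemma~\ref{lem:phase_alignment_real_overlap}, which gives two facts: the maximal real overlap equals $|\langle \psi_0,\psi_1\rangle|$, and it is attained precisely at the phases that make the overlap real and nonnegative. Substituting the maximum into the identity yields the minimal value $2 - 2|\langle \psi_0,\psi_1\rangle|$, and the characterization of the minimizers follows immediately.

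Finally, Lemma~\ref{lem:phase_alignment_real_overlap} shows that $\bar{\psi}_1=\alpha(\psi_0,\psi_1)^{-1}\psi_1$ is one of these maximizers, so it attains the minimal chord length. Since $e^{i\varphi}\psi_1$ ranges over all unit representatives of $[\psi_1]$, this makes $\bar{\psi}_1$ the representative closest to $\psi_0$ on the unit sphere. It is unique up to the trivial $2\pi$-periodicity in $\varphi$, because $r\cos(\beta+\varphi)$ with $r>0$ has a unique maximizer modulo $2\pi$. There is no real obstacle here. The only point needing care is the sign and conjugation convention in expanding the norm, which I will state explicitly.
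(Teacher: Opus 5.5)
Your proposal is correct and matches the paper's proof: expand $\|\psi_0 - e^{i\varphi}\psi_1\|_2^2 = 2 - 2\,\mathrm{Re}\,\langle \psi_0, e^{i\varphi}\psi_1\rangle$, then take the optimal value and the maximizer $\bar{\psi}_1$ from Lemma~\ref{lem:phase_alignment_real_overlap}. Your added remark that the minimizer is unique modulo $2\pi$ is a harmless strengthening, and it is the fact the paper later relies on in Proposition~\ref{prop:pancharatnam_alignment}.
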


\begin{proof}
Since $\|\psi_0\|_2=\|\psi_1\|_2=1$, we have
\begin{align}
    \|\psi_0 - e^{i\varphi}\psi_1\|_2^2
    &=
    \langle \psi_0 - e^{i\varphi}\psi_1,\psi_0 - e^{i\varphi}\psi_1\rangle \\
    &=
    \|\psi_0\|_2^2 + \|\psi_1\|_2^2 - 2\,\mathrm{Re}\,\langle \psi_0,e^{i\varphi}\psi_1\rangle \\
    &=
    2 - 2\,\mathrm{Re}\,\langle \psi_0,e^{i\varphi}\psi_1\rangle.
\end{align}
Therefore minimizing the chord length is equivalent to maximizing the real part of the overlap.
Lemma~\ref{lem:phase_alignment_real_overlap} gives the optimal value and shows that $\bar{\psi}_1$ is a minimizer.
\end{proof}

\subsection{Interpolation paths used in the experiments}
\label{app:interpolation_details}

This subsection makes the experimental path choices explicit.
The same flow-matching regression objective can be paired with different conditional path families \citep{lipman2023flowmatchinggenerativemodeling,tong2023conditional,chen2023flow}, and the paper uses three such variants:
\begin{itemize}
    \item the phase-aligned normalized-chord path used by the main IFM model;
    \item a phase-aligned geodesic variant used as an ablation of our method;
    \item an appendix-only generic manifold-geodesic baseline that omits phase alignment and therefore excludes our quotient-specific representative selection.
\end{itemize}

\paragraph{Phase-aligned normalized-chord path.}
Given unit representatives $\psi_0,\psi_1 \in \mathbb{C}^d$, define the aligned endpoint
\[
\bar{\psi}_1 =
\begin{cases}
\alpha(\psi_0,\psi_1)^{-1}\psi_1, & \langle \psi_0,\psi_1\rangle \neq 0,\\
\psi_1, & \langle \psi_0,\psi_1\rangle = 0,
\end{cases}
\qquad
\alpha(\psi_0,\psi_1)=\frac{\langle \psi_0,\psi_1\rangle}{|\langle \psi_0,\psi_1\rangle|}.
\]
The ambient chord is
\[
\tilde{\psi}_t=(1-t)\psi_0+t\bar{\psi}_1,
\]
and the path used in the main method is its normalized version
\[
\psi_t=\frac{\tilde{\psi}_t}{\|\tilde{\psi}_t\|_2}.
\]
Differentiating the normalization map yields
\[
\dot{\psi}_t
=
\frac{1}{\|\tilde{\psi}_t\|_2}
\Big(I-\psi_t\psi_t^\ast\Big)(\bar{\psi}_1-\psi_0),
\]
which is the orthogonal projection of the ambient chord direction onto the tangent space of the unit sphere at $\psi_t$.
In the implementation, the final target is represented by its horizontal component
\(
\Pi_{\psi_t}(\xi)=\xi-\langle \psi_t,\xi\rangle \psi_t
\),
so that the learning target lies in $\mathcal{H}_{\psi_t}$ and therefore defines a projective tangent vector.

\paragraph{Phase-aligned geodesic interpolation (ours variant).}
Our geodesic ablation uses the same aligned endpoint $\bar{\psi}_1$ but replaces the normalized chord by the short spherical interpolation between $\psi_0$ and $\bar{\psi}_1$.
Let
\[
\theta = \arccos\!\big(\mathrm{clip}(\mathrm{Re}\,\langle \psi_0,\bar{\psi}_1\rangle,-1,1)\big).
\]
For nondegenerate angles, the path is
\[
\psi_t
=
\frac{\sin((1-t)\theta)}{\sin\theta}\,\psi_0
+
\frac{\sin(t\theta)}{\sin\theta}\,\bar{\psi}_1,
\]
with derivative
\[
\dot{\psi}_t
=
-\theta \frac{\cos((1-t)\theta)}{\sin\theta}\,\psi_0
+
\theta \frac{\cos(t\theta)}{\sin\theta}\,\bar{\psi}_1.
\]
For numerically small $\theta$, the implementation uses the corresponding linearized limit.
This is treated as an ablation of our method rather than as an unrelated baseline because it uses the same Pancharatnam-aligned representative selection and the same horizontal tangent representation \citep{pancharatnam1956generalized,bengtsson2017geometry}; only the interpolation rule is changed.

\paragraph{Generic manifold-geodesic baseline without phase alignment.}
Appendix~\ref{app:fmgg_baseline} also reports a closest in-code analogue of a generic manifold flow-matching baseline on $\mathbb{CP}^{d-1}$ \citep{chen2023flow,miller2024flowmm}.
In that variant, the raw representatives $\psi_0,\psi_1$ are used directly, without first replacing $\psi_1$ by the aligned representative $\bar{\psi}_1$.
Define
\[
\theta_{\mathrm{raw}}=\arccos\!\big(\mathrm{clip}(\mathrm{Re}\,\langle \psi_0,\psi_1\rangle,-1,1)\big),
\]
and set
\[
\psi_t^{\mathrm{raw}}
=
\frac{\sin((1-t)\theta_{\mathrm{raw}})}{\sin\theta_{\mathrm{raw}}}\,\psi_0
+
\frac{\sin(t\theta_{\mathrm{raw}})}{\sin\theta_{\mathrm{raw}}}\,\psi_1,
\]
again with the small-angle linear limit when needed.
This path is geometrically valid on the sphere representation, but because it omits phase alignment it does not quotient out arbitrary endpoint phase mismatch before interpolation.
That is precisely why we use it only as an appendix baseline and not as the main IFM construction.

\subsection{Proof of Proposition~\ref{prop:pancharatnam_alignment}}
\label{app:proof_pancharatnam_alignment}

\begin{proof}
The first claim is exactly the Pancharatnam in-phase condition.
By Lemma~\ref{lem:phase_alignment_real_overlap}, among all phase-equivalent representatives
\(
\psi_1^{(\varphi)}=e^{i\varphi}\psi_1
\),
the representative
\(
\bar{\psi}_1=\alpha(\psi_0,\psi_1)^{-1}\psi_1
\)
is characterized by
\[
\langle \psi_0,\bar{\psi}_1\rangle
=
|\langle \psi_0,\psi_1\rangle|
\in \mathbb{R}_{\ge 0},
\]
and is the unique phase choice with this property when $\langle \psi_0,\psi_1\rangle \neq 0$.
This proves the Pancharatnam-alignment statement.

Again by Lemma~\ref{lem:phase_alignment_real_overlap}, the same representative uniquely maximizes
\(
\mathrm{Re}\,\langle \psi_0,e^{i\varphi}\psi_1\rangle
\)
over all $\varphi$.
Lemma~\ref{lem:phase_alignment_shortest_chord} shows that minimizing
\(
\|\psi_0-e^{i\varphi}\psi_1\|_2
\)
is equivalent to maximizing this same real overlap.
Hence $\bar{\psi}_1$ is also the unique representative minimizing the ambient chord length among all representatives of the projective endpoint $[\psi_1]$.

Therefore the three characterizations are equivalent:
\begin{enumerate}
    \item satisfying the Pancharatnam in-phase condition with respect to $\psi_0$;
    \item maximizing the real overlap with $\psi_0$;
    \item minimizing the ambient chord length to $\psi_0$.
\end{enumerate}
This proves Proposition~\ref{prop:pancharatnam_alignment}.
\end{proof}

\begin{remark}
The point of Lemmas~\ref{lem:phase_alignment_real_overlap} and~\ref{lem:phase_alignment_shortest_chord} is not that projective geometry is reduced to ambient Euclidean geometry.
Rather, once one chooses representatives in the quotient space $\mathbb{CP}^{d-1}$, there remains a gauge degree of freedom.
Phase alignment fixes that freedom in the way that produces the shortest ambient interpolation consistent with the same projective endpoints, thereby removing spurious motion due solely to global phase.
\end{remark}

\subsection{Proof of Proposition~\ref{prop:path_welldefined}}
\label{app:proof_path_welldefined}

\begin{proof}
Let $\psi_0' = e^{i\theta_0}\psi_0$ and $\psi_1' = e^{i\theta_1}\psi_1$ be alternative representatives of the same projective points.
Then
\[
\alpha(\psi_0',\psi_1')
=
\frac{\langle e^{i\theta_0}\psi_0, e^{i\theta_1}\psi_1\rangle}{|\langle e^{i\theta_0}\psi_0, e^{i\theta_1}\psi_1\rangle|}
=
e^{i(\theta_1-\theta_0)}\alpha(\psi_0,\psi_1).
\]
Hence
\[
\bar{\psi}_1'
=
\alpha(\psi_0',\psi_1')^{-1}\psi_1'
=
e^{i\theta_0}\alpha(\psi_0,\psi_1)^{-1}\psi_1
=
e^{i\theta_0}\bar{\psi}_1.
\]
The interpolant therefore transforms as
\[
\tilde{\psi}_t'
=
(1-t)\psi_0' + t\bar{\psi}_1'
=
e^{i\theta_0}\big((1-t)\psi_0 + t\bar{\psi}_1\big)
=
e^{i\theta_0}\tilde{\psi}_t.
\]
After normalization, $\gamma_t(\psi_0',\psi_1') = e^{i\theta_0}\gamma_t(\psi_0,\psi_1)$, so both define the same point in $\mathbb{CP}^{d-1}$.
This proves representative-independence.

The endpoint property follows immediately:
\[
\gamma_0(\psi_0,\psi_1)=\psi_0,
\qquad
\gamma_1(\psi_0,\psi_1)=\bar{\psi}_1,
\]
and $[\bar{\psi}_1]=[\psi_1]$ because they differ by a unit-modulus phase.

For tangency, define
\(
w_t := \bar{\psi}_1-\psi_0
\)
so that $\dot{\tilde{\psi}}_t = w_t$.
Differentiating the normalized lift $\psi_t=\tilde{\psi}_t/\|\tilde{\psi}_t\|_2$ gives
\[
\dot{\psi}_t
=
\frac{w_t}{\|\tilde{\psi}_t\|_2}
-
\frac{\tilde{\psi}_t\, \mathrm{Re}\langle \tilde{\psi}_t,w_t\rangle}{\|\tilde{\psi}_t\|_2^3}.
\]
In particular, $\dot{\psi}_t$ is tangent to the unit sphere, i.e.,
\(
\mathrm{Re}\langle \psi_t,\dot{\psi}_t\rangle = 0
\).
To pass from the sphere to the projective quotient, one removes the residual vertical phase component along $i\psi_t$ and retains the horizontal representative. By definition this representative is
\[
u_t(\psi_t\mid\psi_0,\psi_1)
:=
\frac{1}{\|\tilde{\psi}_t\|_2}\Pi_{\psi_t}(w_t).
\]
Since
\[
\langle \psi_t,\Pi_{\psi_t}(w_t)\rangle
=
\langle \psi_t,w_t-\psi_t\langle\psi_t,w_t\rangle\rangle
=
0,
\]
we conclude that $u_t(\psi_t\mid\psi_0,\psi_1)\in\mathcal{H}_{\psi_t}$.
\end{proof}

\subsection{Proof of Theorem~\ref{thm:ifm_population_opt}}
\label{app:proof_population_opt}

\begin{proof}
Fix $t \in (0,1)$ and abbreviate
\(
U := u_t(\psi_t \mid \psi_0,\psi_1)\in T_{\psi_t}\mathbb{CP}^{d-1}.
\)
For any measurable tangent field $v(\cdot,t)$, the population risk is
\[
\mathcal{J}_t[v]
=
\mathbb{E}\big[\|v(\psi_t,t)-U\|_{\psi_t}^2\big].
\]
Condition on the intermediate state $\psi_t$:
\[
\mathcal{J}_t[v]
=
\mathbb{E}_{\psi_t}\left[
    \mathbb{E}\big[
        \|v(\psi_t,t)-U\|_{\psi_t}^2
        \,\big|\,
        \psi_t
    \big]
\right].
\]
Thus the minimization decouples pointwise in $\psi$.
Fix $\psi$ and write $m(\psi):=\mathbb{E}[U\mid \psi_t=\psi]$.
Using the Hilbert-space identity
\(
\|a-b\|^2 = \|a-c\|^2 + \|b-c\|^2 - 2\langle a-c,b-c\rangle
\)
with $c=m(\psi)$ and then taking conditional expectation, we obtain
\[
\mathbb{E}\big[
    \|v(\psi,t)-U\|_{\psi}^2
    \,\big|\,
    \psi_t=\psi
\big]
=
\|v(\psi,t)-m(\psi)\|_{\psi}^2
+
\mathbb{E}\big[
    \|U-m(\psi)\|_{\psi}^2
    \,\big|\,
    \psi_t=\psi
\big].
\]
The second term is independent of $v$, so the unique minimizer is $v(\psi,t)=m(\psi)$.
Since this holds for $p_t$-almost every $\psi$, we obtain
\[
v_t^\star(\psi)
=
\mathbb{E}\big[
    u_t(\psi_t \mid \psi_0,\psi_1)
    \,\big|\,
    \psi_t=\psi
\big].
\]
\end{proof}

\subsection{Proof of Proposition~\ref{prop:ifm_loss_decomposition}}
\label{app:proof_loss_decomposition}

\begin{proof}
Let
\(
\bar{u}_t(\psi)
:=
\mathbb{E}[u_t(\psi_t\mid\psi_0,\psi_1)\mid \psi_t=\psi].
\)
For notational brevity write
\(
U:=u_t(\psi_t\mid\psi_0,\psi_1)
\)
and
\(
\bar{U}:=\bar{u}_t(\psi_t).
\)
Then
\[
v(\psi_t,t)-U
=
\big(v(\psi_t,t)-\bar{U}\big) + \big(\bar{U}-U\big).
\]
Expanding the squared norm and taking expectation gives
\begin{align}
\mathcal{J}_t[v]
&=
\mathbb{E}\big[\|v(\psi_t,t)-\bar{U}\|_{\psi_t}^2\big]
+
\mathbb{E}\big[\|U-\bar{U}\|_{\psi_t}^2\big] \nonumber\\
&\quad
+ 2\,\mathbb{E}\big[
\langle v(\psi_t,t)-\bar{U}, \bar{U}-U\rangle_{\psi_t}
\big].
\label{eq:appendix_ifm_pythagorean}
\end{align}
It remains to show that the cross term vanishes.
Conditioning on $\psi_t$ yields
\[
\mathbb{E}\big[
\langle v(\psi_t,t)-\bar{U}, \bar{U}-U\rangle_{\psi_t}
\big]
=
\mathbb{E}\Big[
\big\langle
v(\psi_t,t)-\bar{U},
\mathbb{E}[\bar{U}-U \mid \psi_t]
\big\rangle_{\psi_t}
\Big].
\]
By definition of $\bar{U}$,
\(
\mathbb{E}[U\mid \psi_t]=\bar{U}
\),
hence
\(
\mathbb{E}[\bar{U}-U\mid \psi_t]=0
\).
Therefore the last term in \eqref{eq:appendix_ifm_pythagorean} is zero, and the desired decomposition follows.
Since the second term does not depend on $v$, the conditional and marginal objectives have identical minimizers.
\end{proof}

\subsection{Proof of Proposition~\ref{prop:ifm_continuity}}
\label{app:proof_continuity}

\begin{proof}
Let $\varphi \in C^\infty(\mathbb{CP}^{d-1})$ be a smooth test function.
Since $t \mapsto \gamma_t(\psi_0,\psi_1)$ is $C^1$, the chain rule gives
\[
\frac{d}{dt}\varphi(\psi_t)
=
\left\langle
    \nabla_{\mathrm{FS}}\varphi(\psi_t),
    u_t(\psi_t\mid\psi_0,\psi_1)
\right\rangle_{\psi_t}.
\]
Taking expectation yields
\[
\frac{d}{dt}\mathbb{E}[\varphi(\psi_t)]
=
\mathbb{E}\left[
    \left\langle
        \nabla_{\mathrm{FS}}\varphi(\psi_t),
        u_t(\psi_t\mid\psi_0,\psi_1)
    \right\rangle_{\psi_t}
\right].
\]
Now condition on $\psi_t$ and use Theorem~\ref{thm:ifm_population_opt}:
\[
\frac{d}{dt}\mathbb{E}[\varphi(\psi_t)]
=
\mathbb{E}\left[
    \left\langle
        \nabla_{\mathrm{FS}}\varphi(\psi_t),
        v_t^\star(\psi_t)
    \right\rangle_{\psi_t}
\right].
\]
Writing the expectation with respect to the density $p_t$ gives
\[
\frac{d}{dt}\int_{\mathbb{CP}^{d-1}} \varphi(\psi)\, p_t(\psi)\, d\mathrm{vol}_{\mathrm{FS}}(\psi)
=
\int_{\mathbb{CP}^{d-1}}
\langle \nabla_{\mathrm{FS}}\varphi(\psi), v_t^\star(\psi)\rangle_{\psi}\,
p_t(\psi)\, d\mathrm{vol}_{\mathrm{FS}}(\psi).
\]
By the definition of the Riemannian divergence,
\[
\int_{\mathbb{CP}^{d-1}}
\langle \nabla_{\mathrm{FS}}\varphi, v_t^\star\rangle\, p_t\, d\mathrm{vol}_{\mathrm{FS}}
=
-
\int_{\mathbb{CP}^{d-1}}
\varphi\, \mathrm{div}_{\mathbb{CP}}(p_t v_t^\star)\, d\mathrm{vol}_{\mathrm{FS}},
\]
which is exactly the weak formulation of
\(
\partial_t p_t + \mathrm{div}_{\mathbb{CP}}(p_t v_t^\star)=0
\).
Since this identity holds for every smooth test function $\varphi$, the claim follows.
\end{proof}

\subsection{Proof of Theorem~\ref{thm:ifm_endpoint}}
\label{app:proof_endpoint}

\begin{proof}
By the local Lipschitz assumption on $v_t^\star$ and compactness of $\mathbb{CP}^{d-1}$, the manifold ODE $\dot{\psi}_t=v_t^\star(\psi_t)$ admits a unique global flow $\Phi_{0,t}$ on $[0,1]$ \citep{lee2012smooth}.
Define the pushforward $\tilde p_t := (\Phi_{0,t})_\# p_0$.
For any test function $\varphi\in C^\infty(\mathbb{CP}^{d-1})$, the chain rule along the deterministic flow gives
\[
\frac{d}{dt}\int_{\mathbb{CP}^{d-1}} \varphi\, d\tilde p_t
=
\mathbb{E}\big[\langle \nabla_{\mathrm{FS}}\varphi(\Phi_{0,t}\psi_0), v_t^\star(\Phi_{0,t}\psi_0)\rangle_{\Phi_{0,t}\psi_0}\big]
=
\int_{\mathbb{CP}^{d-1}} \langle \nabla_{\mathrm{FS}}\varphi, v_t^\star\rangle\, d\tilde p_t,
\]
so $\tilde p_t$ satisfies the manifold continuity equation $\partial_t \tilde p_t + \mathrm{div}_{\mathbb{CP}}(\tilde p_t v_t^\star)=0$ in the weak sense, with $\tilde p_0=p_0$.
By Proposition~\ref{prop:ifm_continuity}, the family $p_t$ defined as the law of $\gamma_t(\psi_0,\psi_1)$ also satisfies this continuity equation with $p_0$ as its initial condition.
The Lipschitz regularity of $v_t^\star$ on the compact manifold $\mathbb{CP}^{d-1}$ implies uniqueness of weak solutions to the continuity equation in the standard transport-equation sense \citep{ambrosio2008transport,villani2008stability}, hence $\tilde p_t = p_t$ for all $t\in[0,1]$.
The boundary condition $\gamma_1(\psi_0,\psi_1)=\psi_1$ a.s.\ gives $p_1$ as the marginal at $t=1$, so $(\Phi_{0,1})_\# p_0 = p_1$.
\end{proof}

\begin{remark}[Dimension dependence of the Lipschitz constant]
\label{rem:lipschitz_d}
Compactness of $\mathbb{CP}^{d-1}$ removes the linear-growth condition needed for Cauchy--Lipschitz on $\mathbb{R}^N$, but it does not by itself bound the Lipschitz constant $L = L(v_t^\star)$ in any $d$-uniform way.
For the normalized chord conditional path of Section~\ref{sec:conditional-paths},
\[
u_t(\psi_t\mid\psi_0,\psi_1)
=
\frac{1}{\|(1-t)\psi_0 + t\bar\psi_1\|_2}\,\Pi_{\psi_t}(\bar\psi_1 - \psi_0),
\]
whose smoothness in $(\psi_0,\psi_1,t)$ degrades when $|\langle\psi_0,\psi_1\rangle|\to 0$: the Pancharatnam phase $\alpha(\psi_0,\psi_1)=\langle\psi_0,\psi_1\rangle/|\langle\psi_0,\psi_1\rangle|$ is then ambiguous, and $\|\tilde\psi_t\|_2$ approaches its lower envelope $\sqrt{(1-t)^2+t^2}$ where the radial denominator is least cushioned.
Random pure states on $\mathbb{CP}^{d-1}$ are near-orthogonal with overwhelming probability for large $d$ (typical overlap $\sim d^{-1/2}$), so for couplings whose support reaches the near-orthogonal regime the Lipschitz constant $L$ can grow with $d$.
A clean sufficient condition for a $d$-independent bound is a margin
\begin{equation}
\label{eq:overlap_margin}
\inf_{(\psi_0,\psi_1)\in\mathrm{supp}(\pi)}|\langle\psi_0,\psi_1\rangle| \;\ge\; \kappa \;>\; 0,
\end{equation}
under which a direct calculation gives $L = O(\kappa^{-c})$ for a small numerical constant $c$ that we do not optimize.
Without such a margin, the constants in Theorem~\ref{thm:ifm_endpoint} and Proposition~\ref{prop:ifm_stability} are best interpreted as data-dependent rather than dimension-uniform.
Margins of the form \eqref{eq:overlap_margin} can arise from clustered targets or from smoothed conditional path families near orthogonality.
\end{remark}

\subsection{Proof of Proposition~\ref{prop:ifm_stability}}
\label{app:proof_stability}

The proof proceeds in three steps: a pointwise coupling bound, an $L^2$ aggregation along the true flow, and the transition from coupling cost to Wasserstein-$2$ distance.
The pointwise step is isolated as a lemma so that the change-of-measure issue between $p_t$ and $\hat p_t$ is addressed explicitly rather than absorbed into a constant.

\begin{lemma}[Pointwise coupling stability under Lipschitz flows]
\label{lem:pointwise_coupling_stability}
Let $v^\star, \hat v: \mathbb{CP}^{d-1}\times[0,1]\to T\mathbb{CP}^{d-1}$ be tangent vector fields with flows $\Phi_{0,t},\hat\Phi_{0,t}$ from a common initial point $\psi_0$ in the unit-sphere representation, and assume $\hat v$ is $L$-Lipschitz on $\mathbb{CP}^{d-1}$ uniformly in $t\in[0,1]$.
Set $\psi_t^\star := \Phi_{0,t}(\psi_0)$, $\hat\psi_t := \hat\Phi_{0,t}(\psi_0)$, $\delta_t := \psi_t^\star - \hat\psi_t$, and
\begin{equation}
\label{eq:stability_velocity_error_true}
e_t \;:=\; \big\|v_t^\star(\psi_t^\star) - \hat v_t(\psi_t^\star)\big\|_2,
\end{equation}
the velocity error evaluated along the \emph{true} trajectory $\psi^\star_t$.
Then for every $t\in[0,1]$,
\begin{equation}
\label{eq:stability_pointwise}
\|\delta_t\|_2 \;\le\; \int_0^t e^{L(t-s)}\,e_s\, ds.
\end{equation}
\end{lemma}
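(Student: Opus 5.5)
The plan is a standard Grönwall argument carried out in the unit-sphere representation. The key choice is to split the velocity difference so that the Lipschitz constant is applied only to $\hat v$, while the mismatch $v^\star-\hat v$ is evaluated on the true trajectory.

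\textbf{Step 1: decompose the velocity of $\delta_t$.} Both trajectories start at $\psi_0$, so $\delta_0=0$ and
\[
\delta_t=\int_0^t\big(v_s^\star(\psi_s^\star)-\hat v_s(\hat\psi_s)\big)\,ds .
\]
Adding and subtracting $\hat v_s(\psi_s^\star)$ gives
\[
\dot\delta_s=\big(v_s^\star(\psi_s^\star)-\hat v_s(\psi_s^\star)\big)+\big(\hat v_s(\psi_s^\star)-\hat v_s(\hat\psi_s)\big).
\]
The first bracket has norm exactly $e_s$. The second has norm at most $L\|\delta_s\|_2$ by the $L$-Lipschitz property of $\hat v$ in the unit-sphere representation. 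This is why the lemma needs no Lipschitz assumption on $v^\star$.

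\textbf{Step 2: derive a differential inequality for $\|\delta_t\|_2$.} The norm is not differentiable at $\delta_t=0$. To handle this, I would work with $r_t:=\sqrt{\|\delta_t\|_2^2+\eta^2}$ for $\eta>0$. Then
\[
\dot r_t=\frac{\mathrm{Re}\langle\delta_t,\dot\delta_t\rangle}{r_t}\le\|\dot\delta_t\|_2\le e_t+L\|\delta_t\|_2\le e_t+Lr_t .
\]
An equivalent route is to apply Grönwall's integral inequality to $\|\delta_t\|_2\le\int_0^t(e_s+L\|\delta_s\|_2)\,ds$.

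\textbf{Step 3: integrate.} Using the integrating factor $e^{-Lt}$ gives
\[
\frac{d}{dt}\big(e^{-Lt}r_t\big)\le e^{-Lt}e_t ,
\]
and hence
\[
r_t\le e^{Lt}\eta+\int_0^t e^{L(t-s)}e_s\,ds .
\]
Letting $\eta\to0$ yields \eqref{eq:stability_pointwise}.

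\textbf{Main obstacle: regularity.} I must check that $s\mapsto e_s$ is measurable and integrable, and that the ambient difference $\delta_t$ is the right object. The trajectories live on the sphere, but their difference is an ambient vector in $\mathbb{C}^d$. The Lipschitz hypothesis is stated precisely in the unit-sphere representation, so the ambient Euclidean norm is the consistent choice. Continuity of $e_s$ follows from continuity of the fields in time and of the trajectory $\psi^\star_s$, assuming the flows exist, which is granted by the statement. No projection or phase-gauge issue arises, because both flows use representatives evolved by horizontal fields from the same initial vector.
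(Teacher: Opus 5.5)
Your proposal is correct and matches the paper's proof. Both add and subtract $\hat v_s(\psi_s^\star)$, so the velocity error is evaluated on the true trajectory and the Lipschitz bound is used only for $\hat v$, and both then apply Gr\"onwall from $\delta_0=0$. Your regularization $r_t=\sqrt{\|\delta_t\|_2^2+\eta^2}$ is a more careful treatment of the non-differentiability of $\|\delta_t\|_2$ at $\delta_t=0$, which the paper handles by noting $\frac{d}{dt}\|\delta_t\|_2\le\|\dot\delta_t\|_2$ wherever $\delta_t\neq 0$ and invoking the integral form of Gr\"onwall.
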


\begin{proof}[Proof of Lemma~\ref{lem:pointwise_coupling_stability}]
The decisive step is to decompose the velocity difference at the common evaluation point $\psi_t^\star$ rather than at $\hat\psi_t$:
\[
\frac{d}{dt}\delta_t
\;=\;
v_t^\star(\psi_t^\star) - \hat v_t(\hat\psi_t)
\;=\;
\underbrace{\big[v_t^\star(\psi_t^\star) - \hat v_t(\psi_t^\star)\big]}_{\text{velocity error along }p_t}
\;+\;
\underbrace{\big[\hat v_t(\psi_t^\star) - \hat v_t(\hat\psi_t)\big]}_{\text{Lipschitz term in }\delta_t}.
\]
The Lipschitz assumption on $\hat v_t$ bounds the second bracket by $L\|\delta_t\|_2$, giving
\[
\bigg\|\frac{d}{dt}\delta_t\bigg\|_2
\;\le\;
e_t + L\,\|\delta_t\|_2.
\]
Since $\frac{d}{dt}\|\delta_t\|_2 \le \|\frac{d}{dt}\delta_t\|_2$ wherever $\delta_t\neq 0$, with $\delta_0=0$, Gr\"onwall's inequality in integral form yields \eqref{eq:stability_pointwise}.
The point of the decomposition is that the velocity error appears at $\psi_t^\star$, whose law is exactly $p_t$ by Theorem~\ref{thm:ifm_endpoint}; no Radon--Nikodym factor between $p_t$ and $\hat p_t$ enters the bound.
\end{proof}

\begin{proof}[Proof of Proposition~\ref{prop:ifm_stability}]
Couple the two flows by sharing the initial sample $\psi_0\sim p_0$, and use the notation of Lemma~\ref{lem:pointwise_coupling_stability}.
By Theorem~\ref{thm:ifm_endpoint}, $\psi_1^\star\sim p_1$, while $\hat\psi_1\sim \hat p_1$ by definition of $\hat\Phi_{0,1}$.

\smallskip
\noindent\textbf{Step 1 (pointwise bound).}
Lemma~\ref{lem:pointwise_coupling_stability} gives, at $t=1$,
\[
\|\delta_1\|_2 \;\le\; \int_0^1 e^{L(1-s)}\, e_s\, ds.
\]
Squaring and applying Cauchy--Schwarz in the time variable,
\[
\|\delta_1\|_2^2
\;\le\;
\bigg(\int_0^1 e^{L(1-s)}\, ds\bigg)
\bigg(\int_0^1 e^{L(1-s)}\, e_s^2\, ds\bigg)
\;\le\;
e^{2L}\int_0^1 e_s^2\, ds.
\]

\smallskip
\noindent\textbf{Step 2 ($L^2$ aggregation along $p_t$).}
Taking expectation over $\psi_0\sim p_0$ and using Theorem~\ref{thm:ifm_endpoint} to identify the law of $\psi_s^\star$ as $p_s$, the velocity error $e_s$ becomes the integrand appearing in the proposition's assumption:
\[
\mathbb{E}_{\psi_0\sim p_0}\,\|\delta_1\|_2^2
\;\le\;
e^{2L}\int_0^1 \mathbb{E}_{\psi_0\sim p_0}\,\big\|v_s^\star(\psi_s^\star) - \hat v_s(\psi_s^\star)\big\|_2^2\, ds
\;=\;
e^{2L}\int_0^1 \mathbb{E}_{\psi\sim p_s}\,\big\|v_s^\star(\psi) - \hat v_s(\psi)\big\|_2^2\, ds
\;\le\;
e^{2L}\,\varepsilon^2.
\]
The horizontal-norm and ambient-norm coincide on the horizontal subspace, so $\|\,\cdot\,\|_\psi$ and $\|\,\cdot\,\|_2$ agree under the integrand.
This step is where the careful evaluation point in Lemma~\ref{lem:pointwise_coupling_stability} pays off: no change-of-measure between $\hat p_s$ and $p_s$ is invoked, and no Radon--Nikodym constant has to be tracked.

\smallskip
\noindent\textbf{Step 3 (coupling cost upper-bounds $W_2$).}
On $\mathbb{CP}^{d-1}$, the Fubini--Study geodesic distance is dominated by the ambient chord distance in any unit-norm representatives \citep{bengtsson2017geometry}: $d_{\mathrm{FS}}([\psi],[\phi]) \le \|\psi - \phi\|_2$.
Hence the coupling $(\psi_1^\star,\hat\psi_1)$ is an admissible plan between $p_1$ and $\hat p_1$, and
\[
W_2^{\mathrm{FS}}\!\big(\hat p_1,\, p_1\big)^2
\;\le\;
\mathbb{E}\, d_{\mathrm{FS}}([\psi_1^\star],[\hat\psi_1])^2
\;\le\;
\mathbb{E}\,\|\delta_1\|_2^2
\;\le\;
e^{2L}\,\varepsilon^2,
\]
giving $W_2^{\mathrm{FS}}(\hat p_1, p_1) \le e^{L}\,\varepsilon$ as claimed.
\end{proof}

\subsection{Proof of Proposition~\ref{prop:physical_compatibility}}
\label{app:proof_physical_compatibility}

\begin{proof}
Let $\varphi \in C^\infty(\mathbb{CP}^{d-1})$ be a smooth test function and let $\psi_t=\Phi_{0,t}(\psi_0)$ with $\psi_0\sim p_0$.
By the chain rule along the deterministic flow generated by $X_t$,
\[
\frac{d}{dt}\varphi(\psi_t)
=
\langle \nabla_{\mathrm{FS}}\varphi(\psi_t), X_t(\psi_t)\rangle_{\psi_t}.
\]
Taking expectation gives
\[
\frac{d}{dt}\mathbb{E}[\varphi(\psi_t)]
=
\mathbb{E}\big[
\langle \nabla_{\mathrm{FS}}\varphi(\psi_t), X_t(\psi_t)\rangle_{\psi_t}
\big].
\]
Since $p_t=(\Phi_{0,t})_\# p_0$ is the law of $\psi_t$, we may rewrite this as
\[
\frac{d}{dt}\int_{\mathbb{CP}^{d-1}} \varphi(\psi)\,p_t(\psi)\,d\mathrm{vol}_{\mathrm{FS}}(\psi)
=
\int_{\mathbb{CP}^{d-1}}
\langle \nabla_{\mathrm{FS}}\varphi(\psi),X_t(\psi)\rangle_\psi\,
p_t(\psi)\,d\mathrm{vol}_{\mathrm{FS}}(\psi).
\]
Integrating by parts on the manifold yields
\[
\int_{\mathbb{CP}^{d-1}}
\langle \nabla_{\mathrm{FS}}\varphi, X_t\rangle\,p_t\,d\mathrm{vol}_{\mathrm{FS}}
=
-
\int_{\mathbb{CP}^{d-1}}
\varphi\,\mathrm{div}_{\mathbb{CP}}(p_t X_t)\,d\mathrm{vol}_{\mathrm{FS}},
\]
which proves the weak continuity equation
\(
\partial_t p_t+\mathrm{div}_{\mathbb{CP}}(p_tX_t)=0
\).

For the second claim, assume the IFM conditional path family is chosen so that
\(
\gamma_t(\psi_0,\psi_1)=\Phi_{0,t}(\psi_0)
\)
almost surely with
\(
\psi_1=\Phi_{0,1}(\psi_0)
\).
Then along every conditional path,
\[
u_t(\psi_t\mid\psi_0,\psi_1)
=
\frac{d}{dt}\gamma_t(\psi_0,\psi_1)
=
\frac{d}{dt}\Phi_{0,t}(\psi_0)
=
X_t(\psi_t).
\]
Applying Theorem~\ref{thm:ifm_population_opt}, the population IFM minimizer is
\[
v_t^\star(\psi)
=
\mathbb{E}\big[
u_t(\psi_t\mid\psi_0,\psi_1)\mid \psi_t=\psi
\big]
=
\mathbb{E}\big[
X_t(\psi_t)\mid \psi_t=\psi
\big]
=
X_t(\psi)
\]
for $p_t$-almost every $\psi$.
This proves exact recovery of the deterministic projective flow field.
\end{proof}

\subsection{Proof of Proposition~\ref{prop:ifm_monge_geodesic}}
\label{app:proof_monge_geodesic}

\begin{proof}
Under the deterministic coupling assumption,
\(
\psi_1 = T(\psi_0)
\)
almost surely, so the conditional velocity is
\[
U
=
u_t(\psi_t\mid\psi_0,\psi_1)
=
\frac{d}{dt}\gamma_t(\psi_0,T(\psi_0))
\]
almost surely.
By Theorem~\ref{thm:ifm_population_opt},
\[
v_t^\star(\psi)
=
\mathbb{E}[U\mid \psi_t=\psi].
\]
Now fix $t\in(0,1)$ and suppose $\psi=\psi_t$ lies in the full-measure image of the injective map
\(
F_t(\psi_0):=\gamma_t(\psi_0,T(\psi_0)).
\)
Then there is a unique preimage $\psi_0 = F_t^{-1}(\psi)$.
Consequently, conditioning on $\psi_t=\psi$ collapses the conditional expectation:
\[
\mathbb{E}[U\mid \psi_t=\psi]
=
\frac{d}{dt}\gamma_t(F_t^{-1}(\psi),T(F_t^{-1}(\psi))).
\]
Evaluating at $\psi=\psi_t=F_t(\psi_0)$ yields
\[
v_t^\star(\psi_t)
=
\frac{d}{dt}\gamma_t(\psi_0,T(\psi_0))
\]
for $p_t$-almost every $\psi_t$.
This proves exact recovery of the deterministic velocity field of the chosen geodesic interpolation.

If, in addition, $T$ is the optimal Monge map for the quadratic Fubini--Study cost, then the curve
\(
t\mapsto \gamma_t(\psi_0,T(\psi_0))
\)
is precisely the displacement interpolation associated with $T$.
Hence the recovered velocity field is the Benamou--Brenier velocity field of that optimal interpolation.
\end{proof}

\subsection{Proof of Proposition~\ref{prop:horizontal_minimal}}
\label{app:proof_horizontal_minimal}

\begin{proof}
Fix $\psi$ with $\|\psi\|_2=1$ and define
\[
\mathcal{E}_{\psi}(w)
=
\{
w+a\psi+b\,i\psi : a,b\in\mathbb{R}
\}.
\]
The directions $\psi$ and $i\psi$ span the radial and phase-vertical directions of the unit-sphere representation at $\psi$.
Perturbations that differ by an element of $\mathrm{span}_{\mathbb{R}}\{\psi,i\psi\}$ therefore have the same first-order effect on the underlying projective point.

We first show existence and uniqueness of the horizontal representative.
Let
\[
\xi = w+a\psi+b\,i\psi \in \mathcal{E}_{\psi}(w).
\]
The horizontality condition $\langle \psi,\xi\rangle=0$ becomes
\[
\langle \psi,w\rangle + a + bi = 0.
\]
Since $a,b\in\mathbb{R}$, this linear equation has the unique solution
\[
a = -\mathrm{Re}\langle \psi,w\rangle,
\qquad
b = -\mathrm{Im}\langle \psi,w\rangle.
\]
Substituting back gives
\[
\xi
=
w-\mathrm{Re}\langle \psi,w\rangle\,\psi-\mathrm{Im}\langle \psi,w\rangle\, i\psi
=
w-\psi\langle \psi,w\rangle
=
\Pi_\psi(w).
\]
Thus $\Pi_\psi(w)$ is the unique element of $\mathcal{E}_{\psi}(w)$ lying in $\mathcal{H}_\psi$.

It remains to prove the minimal-norm property.
Write any $\xi \in \mathcal{E}_{\psi}(w)$ as
\[
\xi = \Pi_\psi(w) + a\psi + b\,i\psi
\]
with $a,b\in\mathbb{R}$.
Because $\Pi_\psi(w)\in \mathcal{H}_\psi$, it is orthogonal to both $\psi$ and $i\psi$ under the ambient Hermitian inner product.
Therefore
\[
\|\xi\|_2^2
=
\|\Pi_\psi(w)\|_2^2 + a^2 + b^2.
\]
This quantity is minimized if and only if $a=b=0$, which proves that $\Pi_\psi(w)$ is the unique minimizer of the ambient norm over $\mathcal{E}_{\psi}(w)$.
\end{proof}

\subsection{Proof of Proposition~\ref{prop:retraction_consistency}}
\label{app:proof_retraction_consistency}

\begin{proof}
The argument has two steps: a local one-step truncation analysis (showing first-order consistency of the Euler--retraction map) and a global accumulation argument on $\mathbb{CP}^{d-1}$ (showing that the discrete trajectory converges to the exact flow at first order in $\Delta t$).

\smallskip
\noindent\textbf{Step 1: First-order consistency of the retraction.}
Fix a unit-norm representative $\psi$ and a tangent increment $\xi \in \mathcal{H}_\psi$.
By definition,
\[
\mathrm{Retr}_\psi(\xi)
=
\frac{\psi+\xi}{\|\psi+\xi\|_2}.
\]
Clearly $\mathrm{Retr}_\psi(0)=\psi$.
To verify the differential condition for a first-order retraction, consider
\[
R(\epsilon)
:=
\mathrm{Retr}_\psi(\epsilon \xi)
=
\frac{\psi+\epsilon \xi}{\|\psi+\epsilon \xi\|_2}.
\]
Because $\langle \psi,\xi\rangle = 0$, we have
\[
\|\psi+\epsilon \xi\|_2^2
=
1 + \epsilon^2\|\xi\|_2^2,
\qquad
\|\psi+\epsilon \xi\|_2
=
1 + O(\epsilon^2).
\]
Therefore
\[
R(\epsilon)
=
(\psi+\epsilon \xi)(1+O(\epsilon^2))
=
\psi+\epsilon\xi+O(\epsilon^2),
\]
so
\[
\left.\frac{d}{d\epsilon}R(\epsilon)\right|_{\epsilon=0}
=
\xi.
\]
Thus $\mathrm{Retr}_\psi$ is a valid first-order retraction on the unit-sphere representation.

Now consider the exact manifold ODE $\dot{\psi}_t=v_t(\psi_t)$ and a single Euler--retraction step starting from a point $\psi_t$ on the manifold:
\[
\psi_{t+\Delta t}^{\mathrm{num}}
=
\mathrm{Retr}_{\psi_t}\big(\Delta t\, v_t(\psi_t)\big),
\qquad
\psi_{t+\Delta t}^{\mathrm{exact}}
=
\mathrm{Exp}_{\psi_t}\big(\Delta t\, v_t(\psi_t)\big)
+ O(\Delta t^2),
\]
the second equality following from the Taylor expansion of the exact solution along the geodesic generated by the exponential map.
Since any first-order retraction agrees with the exponential map up to first order,
\[
\mathrm{Retr}_{\psi_t}\big(\Delta t\, v_t(\psi_t)\big)
=
\mathrm{Exp}_{\psi_t}\big(\Delta t\, v_t(\psi_t)\big)
+ O(\Delta t^2),
\]
so the \emph{local truncation error}, when started from the exact state $\psi_t$, satisfies
\begin{equation}
\label{eq:local_truncation}
\big\|\psi_{t+\Delta t}^{\mathrm{num}}-\psi_{t+\Delta t}^{\mathrm{exact}}\big\|_2
\;\le\; C_1\,\Delta t^2,
\end{equation}
where $C_1$ depends on the second-order regularity of the exact flow on $\mathbb{CP}^{d-1}\times[0,1]$ and is finite by the assumed $C^1$ regularity of $v_t$ and compactness of the manifold.

\smallskip
\noindent\textbf{Step 2: Global first-order convergence on $\mathbb{CP}^{d-1}$.}
Fix a stepsize $\Delta t = 1/K$ for $K\in\mathbb{N}$, let $t_k := k\Delta t$, and let $\Phi_{s,t}$ denote the exact flow of $v_t$ between times $s$ and $t$.
Define the discrete trajectory $\psi_0^{\mathrm{num}} := \psi_0$,
\[
\psi_{k+1}^{\mathrm{num}}
\;:=\;
\mathrm{Retr}_{\psi_{k}^{\mathrm{num}}}\!\big(\Delta t\, v_{t_k}(\psi_{k}^{\mathrm{num}})\big),
\]
and the global error in the unit-sphere ambient norm
\[
E_k \;:=\; \big\|\psi_k^{\mathrm{num}} - \Phi_{0,t_k}(\psi_0)\big\|_2,
\qquad E_0=0.
\]
Assume $v_t$ is $L$-Lipschitz on $\mathbb{CP}^{d-1}$ uniformly in $t\in[0,1]$ in this representation; this is the standard regularity hypothesis behind global convergence of one-step manifold integrators \citep{lee2012smooth}.
Decompose
\[
\psi_{k+1}^{\mathrm{num}} - \Phi_{0,t_{k+1}}(\psi_0)
\;=\;
\underbrace{\big[\psi_{k+1}^{\mathrm{num}} - \Phi_{t_k,t_{k+1}}(\psi_k^{\mathrm{num}})\big]}_{\text{local truncation from }\psi_k^{\mathrm{num}}}
+
\underbrace{\big[\Phi_{t_k,t_{k+1}}(\psi_k^{\mathrm{num}}) - \Phi_{t_k,t_{k+1}}(\Phi_{0,t_k}\psi_0)\big]}_{\text{exact-flow propagation of }E_k}.
\]
By Step~1 applied with starting point $\psi_k^{\mathrm{num}}$, the local-truncation term is bounded by $C_1\,\Delta t^2$.
By the standard Gr\"onwall bound for the exact flow of an $L$-Lipschitz field,
\[
\big\|\Phi_{t_k,t_{k+1}}(\psi)-\Phi_{t_k,t_{k+1}}(\phi)\big\|_2
\;\le\;
e^{L\Delta t}\,\|\psi-\phi\|_2
\quad\text{for all }\psi,\phi\in\mathbb{CP}^{d-1}.
\]
Combining,
\begin{equation}
\label{eq:euler_retraction_recursion}
E_{k+1} \;\le\; e^{L\Delta t}\,E_k + C_1\,\Delta t^2,
\qquad k=0,1,\ldots,K-1.
\end{equation}
Iterating \eqref{eq:euler_retraction_recursion} from $E_0=0$ and using $K\Delta t = 1$,
\[
E_K
\;\le\;
C_1\,\Delta t^2\sum_{j=0}^{K-1} e^{jL\Delta t}
\;=\;
C_1\,\Delta t^2\cdot\frac{e^{KL\Delta t}-1}{e^{L\Delta t}-1}
\;\le\;
C_1\,\Delta t\cdot\frac{e^{L}-1}{L},
\]
where the last inequality uses $e^{L\Delta t}-1\ge L\Delta t$.
The same bound applies to every intermediate index $k\le K$, so
\begin{equation}
\label{eq:global_first_order}
\sup_{0\le k\le K}\big\|\psi_k^{\mathrm{num}} - \Phi_{0,t_k}(\psi_0)\big\|_2
\;\le\;
C_1\,\frac{e^{L}-1}{L}\,\Delta t
\;=\;
O(\Delta t).
\end{equation}
Since the Fubini--Study geodesic distance on $\mathbb{CP}^{d-1}$ is dominated by the ambient chord distance \citep{bengtsson2017geometry}, $d_{\mathrm{FS}}([\psi_k^{\mathrm{num}}],[\Phi_{0,t_k}\psi_0])\le E_k$, and the same $O(\Delta t)$ rate transfers to the intrinsic distance.
This establishes global first-order convergence of the Euler--retraction scheme on the projective manifold; the constant in \eqref{eq:global_first_order} depends only on $L$ and the curvature term $C_1$, both of which are finite under the standing regularity assumptions, and its dimension dependence is inherited from $L$ as discussed in Remark~\ref{rem:lipschitz_d}.
\end{proof}

\section{Additional Experimental Details}
\label{app:exp_overview}

This appendix gathers the full experimental configuration used for all results in Section~\ref{sec:experiments}.
The main text emphasizes the empirical story; here we make the implementation choices explicit by describing the benchmark construction rules, optimization protocol, network architecture, sampling configuration, and baseline-specific hyperparameters.

\subsection{Common training and evaluation protocol}
\label{app:exp_protocol}

Unless otherwise stated, all main-text results are produced by averaging over $10$ independent random seeds under the following shared protocol:
\begin{itemize}
    \item random seeds: $10$ independent runs per configuration, with seed indices $\{1,2,\ldots,10\}$ used to initialize the network parameters, the data sampler, and the base prior;
    \item optimizer: AdamW;
    \item learning rate: $2\times 10^{-4}$;
    \item training batch size: $64$;
    \item evaluation frequency: every $200$ optimization steps;
    \item evaluation batch size: $256$;
    \item training length: $2000$ optimization steps;
    \item reporting rule: for each seed, we select a single checkpoint by validation MMD and report all metrics at that checkpoint;
    \item aggregation: main-text tables report mean $\pm$ standard deviation across the $10$ seeds, computed as the unbiased sample standard deviation, in the format $(\mu \pm \sigma) \times 10^{c}$.
\end{itemize}
For IFM and Euclidean FM, the reported runs use $K=200$ deterministic sampling steps.
All synthetic and physics-inspired ensembles use the same perturbation scale $\epsilon_{\mathrm{cluster}}=0.06$ and, unless otherwise stated, the same local base-prior scale $0.05$.

\paragraph{Fairness of baseline comparisons.}
The IFM, Euclidean FM, SSDM, and Euclidean VP-SDE comparisons use matched network width, depth, training length, evaluation frequency, and $K=200$ sampling or teacher-discretization steps whenever the method admits such a direct match.
QDDPM and QGAN are included as representative circuit-based anchors on the $6$-qubit single-cluster task using their aligned training scripts and reported wall-clock times.
Accordingly, we interpret the circuit-baseline comparison as a useful scale and wall-clock reference, while the central controlled comparison in the paper is IFM versus Euclidean FM, with SSDM and Euclidean VP-SDE serving as diffusion anchors under the matched protocol described above.

\paragraph{Evaluation metrics.}
MMD is computed from the pure-state overlap kernel induced by squared fidelities \citep{gretton2012mmd}.
The observable discrepancy $\Delta_{\mathrm{obs}}$ is the mean absolute difference in the expectations of all one-site $Z_i$ operators and nearest-neighbor $Z_i Z_{i+1}$ correlators.
Ent.~$W_1$ is the empirical Wasserstein-$1$ distance between half-chain entanglement-entropy distributions.
These metrics are computed in the same way for IFM, Euclidean FM, SSDM, and Euclidean VP-SDE, so differences reflect the learned generative distributions rather than method-specific evaluation rules \citep{xu2026ssdm,song2021scorebased}.

\begin{figure*}[t]
\centering
\includegraphics[width=0.94\textwidth]{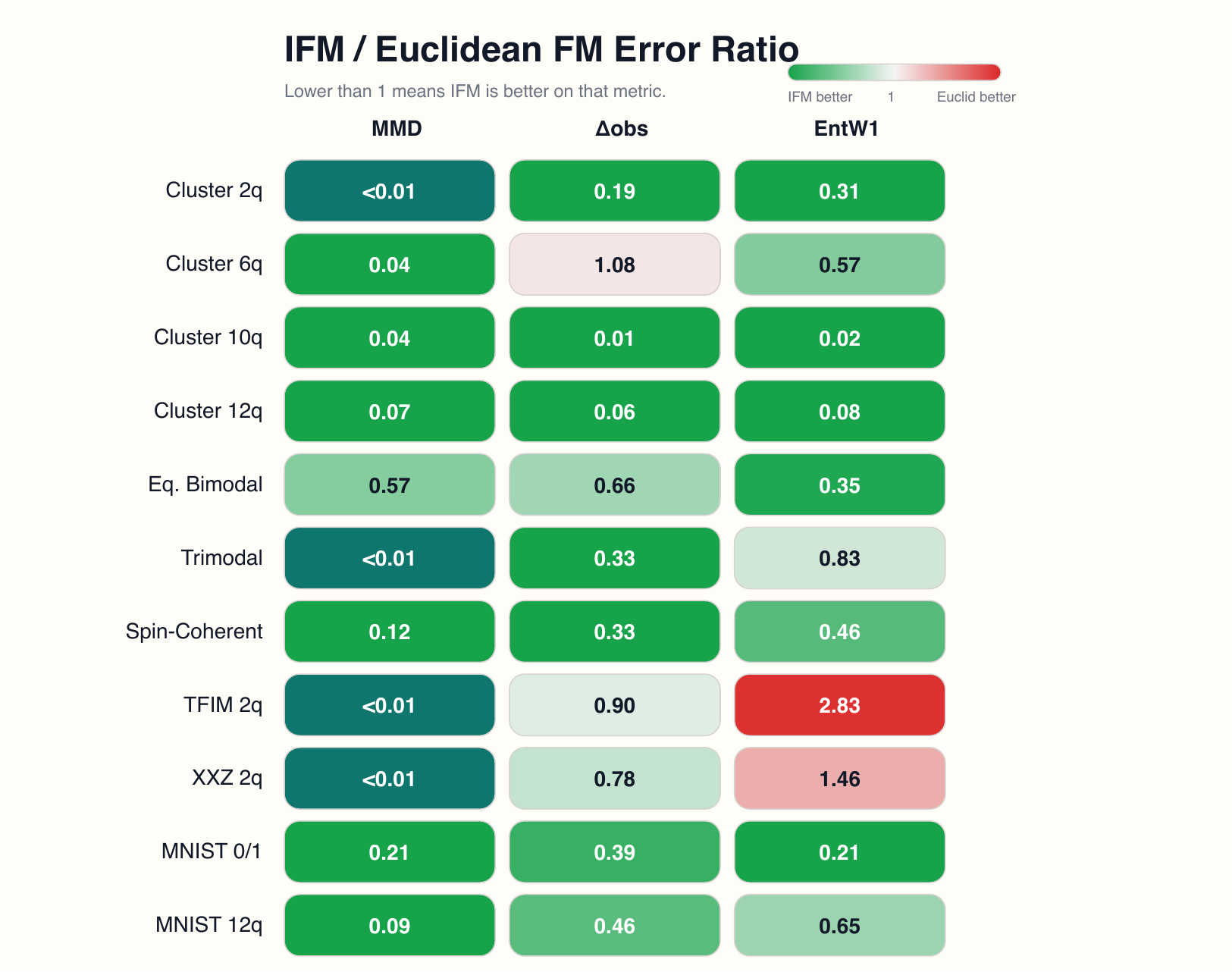}
\caption{Ratio of IFM to Euclidean FM across all reported benchmarks and metrics, computed from the tabled values in this paper. Values below $1$ indicate that IFM attains a lower error. The clearest gains appear on higher-qubit single-cluster tasks, coherence-sensitive multimodal benchmarks, spin-coherent peaks, and the amplitude-encoded MNIST benchmark, while TFIM and XXZ exhibit a more mixed trade-off.}
\label{fig:experiment_ratio_heatmap}
\end{figure*}

\subsection{Benchmark construction}
\label{app:benchmark_construction}

Table~\ref{tab:appendix_benchmark_summary} summarizes the benchmark families used in the paper.
For the synthetic and physics-inspired benchmarks, each target sample is obtained by selecting a canonical center state (or one of several center states), adding small complex Gaussian perturbations in amplitude space, and renormalizing, producing local pure-state ensembles on the projective manifold \citep{bengtsson2017geometry}.

\begin{table*}[t]
\centering
\scriptsize
\caption{Benchmark families used in the experiments. ``Local'' base denotes a small complex Gaussian perturbation around $\ket{0}^{\otimes n}$ followed by renormalization.}
\label{tab:appendix_benchmark_summary}
\begin{tabular}{p{2.1cm}p{0.9cm}p{1.5cm}p{7.5cm}}
\toprule
Benchmark & Qubits & Base prior & Construction \\
\midrule
Single-cluster & $2,6,10,12$ in main text; $14$ stress test & Local & Perturb-and-renormalize ensemble around $\ket{0}^{\otimes n}$ with amplitude noise scale $0.06$. \\
Equatorial bimodal & $2,10$ in main text; $6$ in appendix; $14$ stress test & Local & Equal mixture of $(\ket{0}^{\otimes n}+\ket{1}^{\otimes n})/\sqrt{2}$ and $(\ket{0}^{\otimes n}-\ket{1}^{\otimes n})/\sqrt{2}$. \\
Trimodal & $2,10$ in main text; $6$ in appendix; $14$ stress test & Local & Equal mixture of $\ket{0}^{\otimes n}$, $(\ket{0}^{\otimes n}+\ket{1}^{\otimes n})/\sqrt{2}$, and $(\ket{0}^{\otimes n}-\ket{1}^{\otimes n})/\sqrt{2}$. \\
Spin-coherent peaks & $2,10$ in main text; $6$ in appendix; $14$ stress test & Local & Two product spin-coherent states built from single-qubit coherent orientations along $+x$ and $+y$. \\
TFIM family & $2,10$ in main text; $6$ in appendix & Local & Exact ground states of $H_{\mathrm{TFIM}}=-\sum_{i=1}^{n-1}Z_iZ_{i+1}-g\sum_{i=1}^n X_i$ with $g\in\{0.2,0.5,1.0,2.0\}$, then perturbed and renormalized. \\
XXZ family & $2,10$ in main text; $6$ in appendix & Local & Exact ground states of $H_{\mathrm{XXZ}}=\sum_{i=1}^{n-1}(X_iX_{i+1}+Y_iY_{i+1}+\Delta Z_iZ_{i+1})$ with $\Delta\in\{-1.0,0.0,0.5,1.0\}$, then perturbed and renormalized. \\
MNIST $0/1$ & $6$ in main text; $12$ extension; $14$ stress test & Local & Balanced MNIST digits $\{0,1\}$, flattened, reduced to $\min(2^n,784)$ real dimensions, centered per example, zero-padded when needed, and normalized as real amplitude vectors. \\
\bottomrule
\end{tabular}
\end{table*}

\paragraph{Single-cluster benchmark.}
For $n$ qubits, the canonical center is $\ket{0}^{\otimes n}$.
Each sample is generated by adding i.i.d.\ complex Gaussian noise of standard deviation $\epsilon_{\mathrm{cluster}}$ to this center vector and renormalizing.
This is the easiest unimodal benchmark and is used primarily to probe scaling with qubit number.

\paragraph{Structured multimodal benchmarks.}
The equatorial bimodal benchmark uses the two GHZ-type equatorial centers \citep{bengtsson2017geometry,preskill2018nisq}
\[
\frac{\ket{0}^{\otimes n}+\ket{1}^{\otimes n}}{\sqrt{2}},
\qquad
\frac{\ket{0}^{\otimes n}-\ket{1}^{\otimes n}}{\sqrt{2}}.
\]
The trimodal benchmark augments these two coherent equatorial modes with the pole state $\ket{0}^{\otimes n}$.
These tasks are deliberately designed to be phase-sensitive and therefore test whether intrinsic projective transport is more informative than ambient Euclidean interpolation.

\paragraph{Spin-coherent benchmark.}
We use the standard single-qubit coherent-state parameterization \citep{arecchi1972atomic}
\[
\ket{\theta,\phi}
=
\cos(\theta/2)\ket{0}
{}
e^{i\phi}\sin(\theta/2)\ket{1},
\]
with the two orientations $(\theta,\phi)=(\pi/2,0)$ and $(\pi/2,\pi/2)$, corresponding to the $+x$ and $+y$ directions on the Bloch sphere.
The $n$-qubit benchmark centers are the corresponding tensor-product states.

\paragraph{TFIM and XXZ families.}
For the TFIM and XXZ benchmarks, we use compact spin-chain families inspired by standard condensed-matter models \citep{chaikin1995principles,bose2003quantum}.
For the TFIM benchmark we diagonalize
\[
H_{\mathrm{TFIM}}=-\sum_{i=1}^{n-1}Z_iZ_{i+1}-g\sum_{i=1}^{n}X_i,
\qquad
g\in\{0.2,0.5,1.0,2.0\},
\]
and for the XXZ benchmark we diagonalize
\[
H_{\mathrm{XXZ}}=\sum_{i=1}^{n-1}(X_iX_{i+1}+Y_iY_{i+1}+\Delta Z_iZ_{i+1}),
\qquad
\Delta\in\{-1.0,0.0,0.5,1.0\}.
\]
The resulting exact ground states are then used as benchmark centers before perturbation and renormalization.
These families are evaluated at $2$ qubits in the main text and extended to $6$ qubits in the appendix.
These tasks are intended as compact proof-of-concept physics-inspired families rather than full many-body benchmarks.
They should therefore be read as controlled sanity checks for whether IFM behaves sensibly on Hamiltonian-generated state families, not as broad evidence for many-body physics performance.
In particular, the TFIM and XXZ rows use only four Hamiltonian parameters each, so the main empirical support for the geometry claim comes from the controlled IFM-versus-Euclidean-FM comparisons across scaling, multimodal, phase-sensitive, and appendix stress-test settings rather than from these compact families alone.

\paragraph{MNIST-derived amplitude-vector benchmark.}
For MNIST we use balanced digits $\{0,1\}$ from the training set \citep{lecun2002gradient}.
This benchmark is a classical image-vector distribution embedded into the pure-state representation, rather than experimentally collected quantum data.
For the $6$-qubit experiment, images are flattened, rescaled to $[0,1]$, reduced to $64$ dimensions, centered by subtracting the per-example mean, and normalized to unit Euclidean norm.
For the $12$-qubit extension, we use the flattened $784$-dimensional vectors, center them per example, and zero-pad them to dimension $2^{12}=4096$ before normalization.
In both cases, the resulting real vectors are interpreted as pure-state amplitudes.

\begin{figure*}[t]
\centering
\includegraphics[width=0.97\textwidth]{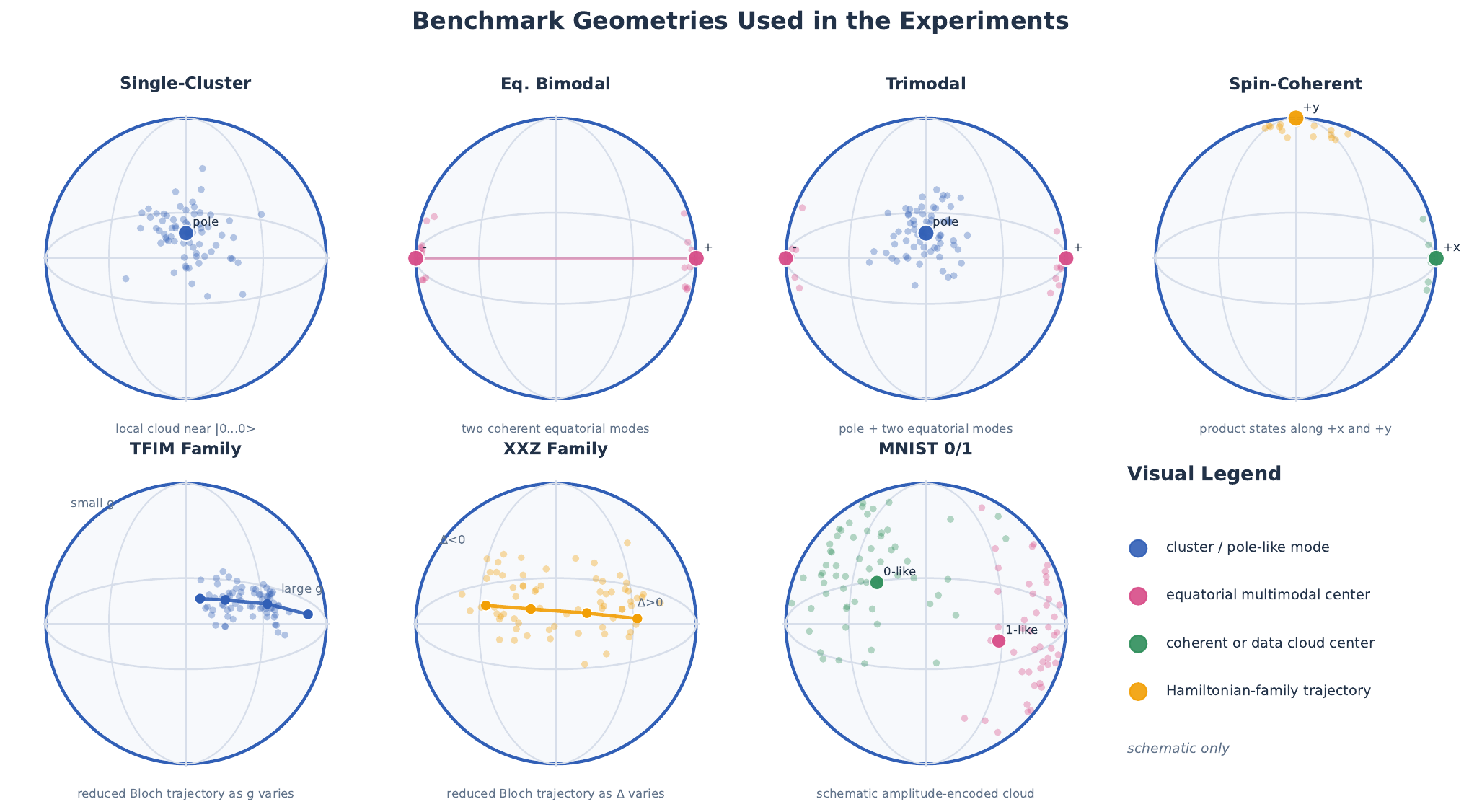}
\caption{Effective Bloch-sphere views of the benchmark families used in our experiments. The panels are schematic and are included to make the qualitative geometry behind Table~\ref{tab:structured_benchmarks} and Table~\ref{tab:appendix_6q_structured} visually interpretable.}
\label{fig:benchmark_bloch_schematics}
\end{figure*}

\paragraph{How to read Figure~\ref{fig:benchmark_bloch_schematics}.}
The synthetic multimodal benchmarks are shown by their mode centers on an effective two-level sphere generated by $\ket{0}^{\otimes n}$ and $\ket{1}^{\otimes n}$.
The spin-coherent benchmark uses physical single-qubit coherent orientations, while the TFIM and XXZ panels show reduced Bloch trajectories of their ground-state families as the Hamiltonian parameter varies.
The MNIST panel is an effective cloud sketch after amplitude encoding and should be read only as a geometric summary rather than as an exact Bloch embedding of the full data distribution.

\subsection{Model architectures}
\label{app:model_architectures}

All learned baselines use the same basic MLP scale so that the main empirical differences are due to geometry and training objective rather than network capacity.
Every model uses a $128$-dimensional sinusoidal time embedding followed by a small time-embedding MLP and a main backbone of width $512$ and depth $5$.

\paragraph{IFM.}
The IFM model in \texttt{sse1\_fm.py} predicts a complex velocity field from the concatenation of the real and imaginary parts of the current state together with the time embedding, following the velocity-field parameterization used in flow matching while enforcing projective tangency \citep{lipman2023flowmatchinggenerativemodeling,chen2023flow}.
Its output is projected to the tangent space at the current state.
In the main experiments, the conditional path family is the phase-aligned normalized-chord path of Section~\ref{sec:conditional-paths}, i.e.\ \texttt{path\_type=chord}.

\paragraph{Euclidean FM.}
The Euclidean FM baseline in \texttt{sse1\_fm\_euclid.py} uses the same time-embedding size, width, and depth, but acts on the ambient real representation in $\mathbb{R}^{2d}$ \citep{lipman2023flowmatchinggenerativemodeling,lipman2022flow}.
It predicts an unconstrained Euclidean velocity and does not quotient out global phase or impose horizontal tangency.

\paragraph{SSDM.}
The intrinsic diffusion baseline in \texttt{sse1.py} uses a score network of the same width ($512$), depth ($5$), and time-embedding dimension ($128$) \citep{xu2026ssdm}.
Its output is interpreted as a complex score and projected back to the tangent space, consistent with the local OU/VP teacher construction used in that model.

\paragraph{Euclidean VP-SDE.}
The Euclidean diffusion baseline in \texttt{esde.py} also uses a $128$-dimensional time embedding and a width-$512$, depth-$5$ MLP, but predicts an ambient score in the real concatenated state-vector representation \citep{song2021scorebased}.

\subsection{Code and supplementary materials}
\label{app:code_contribution}

The complete implementation used for the experiments is provided in the supplementary material accompanying the submission.
In particular, the supplementary package contains the key Python entry points used throughout the empirical study:
\begin{itemize}
    \item \texttt{sse1\_fm.py}: the main IFM implementation, including the phase-aligned normalized-chord path, the geodesic ablation, and the appendix-only generic-geodesic baseline;
    \item \texttt{sse1\_fm\_euclid.py}: the Euclidean flow-matching baseline in the ambient real representation;
    \item \texttt{sse1.py}: the SSDM baseline;
    \item \texttt{esde.py}: the Euclidean VP-SDE baseline;
    \item the QDDPM and QGAN baseline scripts used in the representative diffusion and adversarial comparisons reported in Table~\ref{tab:single_cluster_baselines}.
\end{itemize}
The main IFM algorithmic logic is concentrated in \texttt{sse1\_fm.py}.
That file implements the data generators, conditional-path constructors, tangent-projected velocity network, ODE sampling routine, and evaluation metrics.
We include it in the supplement so that the main empirical claims can be reproduced directly from a single reference implementation.

\subsection{Baseline-specific hyperparameters}
\label{app:baseline_configs}

Table~\ref{tab:appendix_baseline_configs} collects the method-specific settings used in the paper.
For Euclidean VP-SDE, QDDPM, and QGAN, which are only evaluated on the representative $6$-qubit single-cluster benchmark, we report the exact settings used in those runs \citep{song2021scorebased,zhang2024generative,lloyd2018quantum,dallaire2018quantum}.
For SSDM, we use the same underlying implementation family both on the representative single-cluster comparison and on the $2$- and $10$-qubit structured benchmarks, with the dataset mode changed accordingly \citep{xu2026ssdm}.

\begin{table*}[t]
\centering
\scriptsize
\caption{Baseline-specific hyperparameters for the reported experiments. The first two rows summarize the settings used throughout the main IFM-versus-Euclidean-FM comparison \citep{lipman2023flowmatchinggenerativemodeling}, while the remaining rows summarize SSDM \citep{xu2026ssdm}, Euclidean VP-SDE \citep{song2021scorebased}, QDDPM \citep{zhang2024generative}, and QGAN \citep{lloyd2018quantum,dallaire2018quantum} on the representative $6$-qubit single-cluster comparison reported in Table~\ref{tab:single_cluster_baselines}.}
\label{tab:appendix_baseline_configs}
\begin{tabular}{p{2.3cm}p{10.9cm}}
\toprule
Method & Configuration details \\
\midrule
IFM (ours) &
AdamW, learning rate $2\times 10^{-4}$, batch size $64$, training steps $2000$, evaluation every $200$ steps, evaluation batch $256$, deterministic sampling with $K=200$ and $\Delta t=1/200$, data-pool size $4096$, base prior \texttt{local} with scale $0.05$, and \texttt{path\_type=chord}. \\
\addlinespace
Geodesic IFM (ours variant) &
Representative $6$-qubit single-cluster ablation only. Same optimizer, network scale, data pool, batch size, sample budget, and deterministic sampler as IFM, but with \texttt{path\_type=geodesic}, i.e.\ a phase-aligned Fubini--Study-geodesic interpolation used in place of the normalized-chord surrogate. \\
\addlinespace
Generic geodesic FM~\citep{chen2023flow} &
Representative $6$-qubit single-cluster appendix baseline only. Same optimizer, network scale, data pool, batch size, sample budget, and deterministic sampler as IFM, but with \texttt{path\_type=generic\_geodesic}. This variant omits Pancharatnam phase alignment and therefore excludes the quotient-specific endpoint correction introduced by our method. \\
\addlinespace
Euclidean FM &
AdamW, learning rate $2\times 10^{-4}$, batch size $64$, training steps $2000$, evaluation every $200$ steps, evaluation batch $256$, deterministic sampling with $K=200$ in the reported runs, data-pool size $4096$, and local ambient Gaussian base with scale $0.05$. \\
\addlinespace
SSDM &
Used for the representative $6$-qubit single-cluster comparison and for the $2$- and $10$-qubit structured benchmarks in Table~\ref{tab:structured_benchmarks}. AdamW, learning rate $2\times 10^{-4}$, batch size $64$, training steps $2000$, evaluation every $200$ steps, local OU/VP teacher with $K=200$, $\Delta t=\delta=1/200$, $\sigma_{\min}=0.05$, $\sigma_{\max}=1.0$, mean-reversion parameter $\lambda_0=0.2$, and the same benchmark-specific state generators used in the flow-matching scripts via \texttt{SSDM\_DATA\_MODE}. \\
\addlinespace
Euclidean VP-SDE &
Representative $6$-qubit single-cluster run only. AdamW, learning rate $2\times 10^{-4}$, batch size $64$, training steps $2000$, evaluation every $200$ steps, evaluation batch $256$, reverse VP-SDE sampling with $K=200$, and linear schedule parameters $\beta_{\min}=0.1$ and $\beta_{\max}=20.0$. \\
\addlinespace
QDDPM &
Representative $6$-qubit single-cluster run only. Executed in the \texttt{qddpm} conda environment using \texttt{QDDPM\_torch.py}. The baseline uses time-dependent PQCs with $6$ layers per diffusion step over $T=20$ steps. The wall-clock training time measured in our run is reported in Table~\ref{tab:appendix_baseline_timing}. Metrics are computed with the same definitions as in the other baselines. \\
\addlinespace
QGAN &
Representative $6$-qubit single-cluster run only. Executed in the \texttt{qddpm} conda environment using \texttt{QGAN.py}, \texttt{qgan\_train\_1.py}, and the aligned runner \texttt{qgan-1.py}. The baseline uses a $120$-layer PQC generator together with a $16$-layer PQC discriminator. The wall-clock training time measured in our run is reported in Table~\ref{tab:appendix_baseline_timing}. \\
\bottomrule
\end{tabular}
\end{table*}

\begin{table}[t]
\centering
\small
\caption{Training-time comparison on the representative $6$-qubit single-cluster benchmark, including QDDPM \citep{zhang2024generative}, QGAN \citep{lloyd2018quantum,dallaire2018quantum}, and our method. Lower is better. Speedup is measured relative to QDDPM.}
\label{tab:appendix_baseline_timing}
\begin{tabular}{lcc}
\toprule
Method & Train time (s) $\downarrow$ & Speedup \\
\midrule
QDDPM & 21150.52 & $1.0\times$ \\
QGAN & 198.64 & $106.4\times$ \\
IFM (ours, chord) & 240.91 & $87.8\times$ \\
\bottomrule
\end{tabular}
\end{table}

\begin{table}[t]
\centering
\small
\caption{Appendix-only $10$-qubit single-cluster extension of Table~\ref{tab:single_cluster_baselines} for the scalable baselines: Euclidean VP-SDE \citep{song2021scorebased}, SSDM \citep{xu2026ssdm}, and Euclidean FM \citep{lipman2023flowmatchinggenerativemodeling}. Lower is better. QDDPM and QGAN are omitted here because our reported circuit-based runs were only executed on the representative $6$-qubit benchmark.}
\label{tab:appendix_single_cluster_10q}
\begin{tabular}{lccc}
\toprule
Method & MMD $\downarrow$ & $\Delta_{\mathrm{obs}} \downarrow$ & Ent.~$W_1 \downarrow$ \\
\midrule
Euclidean VP-SDE & $2.3049 \times 10^{-2}$ & $1.1782 \times 10^{-1}$ & $8.7980 \times 10^{-2}$ \\
SSDM & $1.4157 \times 10^{-2}$ & $1.1883 \times 10^{-1}$ & $8.6077 \times 10^{-2}$ \\
Euclidean FM & $1.6830 \times 10^{-2}$ & $9.5109 \times 10^{-2}$ & $8.2071 \times 10^{-2}$ \\
Geodesic IFM (ours variant) & $7.3038 \times 10^{-4}$ & $1.3659 \times 10^{-3}$ & $1.5996 \times 10^{-3}$ \\
IFM (ours) & $7.3779 \times 10^{-4}$ & $1.3936 \times 10^{-3}$ & $1.4582 \times 10^{-3}$ \\
\bottomrule
\end{tabular}
\end{table}

\paragraph{Scope of the diffusion comparisons.}
Our main experimental question is whether respecting the intrinsic geometry of $\mathbb{CP}^{d-1}$ matters once the learning principle is held fixed.
That question is most directly isolated by comparing IFM to Euclidean FM across the full benchmark suite.
We therefore use Euclidean VP-SDE as a representative ambient diffusion anchor only on the single-cluster task, while SSDM is evaluated both on that representative task and on the small- and higher-qubit structured benchmarks where an intrinsic diffusion comparison remains especially informative \citep{song2021scorebased,xu2026ssdm}.
Even so, the broader sweep still concentrates on the sharpest experimental contrast in the paper, namely intrinsic-versus-ambient flow matching.
Table~\ref{tab:appendix_single_cluster_10q} extends the single-cluster baseline picture to $10$ qubits for the scalable baselines and shows that the separation between IFM and the diffusion baselines becomes much larger once the dense-state dimension increases.

\subsection{Sampling and reporting details}
\label{app:reporting_details}

For IFM, samples are generated by integrating the learned tangent field from the chosen base prior and applying the normalization-based retraction at each step.
For Euclidean FM, samples are generated in the ambient real representation and then mapped back to valid pure states by complex reconstruction and final normalization.
For SSDM and Euclidean VP-SDE, we use the reverse-time stochastic samplers implemented in their respective scripts with the settings summarized above.

All numbers reported in the main text are produced under the matched training-budget and evaluation protocol described above.
More precisely, for each seed we evaluate the model every $200$ optimization steps, select the checkpoint with the best validation MMD, and then report MMD, $\Delta_{\mathrm{obs}}$, and Ent.~$W_1$ together at that single checkpoint.
The main-text table entries report the mean and the unbiased sample standard deviation across the $10$ seeds (seed indices $\{1,\ldots,10\}$); the observed variances are small relative to the mean differences between methods and do not affect the qualitative conclusions.
Because the main protocol is intentionally lightweight, Appendix~\ref{app:14q_flow_stress} additionally includes a $20{,}000$-step high-dimensional flow-matching check to test whether the Euclidean FM gap on a $14$-qubit task is mainly an undertraining artifact.
Our aim in this paper is to establish the empirical value of the geometric inductive bias rather than to present a final large-scale benchmark study.
Whenever a value is shown as ``$\approx 0$'', the underlying number was positive but rounded to zero at the output precision used during training.

\subsection{Additional single-cluster discussion}
\label{app:single_cluster_discussion}

The single-cluster benchmark is intentionally the easiest setting in the paper, but it is useful for isolating scaling effects.
At $2$ qubits, both IFM and Euclidean FM fit the target well, while IFM still improves all three metrics.
At $6$ qubits, the main improvement is in MMD and entanglement structure, with comparable observable discrepancy.
By $10$ and $12$ qubits, the gap becomes much larger across all reported metrics, consistent with the view that projective geometric bias becomes more valuable as ambient redundancy grows.
The geodesic IFM ablation is close to IFM on the representative $6$-qubit single-cluster task, suggesting that for this easy unimodal family the main benefit is intrinsic transport itself rather than the precise choice between geodesic and phase-aligned normalized-chord paths.
Table~\ref{tab:appendix_single_cluster_10q} extends the scalable-baseline comparison to $10$ qubits, and Table~\ref{tab:appendix_baseline_timing} reports the large training-time gap between circuit baselines and IFM.

\subsection{Fourteen-qubit flow-matching stress test}
\label{app:14q_flow_stress}

Table~\ref{tab:appendix_14q_flow_stress} reports an additional $14$-qubit stress test comparing only IFM and Euclidean FM on the scalable benchmark families.
These runs use the same $2000$ training steps, $K=200$ deterministic sampling steps, network scale, and evaluation protocol as the main flow-matching experiments.
We include them to show the behavior of the intrinsic-versus-ambient flow comparison at a larger dense-state dimension.

\begin{table}[t]
\centering
\small
\caption{$14$-qubit stress test for IFM and Euclidean FM on scalable benchmark families. Lower is better.}
\label{tab:appendix_14q_flow_stress}
\begin{tabular}{llccc}
\toprule
Benchmark & Method & MMD $\downarrow$ & $\Delta_{\mathrm{obs}} \downarrow$ & Ent.~$W_1 \downarrow$ \\
\midrule
Single-cluster & Euclidean FM & $7.8806 \times 10^{-3}$ & $7.9683 \times 10^{-3}$ & $2.0976 \times 10^{-3}$ \\
Single-cluster & IFM (ours) & $2.7314 \times 10^{-4}$ & $3.2485 \times 10^{-3}$ & $1.5192 \times 10^{-3}$ \\
\addlinespace
Equatorial bimodal & Euclidean FM & $7.8461 \times 10^{-3}$ & $4.1493 \times 10^{-3}$ & $1.2981 \times 10^{-3}$ \\
Equatorial bimodal & IFM (ours) & $3.2034 \times 10^{-4}$ & $4.0576 \times 10^{-3}$ & $1.0721 \times 10^{-3}$ \\
\addlinespace
Trimodal & Euclidean FM & $7.8489 \times 10^{-3}$ & $5.0208 \times 10^{-3}$ & $1.5435 \times 10^{-3}$ \\
Trimodal & IFM (ours) & $3.2284 \times 10^{-4}$ & $2.7896 \times 10^{-3}$ & $1.2578 \times 10^{-3}$ \\
\addlinespace
Spin-coherent & Euclidean FM & $7.8462 \times 10^{-3}$ & $4.0279 \times 10^{-4}$ & $2.4284 \times 10^{-3}$ \\
Spin-coherent & IFM (ours) & $2.8126 \times 10^{-4}$ & $5.8993 \times 10^{-4}$ & $2.3783 \times 10^{-3}$ \\
\addlinespace
MNIST $0/1$ & Euclidean FM & $1.5637 \times 10^{-1}$ & $3.2145 \times 10^{-1}$ & $2.8245 \times 10^{0}$ \\
MNIST $0/1$ & IFM (ours) & $2.4333 \times 10^{-2}$ & $1.4485 \times 10^{-1}$ & $1.7042 \times 10^{0}$ \\
\bottomrule
\end{tabular}
\end{table}

To check whether the $2000$-step budget is simply undertraining the ambient baseline in the high-dimensional regime, we additionally ran a $20{,}000$-step comparison on the $14$-qubit single-cluster task.
Figure~\ref{fig:longrun_14q_cluster} plots MMD, $\Delta_{\mathrm{obs}}$, and Ent.~$W_1$ throughout training.
Euclidean FM remains essentially flat in MMD over the longer run, ending at $7.88\times 10^{-3}$, close to its $2000$-step value in Table~\ref{tab:appendix_14q_flow_stress}.
IFM remains more than an order of magnitude lower in MMD and also retains lower observable and entanglement discrepancies by the end of the run.
This does not rule out all possible effects of hyperparameter tuning or architecture changes, but it indicates that the reported high-dimensional gap is not explained by the short training schedule alone.

\begin{figure*}[t]
\centering
\includegraphics[width=0.92\textwidth]{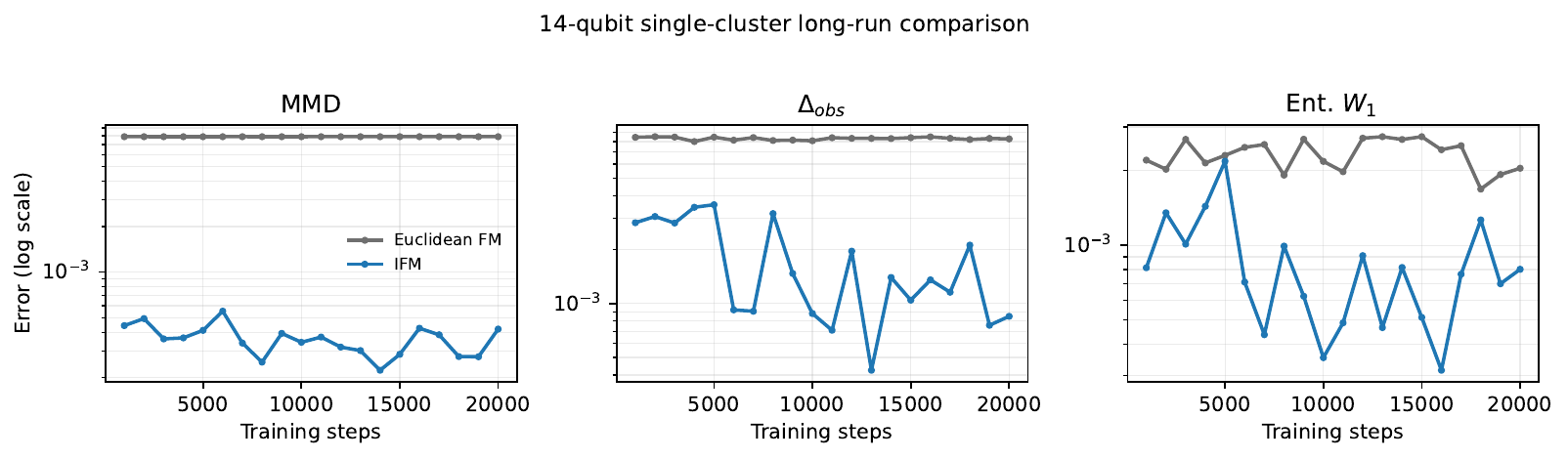}
\caption{$20{,}000$-step high-dimensional training check on the $14$-qubit single-cluster benchmark. Both methods use the same network scale, batch size, local base prior, evaluation batch size, deterministic sampler with $K=200$, and optimizer settings as the main flow-matching experiments, except for the longer training horizon and evaluation every $1000$ steps. Lower is better.}
\label{fig:longrun_14q_cluster}
\end{figure*}

\subsection{Additional structured-benchmark discussion}
\label{app:structured_discussion}

The synthetic multimodal tasks highlight where phase-aware projective transport matters most.
In the equatorial bimodal benchmark, the two modes differ by relative phase rather than by occupation of distant computational-basis corners; the trimodal task adds a pole state to these coherent modes.
These are precisely the settings where Pancharatnam-aligned conditional paths are more informative than ambient Euclidean interpolation.
The SSDM rows show that intrinsic diffusion can already improve over Euclidean FM on several structured tasks, but IFM often gives stronger MMD and better coherence-sensitive behavior, especially at $10$ qubits.
The spin-coherent benchmark gives a related geometric test using product states oriented along different Bloch-sphere directions; IFM again achieves much smaller MMD and entanglement Wasserstein distance at larger scale.

\subsection{Six-qubit extensions of the structured benchmarks}
\label{app:6q_structured}

To test whether the trends in Table~\ref{tab:structured_benchmarks} remain stable between the small-system and higher-qubit regimes, we also evaluated $6$-qubit versions of the first five benchmark families from that table.
Table~\ref{tab:appendix_6q_structured} reports the corresponding $6$-qubit results under the same matched protocol.
The intermediate-scale picture is deliberately more nuanced than a uniform-win narrative: IFM remains clearly stronger on trimodal and TFIM-style structured transport, stays competitive on the equatorial bimodal benchmark, improves MMD and observable matching on the spin-coherent benchmark, and exhibits a more mixed trade-off on XXZ.

\begin{table*}[t]
\centering
\small
\caption{Six-qubit extensions of the first five benchmark families from Table~\ref{tab:structured_benchmarks}, comparing IFM with Euclidean FM \citep{lipman2023flowmatchinggenerativemodeling}. Lower is better. These appendix-only results probe the intermediate regime between the small-system and $10$-qubit results reported in the main text.}
\label{tab:appendix_6q_structured}
\begin{tabular}{llccc}
\toprule
Benchmark & Method & MMD $\downarrow$ & $\Delta_{\mathrm{obs}} \downarrow$ & Ent.~$W_1 \downarrow$ \\
\midrule
Eq. bimodal ($6$ qubits) & Euclidean FM & $2.1861 \times 10^{-2}$ & $1.0823 \times 10^{-1}$ & $2.4905 \times 10^{-2}$ \\
Eq. bimodal ($6$ qubits) & IFM (ours) & $6.2495 \times 10^{-3}$ & $6.4122 \times 10^{-2}$ & $2.9000 \times 10^{-2}$ \\
\addlinespace
Trimodal ($6$ qubits) & Euclidean FM & $1.3397 \times 10^{-2}$ & $4.2138 \times 10^{-2}$ & $6.5884 \times 10^{-2}$ \\
Trimodal ($6$ qubits) & IFM (ours) & $3.6621 \times 10^{-4}$ & $2.0026 \times 10^{-2}$ & $5.3195 \times 10^{-2}$ \\
\addlinespace
Spin-coherent ($6$ qubits) & Euclidean FM & $1.6100 \times 10^{-2}$ & $1.1463 \times 10^{-2}$ & $1.5728 \times 10^{-1}$ \\
Spin-coherent ($6$ qubits) & IFM (ours) & $4.7601 \times 10^{-3}$ & $6.5775 \times 10^{-3}$ & $2.0694 \times 10^{-1}$ \\
\addlinespace
TFIM ($6$ qubits) & Euclidean FM & $3.1681 \times 10^{-2}$ & $5.6582 \times 10^{-2}$ & $6.2842 \times 10^{-2}$ \\
TFIM ($6$ qubits) & IFM (ours) & $1.7825 \times 10^{-3}$ & $1.0956 \times 10^{-2}$ & $2.5062 \times 10^{-2}$ \\
\addlinespace
XXZ ($6$ qubits) & Euclidean FM & $1.1880 \times 10^{-2}$ & $1.0420 \times 10^{-2}$ & $2.7554 \times 10^{-2}$ \\
XXZ ($6$ qubits) & IFM (ours) & $1.2254 \times 10^{-2}$ & $4.8443 \times 10^{-2}$ & $9.1078 \times 10^{-2}$ \\
\bottomrule
\end{tabular}
\end{table*}

\subsection{Six-qubit geometry and path-choice ablation summary}
\label{app:geometry_path_ablation_6q}

Table~\ref{tab:appendix_geometry_path_summary} summarizes the metric-wise winners among three flow-matching variants at $6$ qubits:
ambient Euclidean FM, generic geodesic FM on the manifold, and IFM.
The purpose is to disentangle three effects that are otherwise coupled in the main comparison: moving from ambient to intrinsic geometry, choosing a generic manifold geodesic versus a quantum-specific path, and applying Pancharatnam phase alignment.
The summary shows that intrinsic geometry explains a large part of the gain over Euclidean FM, but the remaining advantage of IFM is benchmark dependent.
In particular, IFM is strongest on trimodal and TFIM at $6$ qubits, while generic geodesic FM is the clear winner on XXZ at the same scale.
This supports a more nuanced interpretation: phase-aligned IFM is not uniformly best across all summaries, but it is especially effective when coherent mode organization makes endpoint gauge choice and path coupling important.

\begin{table}[t]
\centering
\small
\caption{Metric-wise winners in the $6$-qubit geometry/path ablation. Candidates are Euclidean FM, generic geodesic FM \citep{chen2023flow}, and IFM. Lower is better for all metrics.}
\label{tab:appendix_geometry_path_summary}
\begin{tabular}{p{2.0cm}p{2.4cm}p{2.3cm}p{2.3cm}p{3.2cm}}
\toprule
Benchmark & MMD winner & $\Delta_{\mathrm{obs}}$ winner & Ent.~$W_1$ winner & Takeaway \\
\midrule
Single-cluster & Generic geodesic FM & Euclidean FM & IFM & Mostly geometry-driven; easy task. \\
Eq. bimodal & IFM & IFM & Euclidean FM & Phase alignment helps MMD/observables. \\
Trimodal & IFM & IFM & IFM & Strongest evidence for IFM coupling. \\
Spin-coherent & IFM & IFM & Euclidean FM & MMD/observables improve; entanglement mixed. \\
TFIM & IFM & IFM & IFM & IFM best among these flow variants. \\
XXZ & Generic geodesic FM & Generic geodesic FM & Generic geodesic FM & Generic geodesic coupling is better here. \\
\bottomrule
\end{tabular}
\end{table}

\subsection{Phase-alignment ablation on six-qubit structured benchmarks}
\label{app:phase_ablation_6q}

To isolate the role of the phase-alignment component itself, we additionally compare IFM against a matched intrinsic baseline in which the conditional path is replaced by the raw generic geodesic interpolation from Appendix~\ref{app:interpolation_details}.
This ablation keeps the same IFM network, optimizer, tangent projection, and deterministic sampling scheme, and changes only one modeling ingredient: the Pancharatnam-aligned endpoint representative is removed.
In other words, the baseline still learns an intrinsic manifold flow, but it no longer fixes the endpoint gauge before interpolation.

Table~\ref{tab:appendix_phase_ablation} reports the resulting $6$-qubit comparison on all five structured benchmarks from Table~\ref{tab:structured_benchmarks}.
On the equatorial bimodal and trimodal tasks, removing phase alignment consistently worsens MMD and observable matching, with the trimodal benchmark showing the clearest across-the-board degradation.
On TFIM, the no-alignment variant is also markedly worse on all three metrics, while on spin-coherent and XXZ the comparison is more mixed, indicating that phase alignment is most important when the target ensemble is organized by coherent multimodal structure rather than by a smoother family geometry alone.
We therefore interpret phase alignment as a specifically useful ingredient for quantum ensembles whose transport depends sensitively on quotient-consistent mode organization.

\begin{table}[t]
\centering
\small
\caption{Ablation of the phase-alignment component on five $6$-qubit structured benchmarks. Lower is better. ``No phase alignment'' uses the same intrinsic FM model with \texttt{path\_type=generic\_geodesic}, i.e.\ the raw endpoint representatives are interpolated directly without Pancharatnam gauge fixing.}
\label{tab:appendix_phase_ablation}
\begin{tabular}{llccc}
\toprule
Benchmark & Method & MMD $\downarrow$ & $\Delta_{\mathrm{obs}} \downarrow$ & Ent.~$W_1 \downarrow$ \\
\midrule
Eq. bimodal ($6$ qubits) & No phase alignment & $8.1550 \times 10^{-3}$ & $6.5664 \times 10^{-2}$ & $2.8592 \times 10^{-2}$ \\
Eq. bimodal ($6$ qubits) & IFM (ours) & $6.2495 \times 10^{-3}$ & $6.4122 \times 10^{-2}$ & $2.9000 \times 10^{-2}$ \\
\addlinespace
Trimodal ($6$ qubits) & No phase alignment & $6.1315 \times 10^{-4}$ & $2.3808 \times 10^{-2}$ & $5.3652 \times 10^{-2}$ \\
Trimodal ($6$ qubits) & IFM (ours) & $3.6621 \times 10^{-4}$ & $2.0026 \times 10^{-2}$ & $5.3195 \times 10^{-2}$ \\
\addlinespace
Spin-coherent ($6$ qubits) & No phase alignment & $1.1341 \times 10^{-2}$ & $9.1691 \times 10^{-3}$ & $1.6435 \times 10^{-1}$ \\
Spin-coherent ($6$ qubits) & IFM (ours) & $4.7601 \times 10^{-3}$ & $6.5775 \times 10^{-3}$ & $2.0694 \times 10^{-1}$ \\
\addlinespace
TFIM ($6$ qubits) & No phase alignment & $1.8038 \times 10^{-2}$ & $5.3284 \times 10^{-2}$ & $1.2640 \times 10^{-1}$ \\
TFIM ($6$ qubits) & IFM (ours) & $1.7825 \times 10^{-3}$ & $1.0956 \times 10^{-2}$ & $2.5062 \times 10^{-2}$ \\
\addlinespace
XXZ ($6$ qubits) & No phase alignment & $9.5237 \times 10^{-3}$ & $9.9533 \times 10^{-3}$ & $1.0297 \times 10^{-2}$ \\
XXZ ($6$ qubits) & IFM (ours) & $1.2254 \times 10^{-2}$ & $4.8443 \times 10^{-2}$ & $9.1078 \times 10^{-2}$ \\
\bottomrule
\end{tabular}
\end{table}

\subsection{Generic manifold-flow-matching baseline on six-qubit single-cluster and structured tasks}
\label{app:fmgg_baseline}

To complement the Euclidean and quantum-specific baselines, we also evaluated an appendix-only generic manifold-flow-matching variant inspired by \emph{Flow Matching on General Geometries} \citep{chen2023flow}.
This is an author-constructed comparator rather than a baseline reported in the original RFM paper, which does not include a $\mathbb{CP}^{d-1}$ quantum pure-state benchmark or a Pancharatnam/Hopf-quotient implementation.
The goal of this comparison is deliberately narrow: we keep the same IFM network, optimizer, tangent projection, sampling step count, deterministic sampler, data pools, and evaluation metrics, but replace our quotient-aware path construction by the raw geodesic interpolation described in Appendix~\ref{app:interpolation_details}.
In particular, this baseline does \emph{not} perform Pancharatnam phase alignment before defining the conditional path.
It is therefore a strong generic intrinsic baseline, but it is not specifically designed for the projective gauge structure of quantum pure states.

Table~\ref{tab:appendix_fmgg_baseline} shows the resulting $6$-qubit comparison on the easy single-cluster benchmark together with the five structured benchmarks from Table~\ref{tab:structured_benchmarks}, plus one higher-qubit XXZ extension.
On the single-cluster task, the generic geodesic manifold baseline already captures most of the gain obtained by moving from ambient Euclidean flow matching to an intrinsic manifold formulation.
Across the structured benchmarks, its behavior is more mixed: it improves substantially over Euclidean FM on equatorial bimodal, trimodal, TFIM, and XXZ in MMD, but remains clearly weaker than IFM on TFIM and on the coherence-sensitive multimodal tasks in overall balance.
XXZ at $6$ qubits is the main counterexample, where the generic geodesic manifold baseline is the strongest of the three.
By contrast, on the $10$-qubit XXZ extension, the generic manifold baseline becomes highly competitive but IFM regains a small edge on all three reported metrics.
We therefore interpret the FMGG-style comparison as evidence that raw manifold geodesics recover part of the intrinsic benefit, while also exposing the tension in the narrative: IFM's advantage is not guaranteed by intrinsic geometry alone.
Rather, the strongest claim supported by these results is that quotient-aware phase alignment and the IFM path choice often help when the target ensemble has coherent projective organization, whereas smoother or benchmark-specific couplings can favor a generic geodesic path.

\begin{table*}[t]
\centering
\small
\caption{Appendix-only comparison to a generic manifold-geodesic flow-matching baseline inspired by manifold flow matching \citep{chen2023flow}, together with Euclidean FM \citep{lipman2023flowmatchinggenerativemodeling}, on $6$-qubit benchmarks and an additional $10$-qubit XXZ comparison. Lower is better.}
\label{tab:appendix_fmgg_baseline}
\begin{tabular}{llccc}
\toprule
Benchmark & Method & MMD $\downarrow$ & $\Delta_{\mathrm{obs}} \downarrow$ & Ent.~$W_1 \downarrow$ \\
\midrule
Single-cluster ($6$ qubits) & Euclidean FM & $6.0216 \times 10^{-3}$ & $2.1572 \times 10^{-3}$ & $8.1543 \times 10^{-3}$ \\
Single-cluster ($6$ qubits) & Generic geodesic FM~\citep{chen2023flow} & $2.5475 \times 10^{-4}$ & $2.4098 \times 10^{-3}$ & $5.2252 \times 10^{-3}$ \\
Single-cluster ($6$ qubits) & Geodesic IFM (ours variant) & $2.4933 \times 10^{-4}$ & $2.4820 \times 10^{-3}$ & $4.3428 \times 10^{-3}$ \\
Single-cluster ($6$ qubits) & IFM (ours) & $2.6280 \times 10^{-4}$ & $2.3388 \times 10^{-3}$ & $4.6615 \times 10^{-3}$ \\
\addlinespace
TFIM ($6$ qubits) & Euclidean FM & $3.1681 \times 10^{-2}$ & $5.6582 \times 10^{-2}$ & $6.2842 \times 10^{-2}$ \\
TFIM ($6$ qubits) & Generic geodesic FM~\citep{chen2023flow} & $1.8038 \times 10^{-2}$ & $5.3284 \times 10^{-2}$ & $1.2640 \times 10^{-1}$ \\
TFIM ($6$ qubits) & IFM (ours) & $1.7825 \times 10^{-3}$ & $1.0956 \times 10^{-2}$ & $2.5062 \times 10^{-2}$ \\
\addlinespace
Eq. bimodal ($6$ qubits) & Euclidean FM & $2.1861 \times 10^{-2}$ & $1.0823 \times 10^{-1}$ & $2.4905 \times 10^{-2}$ \\
Eq. bimodal ($6$ qubits) & Generic geodesic FM~\citep{chen2023flow} & $8.1550 \times 10^{-3}$ & $6.5664 \times 10^{-2}$ & $2.8592 \times 10^{-2}$ \\
Eq. bimodal ($6$ qubits) & IFM (ours) & $6.2495 \times 10^{-3}$ & $6.4122 \times 10^{-2}$ & $2.9000 \times 10^{-2}$ \\
\addlinespace
Trimodal ($6$ qubits) & Euclidean FM & $1.3397 \times 10^{-2}$ & $4.2138 \times 10^{-2}$ & $6.5884 \times 10^{-2}$ \\
Trimodal ($6$ qubits) & Generic geodesic FM~\citep{chen2023flow} & $6.1315 \times 10^{-4}$ & $2.3808 \times 10^{-2}$ & $5.3652 \times 10^{-2}$ \\
Trimodal ($6$ qubits) & IFM (ours) & $3.6621 \times 10^{-4}$ & $2.0026 \times 10^{-2}$ & $5.3195 \times 10^{-2}$ \\
\addlinespace
Spin-coherent ($6$ qubits) & Euclidean FM & $1.6100 \times 10^{-2}$ & $1.1463 \times 10^{-2}$ & $1.5728 \times 10^{-1}$ \\
Spin-coherent ($6$ qubits) & Generic geodesic FM~\citep{chen2023flow} & $1.1341 \times 10^{-2}$ & $9.1691 \times 10^{-3}$ & $1.6435 \times 10^{-1}$ \\
Spin-coherent ($6$ qubits) & IFM (ours) & $4.7601 \times 10^{-3}$ & $6.5775 \times 10^{-3}$ & $2.0694 \times 10^{-1}$ \\
\addlinespace
XXZ ($6$ qubits) & Euclidean FM & $1.1880 \times 10^{-2}$ & $1.0420 \times 10^{-2}$ & $2.7554 \times 10^{-2}$ \\
XXZ ($6$ qubits) & Generic geodesic FM~\citep{chen2023flow} & $9.5237 \times 10^{-3}$ & $9.9533 \times 10^{-3}$ & $1.0297 \times 10^{-2}$ \\
XXZ ($6$ qubits) & IFM (ours) & $1.2254 \times 10^{-2}$ & $4.8443 \times 10^{-2}$ & $9.1078 \times 10^{-2}$ \\
\addlinespace
XXZ ($10$ qubits) & Euclidean FM & $1.3065 \times 10^{-2}$ & $2.2184 \times 10^{-2}$ & $4.2252 \times 10^{-2}$ \\
XXZ ($10$ qubits) & Generic geodesic FM~\citep{chen2023flow} & $6.4518 \times 10^{-4}$ & $4.3118 \times 10^{-3}$ & $2.5516 \times 10^{-2}$ \\
XXZ ($10$ qubits) & IFM (ours) & $3.5211 \times 10^{-4}$ & $3.9172 \times 10^{-3}$ & $2.5046 \times 10^{-2}$ \\
\bottomrule
\end{tabular}
\end{table*}

\section{Limitations and Future Work}

Our current empirical study is carried out in the dense pure-state regime with deliberately interpretable benchmark families.
The benchmark suite is designed to isolate geometry, phase alignment, multimodality, and high-dimensional projective effects under controlled conditions, extending intrinsic quantum generative comparisons beyond the common $6$-qubit setting to structured $10$-qubit and selected $12$-qubit tests.
Natural empirical extensions include ensembles obtained from tomography pipelines, variational-circuit trajectories, simulated or experimental quantum devices, task-level QML workloads, larger many-body Hamiltonian families, and compressed representations such as tensor networks or shadows.
The framework also focuses on pure states on $\mathbb{CP}^{d-1}$; mixed-state, density-matrix, and hybrid pure-state/density-matrix transport are promising directions for extending the same intrinsic-flow viewpoint.
On the theory side, the endpoint and stability results suggest several next steps, including sharper statistical analyses, dimension-aware bounds, and connections between intrinsic flow matching, probability currents, and control-theoretic transport.

\section{Declaration of LLM usage}
LLMs were used for wording and sentence polishing.


\newpage

\section*{NeurIPS Paper Checklist}

\begin{enumerate}

\item {\bf Claims}
    \item[] Question: Do the main claims made in the abstract and introduction accurately reflect the paper's contributions and scope?
    \item[] Answer: \answerYes{} 
    \item[] Guidelines:
    \begin{itemize}
        \item The answer \answerNA{} means that the abstract and introduction do not include the claims made in the paper.
        \item The abstract and/or introduction should clearly state the claims made, including the contributions made in the paper and important assumptions and limitations. A \answerNo{} or \answerNA{} answer to this question will not be perceived well by the reviewers. 
        \item The claims made should match theoretical and experimental results, and reflect how much the results can be expected to generalize to other settings. 
        \item It is fine to include aspirational goals as motivation as long as it is clear that these goals are not attained by the paper. 
    \end{itemize}

\item {\bf Limitations}
    \item[] Question: Does the paper discuss the limitations of the work performed by the authors?
    \item[] Answer: \answerYes{} 
    \item[] Guidelines:
    \begin{itemize}
        \item The answer \answerNA{} means that the paper has no limitation while the answer \answerNo{} means that the paper has limitations, but those are not discussed in the paper. 
        \item The authors are encouraged to create a separate ``Limitations'' section in their paper.
        \item The paper should point out any strong assumptions and how robust the results are to violations of these assumptions (e.g., independence assumptions, noiseless settings, model well-specification, asymptotic approximations only holding locally). The authors should reflect on how these assumptions might be violated in practice and what the implications would be.
        \item The authors should reflect on the scope of the claims made, e.g., if the approach was only tested on a few datasets or with a few runs. In general, empirical results often depend on implicit assumptions, which should be articulated.
        \item The authors should reflect on the factors that influence the performance of the approach. For example, a facial recognition algorithm may perform poorly when image resolution is low or images are taken in low lighting. Or a speech-to-text system might not be used reliably to provide closed captions for online lectures because it fails to handle technical jargon.
        \item The authors should discuss the computational efficiency of the proposed algorithms and how they scale with dataset size.
        \item If applicable, the authors should discuss possible limitations of their approach to address problems of privacy and fairness.
        \item While the authors might fear that complete honesty about limitations might be used by reviewers as grounds for rejection, a worse outcome might be that reviewers discover limitations that aren't acknowledged in the paper. The authors should use their best judgment and recognize that individual actions in favor of transparency play an important role in developing norms that preserve the integrity of the community. Reviewers will be specifically instructed to not penalize honesty concerning limitations.
    \end{itemize}

\item {\bf Theory assumptions and proofs}
    \item[] Question: For each theoretical result, does the paper provide the full set of assumptions and a complete (and correct) proof?
    \item[] Answer: \answerYes{} 
    \item[] Guidelines:
    \begin{itemize}
        \item The answer \answerNA{} means that the paper does not include theoretical results. 
        \item All the theorems, formulas, and proofs in the paper should be numbered and cross-referenced.
        \item All assumptions should be clearly stated or referenced in the statement of any theorems.
        \item The proofs can either appear in the main paper or the supplemental material, but if they appear in the supplemental material, the authors are encouraged to provide a short proof sketch to provide intuition. 
        \item Inversely, any informal proof provided in the core of the paper should be complemented by formal proofs provided in appendix or supplemental material.
        \item Theorems and Lemmas that the proof relies upon should be properly referenced. 
    \end{itemize}

    \item {\bf Experimental result reproducibility}
    \item[] Question: Does the paper fully disclose all the information needed to reproduce the main experimental results of the paper to the extent that it affects the main claims and/or conclusions of the paper (regardless of whether the code and data are provided or not)?
    \item[] Answer: \answerYes{}
    \item[] Guidelines:
    \begin{itemize}
        \item The answer \answerNA{} means that the paper does not include experiments.
        \item If the paper includes experiments, a \answerNo{} answer to this question will not be perceived well by the reviewers: Making the paper reproducible is important, regardless of whether the code and data are provided or not.
        \item If the contribution is a dataset and\slash or model, the authors should describe the steps taken to make their results reproducible or verifiable. 
        \item Depending on the contribution, reproducibility can be accomplished in various ways. For example, if the contribution is a novel architecture, describing the architecture fully might suffice, or if the contribution is a specific model and empirical evaluation, it may be necessary to either make it possible for others to replicate the model with the same dataset, or provide access to the model. In general. releasing code and data is often one good way to accomplish this, but reproducibility can also be provided via detailed instructions for how to replicate the results, access to a hosted model (e.g., in the case of a large language model), releasing of a model checkpoint, or other means that are appropriate to the research performed.
        \item While NeurIPS does not require releasing code, the conference does require all submissions to provide some reasonable avenue for reproducibility, which may depend on the nature of the contribution. For example
        \begin{enumerate}
            \item If the contribution is primarily a new algorithm, the paper should make it clear how to reproduce that algorithm.
            \item If the contribution is primarily a new model architecture, the paper should describe the architecture clearly and fully.
            \item If the contribution is a new model (e.g., a large language model), then there should either be a way to access this model for reproducing the results or a way to reproduce the model (e.g., with an open-source dataset or instructions for how to construct the dataset).
            \item We recognize that reproducibility may be tricky in some cases, in which case authors are welcome to describe the particular way they provide for reproducibility. In the case of closed-source models, it may be that access to the model is limited in some way (e.g., to registered users), but it should be possible for other researchers to have some path to reproducing or verifying the results.
        \end{enumerate}
    \end{itemize}

\item {\bf Open access to data and code}
    \item[] Question: Does the paper provide open access to the data and code, with sufficient instructions to faithfully reproduce the main experimental results, as described in supplemental material?
    \item[] Answer: \answerYes{} 
    \item[] Guidelines:
    \begin{itemize}
        \item The answer \answerNA{} means that paper does not include experiments requiring code.
        \item Please see the NeurIPS code and data submission guidelines (\url{https://neurips.cc/public/guides/CodeSubmissionPolicy}) for more details.
        \item While we encourage the release of code and data, we understand that this might not be possible, so \answerNo{} is an acceptable answer. Papers cannot be rejected simply for not including code, unless this is central to the contribution (e.g., for a new open-source benchmark).
        \item The instructions should contain the exact command and environment needed to run to reproduce the results. See the NeurIPS code and data submission guidelines (\url{https://neurips.cc/public/guides/CodeSubmissionPolicy}) for more details.
        \item The authors should provide instructions on data access and preparation, including how to access the raw data, preprocessed data, intermediate data, and generated data, etc.
        \item The authors should provide scripts to reproduce all experimental results for the new proposed method and baselines. If only a subset of experiments are reproducible, they should state which ones are omitted from the script and why.
        \item At submission time, to preserve anonymity, the authors should release anonymized versions (if applicable).
        \item Providing as much information as possible in supplemental material (appended to the paper) is recommended, but including URLs to data and code is permitted.
    \end{itemize}

\item {\bf Experimental setting/details}
    \item[] Question: Does the paper specify all the training and test details (e.g., data splits, hyperparameters, how they were chosen, type of optimizer) necessary to understand the results?
    \item[] Answer: \answerYes{} 
    \item[] Guidelines:
    \begin{itemize}
        \item The answer \answerNA{} means that the paper does not include experiments.
        \item The experimental setting should be presented in the core of the paper to a level of detail that is necessary to appreciate the results and make sense of them.
        \item The full details can be provided either with the code, in appendix, or as supplemental material.
    \end{itemize}

\item {\bf Experiment statistical significance}
    \item[] Question: Does the paper report error bars suitably and correctly defined or other appropriate information about the statistical significance of the experiments?
    \item[] Answer: \answerYes{} 
    \item[] Justification: All main-text tables (Tables~\ref{tab:single_cluster_baselines}, \ref{tab:cluster_scaling}, and \ref{tab:structured_benchmarks}) report mean $\pm$ standard deviation across $10$ independent random seeds (seed indices $\{1,\ldots,10\}$), with entries shown in the form $(\mu \pm \sigma) \times 10^{c}$. For each seed we select a single checkpoint using validation MMD and evaluate MMD, $\Delta_{\mathrm{obs}}$, and Ent.~$W_1$ at that checkpoint, then aggregate across seeds. The reported error bars are unbiased sample standard deviations capturing variability across seed-controlled network initialization, data sampling, and base-prior draws. The full protocol is described in Appendix~\ref{app:exp_protocol}. The observed standard deviations are small relative to the gaps between methods and do not affect the qualitative conclusions.
    \item[] Guidelines:
    \begin{itemize}
        \item The answer \answerNA{} means that the paper does not include experiments.
        \item The authors should answer \answerYes{} if the results are accompanied by error bars, confidence intervals, or statistical significance tests, at least for the experiments that support the main claims of the paper.
        \item The factors of variability that the error bars are capturing should be clearly stated (for example, train/test split, initialization, random drawing of some parameter, or overall run with given experimental conditions).
        \item The method for calculating the error bars should be explained (closed form formula, call to a library function, bootstrap, etc.)
        \item The assumptions made should be given (e.g., Normally distributed errors).
        \item It should be clear whether the error bar is the standard deviation or the standard error of the mean.
        \item It is OK to report 1-sigma error bars, but one should state it. The authors should preferably report a 2-sigma error bar than state that they have a 96\% CI, if the hypothesis of Normality of errors is not verified.
        \item For asymmetric distributions, the authors should be careful not to show in tables or figures symmetric error bars that would yield results that are out of range (e.g., negative error rates).
        \item If error bars are reported in tables or plots, the authors should explain in the text how they were calculated and reference the corresponding figures or tables in the text.
    \end{itemize}

\item {\bf Experiments compute resources}
    \item[] Question: For each experiment, does the paper provide sufficient information on the computer resources (type of compute workers, memory, time of execution) needed to reproduce the experiments?
    \item[] Answer: \answerYes{} 
    \item[] Guidelines:
    \begin{itemize}
        \item The answer \answerNA{} means that the paper does not include experiments.
        \item The paper should indicate the type of compute workers CPU or GPU, internal cluster, or cloud provider, including relevant memory and storage.
        \item The paper should provide the amount of compute required for each of the individual experimental runs as well as estimate the total compute. 
        \item The paper should disclose whether the full research project required more compute than the experiments reported in the paper (e.g., preliminary or failed experiments that didn't make it into the paper). 
    \end{itemize}
    
\item {\bf Code of ethics}
    \item[] Question: Does the research conducted in the paper conform, in every respect, with the NeurIPS Code of Ethics \url{https://neurips.cc/public/EthicsGuidelines}?
    \item[] Answer: \answerYes{} 
    \item[] Guidelines:
    \begin{itemize}
        \item The answer \answerNA{} means that the authors have not reviewed the NeurIPS Code of Ethics.
        \item If the authors answer \answerNo, they should explain the special circumstances that require a deviation from the Code of Ethics.
        \item The authors should make sure to preserve anonymity (e.g., if there is a special consideration due to laws or regulations in their jurisdiction).
    \end{itemize}

\item {\bf Broader impacts}
    \item[] Question: Does the paper discuss both potential positive societal impacts and negative societal impacts of the work performed?
    \item[] Answer: \answerNA{}
    \item[] Justification: As a machine learning research paper, the work is primarily technical in nature and does not directly raise specific societal impact concerns. Therefore, we consider that it does not have notable positive or negative societal impacts to discuss.
    \item[] Guidelines:
    \begin{itemize}
        \item The answer \answerNA{} means that there is no societal impact of the work performed.
        \item If the authors answer \answerNA{} or \answerNo, they should explain why their work has no societal impact or why the paper does not address societal impact.
        \item Examples of negative societal impacts include potential malicious or unintended uses (e.g., disinformation, generating fake profiles, surveillance), fairness considerations (e.g., deployment of technologies that could make decisions that unfairly impact specific groups), privacy considerations, and security considerations.
        \item The conference expects that many papers will be foundational research and not tied to particular applications, let alone deployments. However, if there is a direct path to any negative applications, the authors should point it out. For example, it is legitimate to point out that an improvement in the quality of generative models could be used to generate Deepfakes for disinformation. On the other hand, it is not needed to point out that a generic algorithm for optimizing neural networks could enable people to train models that generate Deepfakes faster.
        \item The authors should consider possible harms that could arise when the technology is being used as intended and functioning correctly, harms that could arise when the technology is being used as intended but gives incorrect results, and harms following from (intentional or unintentional) misuse of the technology.
        \item If there are negative societal impacts, the authors could also discuss possible mitigation strategies (e.g., gated release of models, providing defenses in addition to attacks, mechanisms for monitoring misuse, mechanisms to monitor how a system learns from feedback over time, improving the efficiency and accessibility of ML).
    \end{itemize}
    
\item {\bf Safeguards}
    \item[] Question: Does the paper describe safeguards that have been put in place for responsible release of data or models that have a high risk for misuse (e.g., pre-trained language models, image generators, or scraped datasets)?
    \item[] Answer: \answerNA{} 
    \item[] Justification: the paper poses no such risks.
    \item[] Guidelines:
    \begin{itemize}
        \item The answer \answerNA{} means that the paper poses no such risks.
        \item Released models that have a high risk for misuse or dual-use should be released with necessary safeguards to allow for controlled use of the model, for example by requiring that users adhere to usage guidelines or restrictions to access the model or implementing safety filters. 
        \item Datasets that have been scraped from the Internet could pose safety risks. The authors should describe how they avoided releasing unsafe images.
        \item We recognize that providing effective safeguards is challenging, and many papers do not require this, but we encourage authors to take this into account and make a best faith effort.
    \end{itemize}

\item {\bf Licenses for existing assets}
    \item[] Question: Are the creators or original owners of assets (e.g., code, data, models), used in the paper, properly credited and are the license and terms of use explicitly mentioned and properly respected?
    \item[] Answer: \answerYes{} 
    \item[] Guidelines:
    \begin{itemize}
        \item The answer \answerNA{} means that the paper does not use existing assets.
        \item The authors should cite the original paper that produced the code package or dataset.
        \item The authors should state which version of the asset is used and, if possible, include a URL.
        \item The name of the license (e.g., CC-BY 4.0) should be included for each asset.
        \item For scraped data from a particular source (e.g., website), the copyright and terms of service of that source should be provided.
        \item If assets are released, the license, copyright information, and terms of use in the package should be provided. For popular datasets, \url{paperswithcode.com/datasets} has curated licenses for some datasets. Their licensing guide can help determine the license of a dataset.
        \item For existing datasets that are re-packaged, both the original license and the license of the derived asset (if it has changed) should be provided.
        \item If this information is not available online, the authors are encouraged to reach out to the asset's creators.
    \end{itemize}

\item {\bf New assets}
    \item[] Question: Are new assets introduced in the paper well documented and is the documentation provided alongside the assets?
    \item[] Answer: \answerYes{} 
    \item[] Guidelines:
    \begin{itemize}
        \item The answer \answerNA{} means that the paper does not release new assets.
        \item Researchers should communicate the details of the dataset\slash code\slash model as part of their submissions via structured templates. This includes details about training, license, limitations, etc. 
        \item The paper should discuss whether and how consent was obtained from people whose asset is used.
        \item At submission time, remember to anonymize your assets (if applicable). You can either create an anonymized URL or include an anonymized zip file.
    \end{itemize}

\item {\bf Crowdsourcing and research with human subjects}
    \item[] Question: For crowdsourcing experiments and research with human subjects, does the paper include the full text of instructions given to participants and screenshots, if applicable, as well as details about compensation (if any)? 
    \item[] Answer: \answerNA{} 
    \item[] Justification: the paper does not involve crowdsourcing nor research with human subjects
    \item[] Guidelines:
    \begin{itemize}
        \item The answer \answerNA{} means that the paper does not involve crowdsourcing nor research with human subjects.
        \item Including this information in the supplemental material is fine, but if the main contribution of the paper involves human subjects, then as much detail as possible should be included in the main paper. 
        \item According to the NeurIPS Code of Ethics, workers involved in data collection, curation, or other labor should be paid at least the minimum wage in the country of the data collector. 
    \end{itemize}

\item {\bf Institutional review board (IRB) approvals or equivalent for research with human subjects}
    \item[] Question: Does the paper describe potential risks incurred by study participants, whether such risks were disclosed to the subjects, and whether Institutional Review Board (IRB) approvals (or an equivalent approval/review based on the requirements of your country or institution) were obtained?
    \item[] Answer: \answerNA{} 
    \item[] Justification: the paper does not involve crowdsourcing nor research with human subjects
    \item[] Guidelines:
    \begin{itemize}
        \item The answer \answerNA{} means that the paper does not involve crowdsourcing nor research with human subjects.
        \item Depending on the country in which research is conducted, IRB approval (or equivalent) may be required for any human subjects research. If you obtained IRB approval, you should clearly state this in the paper. 
        \item We recognize that the procedures for this may vary significantly between institutions and locations, and we expect authors to adhere to the NeurIPS Code of Ethics and the guidelines for their institution. 
        \item For initial submissions, do not include any information that would break anonymity (if applicable), such as the institution conducting the review.
    \end{itemize}

\item {\bf Declaration of LLM usage}
    \item[] Question: Does the paper describe the usage of LLMs if it is an important, original, or non-standard component of the core methods in this research? Note that if the LLM is used only for writing, editing, or formatting purposes and does \emph{not} impact the core methodology, scientific rigor, or originality of the research, declaration is not required.
    \item[] Answer: \answerYes{} 
    \item[] Guidelines:
    \begin{itemize}
        \item The answer \answerNA{} means that the core method development in this research does not involve LLMs as any important, original, or non-standard components.
        \item Please refer to our LLM policy in the NeurIPS handbook for what should or should not be described.
    \end{itemize}

\end{enumerate}

\end{document}